\def\eqref#1{equation~\ref{#1}}
\def\1{\bm{1}}
\def\vx{{\bm{x}}}
\def\vy{{\bm{y}}}
\def\vz{{\bm{z}}}
\DeclareMathAlphabet{\mathsfit}{\encodingdefault}{\sfdefault}{m}{sl}
\SetMathAlphabet{\mathsfit}{bold}{\encodingdefault}{\sfdefault}{bx}{n}
\newcommand{\R}{\mathbb{R}}
\newcommand{\sigmoid}{\sigma}
\newcites{AR}{Additional References}
\newif\ifarxiv
\newtheorem{theorem}{Theorem}[section]
\newtheorem{corollary}{Corollary}[theorem]
\newtheorem{lemma}[theorem]{Lemma}
\newtheorem{assumption}[theorem]{Assumption}
\theoremstyle{remark}
\newtheorem{remark}{Remark}
\newtheorem{example}{Example}
\newcommand{\FD}[1]{\todo[inline]{\textcolor{white}{\textbf{Florent: }#1}}}
\definecolor{OliveGreen}{HTML}{3C8031}
\definecolor{BrickRed}{HTML}{B6321C}
\newcommand{\smallparagraph}[1]{\smallskip\noindent\textbf{#1}}
\newcommand{\tabitem}{~~\llap{\textbullet}~~}
\newcommand{\pushright}[1]{\ifmeasuring@#1\else\omit\hfill$\displaystyle#1$\fi\ignorespaces}
\newcommand{\pushleft}[1]{\ifmeasuring@#1\else\omit$\displaystyle#1$\hfill\fi\ignorespaces}
\newcommand{\tuple}[1]{\ensuremath{\left\langle #1 \right\rangle}}
\newcommand{\indexof}[2]{\ensuremath{#1_{\scriptscriptstyle #2}}}
\newcommand{\N}{\mathbb{N}}
\newcommand{\fun}[1]{\ensuremath{\mathopen{}\mathclose\bgroup\left(#1\aftergroup\egroup\right)}}
\newcommand{\condition}[1]{\ensuremath{\1_{#1}}}
\newcommand{\vect}[1]{\ensuremath{\bm{#1}}}
\newcommand{\images}[1]{\ensuremath{\mathrm{Im}\fun{#1}}}
\newcommand{\mdp}{\ensuremath{\mathcal{M}}}
\newcommand{\states}{\ensuremath{\mathcal{S}}}
\newcommand{\actions}{\ensuremath{\mathcal{A}}}
\newcommand{\probtransitions}{\ensuremath{\mathbf{P}}} 
\newcommand{\rewards}{\ensuremath{\mathcal{R}}}
\newcommand{\labels}{\ensuremath{\ell}}
\newcommand{\atomicprops}{\ensuremath{\mathbf{AP}}}
\newcommand{\sinit}{\ensuremath{s_{\mathit{I}}}}
\newcommand{\mdptuple}{\langle \states, \actions, \probtransitions, \rewards, \labels, \atomicprops, \sinit \rangle}
\newcommand{\state}{\ensuremath{s}}
\newcommand{\action}{\ensuremath{a}}
\newcommand{\reward}{\ensuremath{r}}
\newcommand{\labeling}{\ensuremath{l}}
\newcommand{\labelset}[1]{\ensuremath{\mathsf{#1}}}
\newcommand{\act}[1]{\ensuremath{\mathit{Act}\ifthenelse{\equal{#1}{}}{}{(#1)}}}
\newcommand{\inftrajectories}[1]{\ensuremath{\mathit{Traj}}}
\newcommand{\seq}[2]{\ensuremath{#1_{\scriptscriptstyle 0:#2}}}
\newcommand{\trajectory}{\tau}
\newcommand{\trajectorytuple}[3]{\ensuremath{\tuple{#1_{\scriptscriptstyle 0:#3}, #2_{\scriptscriptstyle 0: #3-1}}}}
\newcommand{\trace}{\ensuremath{\hat{\trajectory}}}
\newcommand{\traces}[1]{\ensuremath{\mathit{Traces}_{#1}}}
\newcommand{\policy}{\ensuremath{\pi}}
\newcommand{\mpolicies}[1]{\ensuremath{\Pi}}
\newcommand{\valuessymbol}[2]{\ensuremath{V_{#1}^{#2}}}
\newcommand{\values}[3]{\ensuremath{\valuessymbol{#1}{#2}\fun{#3}}}
\newcommand{\episodereturn}[1]{\ensuremath{\mathbf{R}_{#1}}}
\newcommand{\stationary}[1]{\ensuremath{\xi_{#1}}}
\newcommand{\bisimulation}{\ensuremath{\mathcal{B}}}
\newcommand{\encodersymbol}{\ensuremath{Q}}
\newcommand{\encoder}{\ensuremath{\encodersymbol_\encoderparameter}}
\newcommand{\decodersymbol}{\ensuremath{P}}
\newcommand{\decoder}{\ensuremath{\decodersymbol_\decoderparameter}}
\newcommand{\encoderparameter}{\ensuremath{\iota}}
\newcommand{\decoderparameter}{\ensuremath{\theta}}
\newcommand{\generative}{\ensuremath{\mathcal{G}}}
\newcommand{\wassersteinparameter}{\ensuremath{\omega}}
\newcommand{\discount}{\ensuremath{\gamma}}
\newcommand{\eventually}{\ensuremath{\lozenge}}
\newcommand{\ltlnext}{\ensuremath{\bigcirc}}
\newcommand{\until}[2]{\ensuremath{#1 \, \mathcal{U} \, #2}}
\newcommand{\Prob}{\ensuremath{\displaystyle \mathbb{P}}}
\newcommand{\measurableset}{\ensuremath{\mathcal{X}}}
\newcommand{\varmeasurableset}{\ensuremath{\mathcal{Y}}}
\newcommand{\sampledot}{\ensuremath{{\cdotp}}}
\newcommand{\expectedsymbol}[1]{\ensuremath{\mathop{\mathbb{E}}\ifthenelse{\equal{#1}{}}{}{_{#1}}}}
\newcommand{\expected}[2]{\ensuremath{\expectedsymbol{#1} \left[ #2 \right]}}
\newcommand{\divergencesymbol}{\ensuremath{D}}
\newcommand{\distributions}[1]{\ensuremath{\Delta\fun{#1}}}
\newcommand{\logistic}[2]{\ensuremath{\mathrm{Logistic}(#1, #2)}}
\newcommand{\temperature}{\ensuremath{\lambda}}
\newcommand{\wassersteinsymbol}[1]{\ensuremath{W}_{#1}}
\newcommand{\wassersteindist}[3]{\ensuremath{\wassersteinsymbol{#1}\left( #2, #3 \right)}}
\newcommand{\distance}{\ensuremath{d}}
\newcommand{\logic}{\ensuremath{\mathcal{L}}}
\newcommand{\tightoverset}[2]{%
  \tikz[baseline=(X.base),inner sep=0pt,outer sep=0pt]{%
    \node[inner sep=0pt,outer sep=0pt] (X) {$#2$}; 
    \node[yshift=1.2pt] at (X.north) {$#1$};
}}
\newcommand{\bidistance}{\ensuremath{\tightoverset{\,\scriptstyle\sim}{\distance}}}
\newcommand{\functionalexpr}{\ensuremath{\mathcal{F}}}
\newcommand{\couplings}[2]{\ensuremath{\Lambda(#1, #2)}}
\newcommand{\coupling}{\ensuremath{\lambda}}
\newcommand{\Lipschf}[1]{\ensuremath{\mathcal{F}_{#1}}}
\newcommand{\tracedistance}{\transitiondistance}
\newcommand{\transitiondistance}{\ensuremath{\ensuremath{\vec{\distance}}}}
\newcommand{\waemdp}{W$^2\!$AE-MDP$\,$}
\newcommand{\waemdps}{W$^2\!$AE-MDPs$\,$}
\newcommand{\ncritic}{\ensuremath{m}}
\newcommand{\overbar}[1]{\mkern 1.5mu\overline{\mkern-1.5mu#1\mkern-1.5mu}\mkern 1.5mu}
\newcommand{\overbarit}[1]{\,\overline{\!{#1}}}
\newcommand{\embed}{\ensuremath{\phi}}
\newcommand{\embeda}{\ensuremath{\psi}}
\newcommand{\actionencoder}{\ensuremath{\embed_{\encoderparameter}^{\scriptscriptstyle\actions}}}
\newcommand{\transitionencoder}{\ensuremath{q}}
\newcommand{\latentmdp}{\ensuremath{\overbarit{\mdp}}}
\newcommand{\latentprobtransitions}{\ensuremath{\overbar{\probtransitions}}}
\newcommand{\latentstates}{\ensuremath{\overbarit{\mathcal{\states}}}}
\newcommand{\latentrewards}{\ensuremath{\overbarit{\rewards}}}
\newcommand{\latentlabels}{\ensuremath{\overbarit{\labels}}}
\newcommand{\latentmdptuple}{\ensuremath{\tuple{\latentstates, \latentactions, \latentprobtransitions, \latentrewards, \latentlabels, \atomicprops, \zinit}}}
\newcommand{\latentstate}{\ensuremath{\overbarit{\state}}}
\newcommand{\zinit}{\ensuremath{\latentstate_I}}
\newcommand{\latentactions}{\ensuremath{\overbarit{\actions}}}
\newcommand{\latentaction}{\ensuremath{\overbarit{\action}}}
\newcommand{\latentvaluessymbol}[2]{\overbarit{\ensuremath{V}}_{#1}^{#2}}
\newcommand{\latentvalues}[3]{\ensuremath{\latentvaluessymbol{#1}{#2}\fun{#3}}}
\newcommand{\latentpolicy}{\ensuremath{\overbar{\policy}}}
\newcommand{\latentpolicies}{\ensuremath{\overbar{\Pi}}}
\newcommand{\latentmpolicies}{\ensuremath{\overbar{\Pi}}}
\newcommand{\localtransitionloss}[1]{L_{\probtransitions}^{#1}}
\newcommand{\localrewardloss}[1]{L_{\rewards}^{#1}}
\newcommand{\KR}[1]{\ensuremath{\ifthenelse{\equal{#1}{}}{K_{\latentrewards}}{K_{\latentrewards}^{#1}}}}
\newcommand{\KP}[1]{\ensuremath{\ifthenelse{\equal{#1}{}}{K_{\latentprobtransitions}}{K_{\latentprobtransitions}^{#1}}}}
\newcommand{\Rmax}[1]{\ensuremath{\ifthenelse{\equal{#1}{}}{|\latentrewards^\star|}{|\latentrewards_{#1}^{\star}|}}}
\newcommand{\stationarydecoder}{{\stationary{\decoderparameter}}}
\newcommand{\latentstationaryprior}{{\bar{\xi}_{\latentpolicy_{\decoderparameter}}}}
\newcommand{\norm}[1]{\ensuremath{\left\| #1 \right\|}}
\newcommand{\gradient}{\ensuremath{\nabla}}
\newcommand{\latentvariable}{\ensuremath{z}}
\newcommand{\originaltolatentstationary}[1]{\ensuremath{\mathcal{T}}}
\newcommand{\steadystateregularizer}[1]{\ensuremath{\mathcal{W}_{\stationary{#1}}}}
\newcommand{\steadystatenetwork}{\ensuremath{\varphi_{\wassersteinparameter}^{\stationary{}}}}
\newcommand{\transitionlossnetwork}{\ensuremath{\varphi_{\wassersteinparameter}^{\probtransitions}}}
\renewcommand{\nomgroup}[1]{%
\ifthenelse{\equal{#1}{M}}{%
      \item[\textbf{Markov Decision Processes}]}{%
        \ifthenelse{\equal{#1}{K}}{%
        \item[\textbf{P}]}{%
          \ifthenelse{\equal{#1}{P}}{%
          \item[\textbf{Probability / Measure Theory}]}{%
            \ifthenelse{\equal{#1}{L}}{%
            \item[\textbf{Latent Space Model}]}{%
              {%
              \ifthenelse{\equal{#1}{W}}{%
            \item[\textbf{Wasserstein Auto-encoded MDP}]}{%
              }}}}}}}
\title{Wasserstein Auto-encoded MDPs\\\large Formal Verification of Efficiently Distilled RL Policies with Many-sided Guarantees}
\title{Wasserstein Auto-encoded MDPs\\\small Formal Verification of Efficiently Distilled RL Policies with Many-sided Guarantees}
\author{%
    Florent Delgrange \\
    AI Lab, Vrije Universiteit Brussel (VUB) \\ University of Antwerp\\
    \texttt{florent.delgrange@ai.vub.ac.be}
    \And
    Ann Now\'e \\
    AI Lab, VUB
    \And
    Guillermo A. P\'erez \\
    University of Antwerp \\ Flanders Make
}
\begin{document}
\maketitle
\begin{abstract}
Although deep reinforcement learning (DRL) has many success stories, the large-scale deployment of policies learned through these advanced techniques in safety-critical scenarios is hindered by their lack of formal guarantees.
Variational Markov Decision Processes (VAE-MDPs) are discrete latent space models that provide a reliable framework for distilling formally verifiable controllers from any RL policy.
While the related guarantees address relevant practical aspects such as the satisfaction of performance and safety properties, the VAE approach suffers from several learning flaws (posterior collapse, slow learning speed, poor dynamics estimates), primarily due to the absence of abstraction and representation guarantees to support latent optimization.
We introduce the Wasserstein auto-encoded MDP (WAE-MDP), a latent space model that fixes those issues by minimizing a penalized form of the optimal transport between the behaviors of the agent executing the original policy and the distilled policy, for which the formal guarantees apply.
Our approach yields bisimulation guarantees while learning the distilled policy, allowing concrete optimization of the abstraction and representation model quality.
Our experiments show that, besides distilling policies up to 10 times faster, the latent model quality is indeed better in general.
Moreover, we present experiments from a simple time-to-failure verification algorithm on the latent space. The fact that our approach enables such simple verification techniques highlights its applicability.
\end{abstract}
\section{Introduction}
\emph{Reinforcement learning} (RL) is emerging as a solution of choice to address challenging real-word scenarios such as epidemic mitigation and prevention strategies 
\citep{DBLP:conf/pkdd/LibinMVPHLN20},
multi-energy management \citep{CEUSTERS2021117634}, or effective canal control  \citep{REN2021103049}.
RL enables learning high performance controllers by introducing general nonlinear function approximators (such as neural networks) to scale with high-dimensional and continuous state-action spaces.
This introduction, termed \emph{deep-RL}, causes the loss of the conventional convergence guarantees of RL \citep{DBLP:journals/ml/Tsitsiklis94} as well as those obtained in some continuous settings \citep{phdthesis:Nowe94}, and hinders their wide roll-out in critical settings.
This work \emph{enables} the \emph{formal verification} of \emph{any} such policies, learned by agents interacting with {unknown}, {continuous} environments modeled as \emph{Markov decision processes} (MDPs). 
Specifically, we learn a \emph{discrete} representation of the state-action space of the MDP, which yield both a (smaller, explicit) \emph{latent space model} and a distilled version of the RL policy, that are tractable for \emph{model checking} \citep{DBLP:BK08}.
The latter are supported by \emph{bisimulation guarantees}: intuitively, the agent behaves similarly in the original and latent models.
The strength of our approach is not simply that we verify that the RL agent meets a \emph{predefined} set of specifications, but rather provide an abstract model on which the user can reason and check \emph{any} desired agent property.

\emph{Variational MDPs} (VAE-MDPs, \citealt{DBLP:journals/corr/abs-2112-09655}) offer a valuable framework for doing so.
The distillation is provided with PAC-verifiable {bisimulation} bounds guaranteeing
that the agent behaves similarly
(i) in the original and latent model (\emph{abstraction quality});
(ii) from all original states embedded to the same discrete state (\emph{representation quality}).
Whilst the bounds offer a confidence metric that enables the verification
of performance and safety properties,
VAE-MDPs suffer from several learning flaws.
First, training a VAE-MDP relies on variational proxies to the bisimulation bounds, meaning there is no learning guarantee on the quality of the latent model via its optimization.
Second, \emph{variational autoencoders} (VAEs) \citep{DBLP:journals/corr/KingmaW13, DBLP:journals/jmlr/HoffmanBWP13}
are known to suffer from \emph{posterior collapse} (e.g., \citealt{DBLP:conf/icml/AlemiPFDS018})
resulting in a deterministic mapping to a unique latent state in VAE-MDPs.
Most of the training process focuses on handling this phenomenon and setting up the stage for the concrete distillation and abstraction, finally taking place in a second training phase.
This requires extra regularizers, setting up annealing schemes
and learning phases,
and defining prioritized replay buffers to store transitions.
Distillation through VAE-MDPs is thus a meticulous task, requiring a large step budget and tuning many hyperparameters.

Building upon \emph{Wasserstein} autoencoders 
\citep{DBLP:conf/iclr/TolstikhinBGS18} instead of VAEs, we introduce \emph{Wasserstein auto-encoded MDPs} (WAE-MDPs), which overcome those limitations.
Our WAE relies on the \emph{optimal transport} (OT) from trace distributions resulting from the execution of the RL policy in the real environment to that reconstructed from the latent model operating under the distilled policy.
%
In contrast to VAEs which rely on variational proxies, we derive a novel objective that directly incorporate the bisimulation bounds.
Furthermore, while VAEs learn stochastic mappings to the latent space which need be determinized or even entirely reconstructed from data at the deployment time to obtain the guarantees, our WAE has no such requirements, and learn \emph{all the necessary components to obtain the guarantees during learning}, and does not require such post-processing operations.

Those theoretical claims are reflected in our experiments: policies are distilled up to $10$ times faster through WAE- than VAE-MDPs and provide better abstraction quality and performance in general, without the need for setting up annealing schemes and training phases, nor prioritized buffer and extra regularizer.
Our distilled policies are able to recover (and sometimes even outperform) the original policy performance,
highlighting the representation quality offered by our new framework: the distillation is able to remove some non-robustness of the input RL policy.
Finally, we formally verified \emph{time-to-failure} properties (e.g., \citealt{DBLP:conf/focs/Pnueli77}) to emphasize the applicability of our approach.

\smallparagraph{Other Related Work.}~%
Complementary works approach safe RL via formal methods \citep{DBLP:conf/tacas/Junges0DTK16,DBLP:conf/aaai/AlshiekhBEKNT18,jansen_et_al:LIPIcs:2020:12815,DBLP:conf/atal/SimaoJS21}, aimed at formally ensuring safety \emph{during RL}, all of which require providing an abstract model of the safety aspects of the environment.
They also include the work of \citet{DBLP:conf/fmcad/AlamdariAHL20}, applying synthesis and model checking on
policies distilled from RL,
without quality guarantees.
Other frameworks share our goal of verifying deep-RL policies \citep{DBLP:conf/formats/Bacci020, DBLP:conf/ijcai/CarrJT20} but rely on a known environment model, among other assumptions (e.g., deterministic or discrete environment).
Finally, \emph{DeepSynth} \citep{DBLP:conf/aaai/HasanbeigJAMK21} allows learning a formal model from execution traces, with the different purpose of guiding the agent towards sparse and non-Markovian rewards.

%
On the latent space training side, WWAEs \citep{DBLP:journals/corr/abs-1902-09323/zhang19} reuse OT as latent regularizer discrepancy (in Gaussian closed form), whereas we derive two regularizers involving OT.
These two are, in contrast, optimized via the dual formulation of Wasserstein, as in \emph{Wassertein-GANs} \citep{DBLP:conf/icml/ArjovskyCB17}.
Similarly to \emph{VQ-VAEs} \citep{DBLP:conf/nips/OordVK17} and \emph{Latent Bernoulli AEs} \citep{DBLP:conf/icml/FajtlAMR20}, our latent space model learns discrete spaces via deterministic encoders, but relies on a smooth approximation instead of using the straight-through gradient estimator.

Works on \emph{representation learning} for RL \citep{DBLP:conf/icml/GeladaKBNB19,DBLP:conf/nips/CastroKPR21,DBLP:conf/iclr/0001MCGL21,DBLP:journals/corr/abs-2112-15303} consider bisimulation metrics to optimize the representation quality,
and aim at learning (continuous) representations which capture bisimulation, so that two states close in the representation are guaranteed to provide close and relevant information to optimize the performance of the controller.
In particular, as in our work, \emph{DeepMDPs} \citep{DBLP:conf/icml/GeladaKBNB19} are learned by optimizing \emph{local losses}, by assuming a deterministic MDP and without verifiable confidence measurement.

\section{Background}\label{sec:background}
%
In the following, we write $\distributions{\measurableset}$ for the set of measures over (complete, separable metric space) $\measurableset$.
\ifarxiv
The index of all the notations introduced along the paper is available at the end of the Appendix.
\fi

\textbf{Markov decision processes}~%
(MDPs) are tuples $\mdp = \mdptuple$ where
$\states$ is a set of \emph{states}; $\actions$, a set of \emph{actions}; $\probtransitions \colon \states \times \actions \to \distributions{\states}$, a \emph{probability transition function} that maps the current state and action to a \emph{distribution} over the next states; $\rewards \colon \states \times \actions \to \R$, a \emph{reward function}; $\labels \colon \states \to 2^\atomicprops$, a \emph{labeling function} over a set of atomic propositions $\atomicprops$; and $\sinit \in \states$, the \emph{initial state}.
If $|\actions| = 1$, $\mdp$ is a fully stochastic process called a \emph{Markov chain} (MC).
We write $\mdp_\state$ for the MDP obtained when replacing the initial state of $\mdp$ by $\state \in \states$.
An agent interacting in $\mdp$ produces \emph{trajectories}, i.e., sequences of states and actions $\trajectory = \trajectorytuple{\state}{\action}{T}$
where $\state_0 = \sinit$ and $\state_{t + 1} \sim \probtransitions\fun{\sampledot \mid \state_t, \action_t}$ for $t < T$.
The set of infinite trajectories of $\mdp$ is \inftrajectories{\mdp}.
%
We assume $\atomicprops$ and labels being respectively {one-hot} and binary encoded.
Given $\labelset{T} \subseteq \atomicprops$, we write $\state \models \labelset{T}$ if $\state$ is labeled with $\labelset{T}$, i.e., $\labels\fun{\state}\cap \labelset{T} \neq \emptyset$, and $\state \models \neg \labelset{T}$ for $\state \not\models \labelset{T}$.
We refer to MDPs with continuous state or action spaces as \emph{continuous MDPs}. 
In that case, we assume $\states$ and $\actions$ are complete separable metric spaces equipped with a Borel $\sigma$-algebra,
and $\labels^{-1}\fun{\labelset{T}}$ is Borel-measurable for any $\labelset{T} \subseteq \atomicprops$.

\smallparagraph{Policies and stationary distributions.}~A \emph{(memoryless) policy} $\policy \colon \states \to \distributions{\actions}$ 
prescribes which action to choose at each step of the interaction.
The set of memoryless policies of $\mdp$ is $\mpolicies{\mdp}$.
The MDP $\mdp$ and $\policy \in \mpolicies{\mdp}$
induce an MC $\mdp_\policy$
with unique probability measure $\Prob^\mdp_\policy$ on the Borel $\sigma$-algebra over measurable subsets $\varphi \subseteq \inftrajectories{\mdp}$~\citep{DBLP:books/wi/Puterman94}.
We drop the superscript when the context is clear.
%
Define $\stationary{\policy}^{t}\fun{\state' \mid \state} =  \Prob^{\mdp_\state}_\policy(\{{\seq{\state}{\infty}, \seq{\action}{\infty}}
\mid \state_t = \state'\})$ as the distribution giving the probability of being in each state of $\mdp_\state$ after $t$ steps.
$B\subseteq \states$ is a \emph{bottom strongly connected component} (BSCC) of $\mdp_\policy$ if (i) $B$ is a maximal subset satisfying $\stationary{\policy}^t\fun{\state'\mid \state} > 0$  for any  $\state, \state' \in B$ and some $t \geq 0$, and (ii)  
$\expectedsymbol{\action \sim \policy\fun{\sampledot \mid \state}} \probtransitions\fun{B \mid \state, \action}=1$ for all $\state \in \states$.
The unique stationary distribution of $B$ is
$\stationary{\policy} \in \distributions{B}$.
We write $\state, \action \sim \stationary{\policy}$ for sampling $\state$ from $\stationary{\policy}$ then $\action$ from $\policy$.
An MDP $\mdp$ is \emph{ergodic} if for all $\policy \in \mpolicies{\mdp}$, the state space of $\mdp_\policy$ consists of a unique aperiodic BSCC %
with $\stationary{\policy} = \lim_{t \to \infty} \stationary{\policy}^t\fun{\sampledot \mid \state}$ for all $\state \in \states$.

\smallparagraph{Value objectives.}~%
Given $\policy \in \mpolicies{\mdp}$, the \emph{value} of a state $\state \in \states$ is the expected value of a random variable obtained by running $\policy$ from $\state$.
For a discount factor $\discount \in \mathopen[0, 1\mathclose]$, we consider the following objectives.
(i) \emph{Discounted return}:~we write $\values{\policy}{}{\state} = \expectedsymbol{\policy}^{\mdp_{\state}}\left[ \sum_{t = 0}^{\infty} \discount^t \rewards\fun{\state_t, \action_t} \right]$ for the expected discounted rewards accumulated along trajectories.
	The typical goal of an RL agent is to learn a policy $\policy^{\star}$ that maximizes $\values{\policy^{\star}}{}{\sinit}$ through interactions with the (unknown) MDP; 
(ii) \emph{Reachability}:~let $\labelset{C}, \labelset{T} \subseteq \atomicprops$, the \emph{(constrained) reachability} 
	event is $\until{\labelset{C}}{\labelset{T}} =
	\{\, \seq{\state}{\infty}, \seq{\action}{\infty} \,|\, \exists i \in \N, \forall j < i, \state_j \models {\labelset{C}} \wedge \state_i \models {\labelset{T}}\, \} \subseteq \inftrajectories{\mdp}$.
	We write $\values{\policy}{\varphi}{\state} = \expectedsymbol{\policy}^{\mdp_{\state}}\left[ \discount^{t^{\star}} \condition{\tuple{\seq{\state}{\infty}, \seq{\action}{\infty}} \, \in \, \varphi} \right]$ for the \emph{discounted probability of satisfying} $\varphi = \until{\labelset{C}}{\labelset{T}}$,
	where $t^{\star}$ is the length of the shortest trajectory prefix that allows satisfying $\varphi$.
	Intuitively, 
	this denotes the discounted return of remaining in a region of the MDP where states are labeled with $\labelset{C}$, until visiting \emph{for the first time} a \emph{goal state} labeled with $\labelset{T}$, and the return is the binary reward signal capturing this event.
	\emph{Safety} w.r.t. failure states $\labelset{C}$ can be expressed as the safety-constrained reachability to a destination $\labelset{T}$ through $\until{\neg \labelset{C}}{\labelset{T}}$.
	Notice that $\values{\policy}{\varphi}{\state} = \Prob_{\policy}^{\mdp_{\state}}\fun{\varphi}$ when $\discount = 1$.

\smallparagraph{Latent MDP.}~Given the original (continuous, possibly unknown) environment model $\mdp$, a \emph{latent space model} is another (smaller, explicit) MDP $\latentmdp = \latentmdptuple$ with state-action space linked to the original one via state and action \emph{embedding functions}:  $\embed\colon \states \to \latentstates$ and $\embeda\colon \latentstates \times \latentactions \to \actions$.
We refer to $\tuple{\latentmdp, \embed, \embeda}$ as a \emph{latent space model} of $\mdp$ and $\latentmdp$ as its \emph{latent MDP}.
Our goal is to learn $\tuple{\latentmdp, \embed, \embeda}$ by optimizing an \emph{equivalence criterion} between the two models.
We assume that $\distance_{\latentstates}$ is a metric on $\latentstates$, and
write  $\latentmpolicies$ for the set of policies of $\latentmdp$
and ${\latentvaluessymbol{\latentpolicy}{}}$ for the values of running $\latentpolicy \in \latentmpolicies$ in $\latentmdp$. 
\begin{remark}[Latent flow] \label{rmk:latent-policy-execution}
The latent policy $\latentpolicy$ can be seen as a policy in $\mdp$ (cf. Fig.~\ref{subfig:latent-flow-guarantees}): states passed to $\latentpolicy$ are first embedded with $\embed$ to the latent space, then the actions produced by $\latentpolicy$ are executed via $\embeda$ in the original environment.
Let $\state \in \states$, we write 
$\latentaction \sim \latentpolicy\fun{\sampledot \mid \state}$ for 
$\latentpolicy\fun{\sampledot \mid \embed\fun{\state}}$, then
the reward and next state are respectively given by $\rewards\fun{\state, \latentaction} = \rewards\fun{\state, \embeda\fun{\embed\fun{\state}, \latentaction}}$ and $ \state' \sim \probtransitions\fun{\sampledot \mid \state, \latentaction} = \probtransitions\fun{\sampledot \mid \state, \embeda\fun{\embed\fun{\state}, \latentaction}}$.
\end{remark}

\begin{figure}
\begin{subfigure}{.375\textwidth}
    \centering
    \includegraphics[width=.95\textwidth]{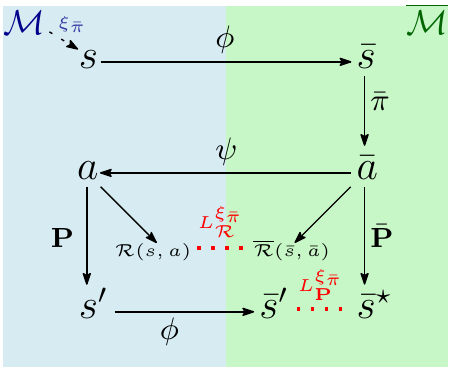}
    \caption{Execution of the latent policy $\latentpolicy$ in the original and latent MDPs, and local losses.}
    \label{subfig:latent-flow-guarantees}
\end{subfigure}
\hspace{.025\textwidth}
\begin{subfigure}{.575\textwidth}
    \centering
    \includegraphics[width=.925\textwidth]{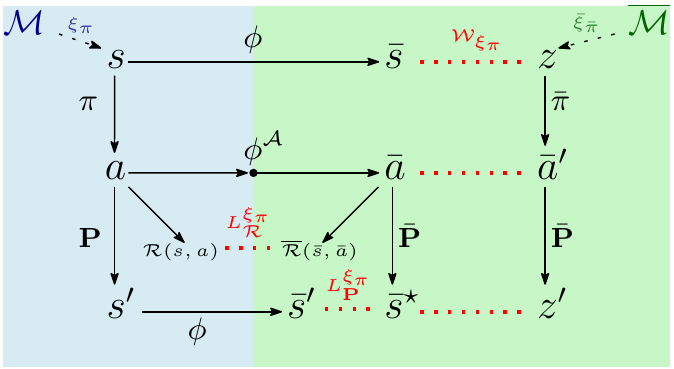}
    \caption{Parallel execution of the original RL policy $\policy$ in the original and latent MDPs, local losses, and steady-state regularizer.}
    \label{subfig:latent-fow-distillation}
\end{subfigure}
\caption{Latent flows: arrows represent (stochastic) mappings, the original (resp. latent) state-action space is spread along the blue (resp. green) area, and distances are depicted in red.
Distilling $\policy$ into $\latentpolicy$ via flow~(b) by minimizing 
$\steadystateregularizer{\policy}$ 
allows closing the gap between flows~(a) and~(b).}
\label{fig:latent-flow}
\end{figure}

\textbf{Local losses} allow quantifying the distance between the original and latent reward/transition functions \emph{in the local setting}, i.e., under a given state-action distribution $\stationary{} \in \distributions{\states \times \latentactions}$:
\begin{align*}
    \localrewardloss{\stationary{}} &= \expectedsymbol{\state,  \latentaction \sim \stationary{}}\left| \rewards\fun{\state, \latentaction} - \latentrewards\fun{\embed\fun{\state}, \latentaction} \right|, 
    &
    \localtransitionloss{\stationary{}} &= \expectedsymbol{\state, \latentaction \sim \stationary{}} 
    D\fun{\embed\probtransitions\fun{\sampledot \mid \state, \latentaction}, \latentprobtransitions\fun{\sampledot \mid \embed\fun{\state}, \latentaction}}
\end{align*}
where $\embed\probtransitions\fun{\sampledot \mid \state, \latentaction}$ is
the distribution 
of drawing
$\state' \sim \probtransitions\fun{\sampledot \mid \state, \latentaction}$ then
embedding $\latentstate' = \embed\fun{\state'}$, and $D$ is a discrepancy measure. %
Fig~\ref{subfig:latent-flow-guarantees} depicts the losses when states and actions are drawn from a stationary distribution $\stationary{\latentpolicy}$
resulting from running $\latentpolicy \in \latentmpolicies$ in $\mdp$.
In this work, we focus on the case where $D$ is the \emph{Wasserstein distance} $\wassersteinsymbol{\distance_{\latentstates}}$:
given two distributions $P, Q$ over a measurable set $\measurableset$ 
equipped with a metric $\distance$,
$\wassersteinsymbol{\distance}$ is the solution of the \emph{optimal transport} (OT) from $P$ to $Q$, i.e.,
the minimum cost of changing $P$ into $Q$ \citep{Villani2009}: 
$
\wassersteindist{\distance}{P}{Q} = \inf_{\coupling \in \couplings{P}{Q}} \expectedsymbol{x, y \sim \coupling} \distance\fun{x, y},
$
$\couplings{P}{Q}$ being the set of all \emph{couplings} of $P$ and $Q$. 
The \emph{Kantorovich duality} yields
$
    \wassersteindist{\distance}{P}{Q} = \sup_{f \in \Lipschf{\distance}} \expectedsymbol{x \sim P} f\fun{x}  - \expectedsymbol{x \sim Q} f\fun{y}
    \label{eq:wasserstein-dual}
$
where $\Lipschf{\distance}$ is the set of 1-Lipschiz functions.
Local losses are related to a well-established \emph{behavioral} equivalence between transition systems, called \emph{bisimulation}.

\smallparagraph{Bisimulation.}
A \emph{bisimulation} $\bisimulation$ 
on $\mdp$ is a behavioral equivalence between states $\state_1, \state_2 \in \states$ 
so that, $\state_1 \, \bisimulation \, \state_2$ iff
(i) $\probtransitions(T \mid \state_1, \action) = \probtransitions(T \mid \state_2, \action)$,
(ii) $\labels(\state_1) = \labels(\state_2)$,
and (iii) $\rewards(\state_1, \action) = \rewards(\state_2, \action)$
for each action $\action \in \actions$ and (Borel measurable) equivalence class $T \in \states / \bisimulation$.
Properties of bisimulation include trajectory and value equivalence \citep{DBLP:conf/popl/LarsenS89,DBLP:journals/ai/GivanDG03}.
Requirements (ii) and (iii) can be respectively relaxed depending on whether we focus only on behaviors formalized through $\atomicprops$ or rewards.
The relation can be extended to compare two MDPs (e.g., $\mdp$ and $\latentmdp$) by considering the disjoint union of their state space.
We denote the largest bisimulation relation by $\sim$.

Characterized by a logical family of functional expressions
derived from a logic $\logic$,
\emph{bisimulation pseudometrics}~\citep{DBLP:journals/tcs/DesharnaisGJP04} generalize the notion of bisimilariy. 
More specifically, given a policy $\policy \in \mpolicies{\mdp}$, we consider a family $\functionalexpr$ of real-valued functions parameterized by a discount factor $\discount$ and defining the semantics of $\logic$ in $\mdp_\policy$.
Such functional expressions allow to formalize discounted properties such as reachability, safety, as well as general $\omega$-regular specifications
\citep{DBLP:conf/fsttcs/ChatterjeeAMR08}
and may include rewards as well \citep{DBLP:conf/birthday/FernsPK14}.
The pseudometric $\bidistance_{\policy} $
is defined as \emph{the largest behavioral difference} $\bidistance_\policy\fun{\state_1, \state_2} = \sup_{f \in \functionalexpr} \left| f\fun{\state_1} - f\fun{\state_2} \right|$,
and \emph{its kernel is bisimilarity}: $\bidistance_{\policy}\fun{\state_1, \state_2} = 0$ iff $\state_1 \sim \state_2$.
In particular, \emph{value functions are Lipschitz-continuous w.r.t.~$\bidistance_{\policy}$}:
$| \values{\policy}{\scalebox{.95}{$\cdot$}}{\state_1} - \values{\policy}{\scalebox{.95}{$\cdot$}}{\state_2}| \leq 
K \bidistance_{\policy}\fun{\state_1, \state_2}$,
where $K$ is $\nicefrac{1}{\fun{1 - \discount}}$ if rewards are included in $\functionalexpr$ and $1$ otherwise.
To ensure the upcoming bisimulation guarantees, we make the following assumptions:
\begin{assumption}\label{assumption:vae-mdp}
MDP $\mdp$ is ergodic, $\images{\rewards}$ is a bounded space scaled in $\left[\nicefrac{-1}{2}, \nicefrac{1}{2} \right]$, and
the embedding function preserves the labels, i.e., $\embed\fun{\state} = \latentstate \implies \labels\fun{\state} = \latentlabels\fun{\latentstate}$ for $\state \in \states$, $\latentstate \in \latentstates$. 
\end{assumption}
Note that the ergodicity assumption is compliant with episodic RL and a wide range of continuous learning tasks (see \citealt{DBLP:conf/nips/Huang20,DBLP:journals/corr/abs-2112-09655} for detailed discussions on this setting).

\smallparagraph{Bisimulation bounds \citep{DBLP:journals/corr/abs-2112-09655}.}~%
$\mdp$ being set over continuous spaces with possibly unknown dynamics, evaluating $\bidistance$ can turn out to be particularly arduous, if not intractable.
A solution is to evaluate the original and latent model bisimilarity via local losses:
fix
$\latentpolicy \in \latentmpolicies$, assume $\latentmdp$ is discrete, then given the induced stationary distribution $\stationary{\latentpolicy}$ in $\mdp$, let $\state_1, \state_2 \in \states$ with $\embed\fun{\state_1} = \embed\fun{\state_2}$:
\begin{align}
\expectedsymbol{\state \sim \stationary{\latentpolicy}} \bidistance_{{\latentpolicy}}\fun{\state, \embed\fun{\state}} &\leq 
\frac{\localrewardloss{\stationary{\latentpolicy}} + \discount \localtransitionloss{\stationary{\latentpolicy}}}{1 - \discount}, &
\bidistance_{\latentpolicy}\fun{\state_1, \state_2} &\leq \Big( \frac{\localrewardloss{\stationary{\latentpolicy}} +\discount \localtransitionloss{\stationary{\latentpolicy}}}{1 - \discount}\Big) \fun{\stationary{\latentpolicy}^{-1}\fun{\state_1} + \stationary{\latentpolicy}^{-1}\fun{\state_2}}.
\label{eq:bidistance-bound}
\end{align}
The two inequalities guarantee respectively the \emph{quality of the abstraction} and \emph{representation}: when local losses are small, (i) states and their embedding are bisimilarly close in average, and (ii) all states sharing the same discrete representation are bisimilarly close.
The local losses and related bounds can be efficiently PAC-estimated. 
Our goal is to learn a latent model where the behaviors of the agent executing $\latentpolicy$ can be formally verified, and the bounds offer a confidence metric allowing to lift the guarantees obtained this way back to the original model $\mdp$, when the latter operates under $\latentpolicy$. We show in the following how to learn a latent space model by optimizing the aforementioned bounds, and distill policies $\policy \in \mpolicies{\mdp}$ obtained via \emph{any} RL technique to a latent policy $\latentpolicy \in \latentmpolicies$.
\section{Wasserstein Auto-encoded MDPs}
Fix $ \latentmdp_{\decoderparameter} = \tuple{\latentstates, \latentactions, \latentprobtransitions_{\decoderparameter}, \latentrewards_{\decoderparameter}, \latentlabels, \atomicprops, \zinit}$ and $\tuple{\latentmdp_{\decoderparameter}, \embed_{\encoderparameter}, \embeda_{\decoderparameter}}$ as a latent space model of $\mdp$ parameterized by $\encoderparameter$ and $\decoderparameter$.
Our method relies on learning a \emph{behavioral model}
$\stationary{\decoderparameter}$
of $\mdp$ from which we can retrieve the latent space model 
and distill $\pi$.
This can be achieved via the minimization of a suitable discrepancy between $\stationarydecoder$ and
$\mdp_\policy$.
VAE-MDPs optimize a lower bound on the likelihood of the dynamics of $\mdp_\policy$ using the \emph{Kullback-Leibler divergence},
yielding (i) $\latentmdp_{\decoderparameter}$, (ii) a distillation  $\latentpolicy_{\decoderparameter}$ of $\policy$, and (iii) $\embed_{\encoderparameter}$ and $\embeda_{\decoderparameter}$.
Local losses are not directly minimized, but rather variational proxies that do not offer theoretical guarantees during the learning process.
To control the local losses minimization and exploit their theoretical guarantees, 
we present a novel autoencoder that incorporates them in its objective, derived from the OT. 
%
Proofs of the claims made in this Section are provided in Appendix~\ref{appendix:wae-mdp}.
\subsection{The Objective Function}
Assume that $\states$, $\actions$, and $\images{\rewards}$ are respectively equipped with metrics $\distance_{\states}$, $\distance_{\actions}$, and $\distance_{\rewards}$,
we define the  \emph{raw transition distance metric} $\transitiondistance$ as the component-wise sum of distances between states, actions, and rewards occurring of along transitions:
$
    \tracedistance\fun{\tuple{\state_1, \action_1, \reward_1, \state'_1}, \tuple{\state_2, \action_2, \reward_2, \state'_2}} = \distance_\states\fun{\state_1, \state_2} + \distance_{\actions}\fun{\action_1, \action_2} + \distance_{\rewards}\fun{\reward_1, \reward_2} + \distance_{\states}\fun{\state_1', \state'_2}.\notag
$
Given Assumption~\ref{assumption:vae-mdp}, we consider the OT between \emph{local} distributions,
where traces are drawn from episodic RL processes or infinite interactions
(we show in Appendix~\ref{appendix:discrepancy-measure} that considering the OT between trace-based distributions in the limit amounts to reasoning about stationary distributions).
Our goal is to minimize
$\wassersteindist{\transitiondistance}{\stationary{\policy}}{\stationarydecoder}$ so that
\begin{equation}
    \stationarydecoder\fun{\state, \action, \reward, \state'} =  \int_{\latentstates \times \latentactions \times \latentstates} \decoder\fun{\state, \action, \reward, \state' \mid \latentstate, \latentaction, \latentstate'} \, d\latentstationaryprior\fun{\latentstate, \latentaction, \latentstate'}, \label{eq:stationary-decoder}
\end{equation}
where $\decoder$ is a transition decoder and $\latentstationaryprior$ denotes the stationary distribution of the latent model $\latentmdp_{\decoderparameter}$.
%
As proved by \citet{Bousquet2017FromOT}, this model allows to derive a simpler form of the OT: 
instead of finding the optimal coupling
of (i) the stationary distribution $\stationary{\policy}$ of $\mdp_\policy$ 
and (ii) the behavioral model $\stationary{\decoderparameter}$, in the primal definition of $\wassersteindist{\tracedistance}{\stationary{\policy}}{\stationarydecoder}$,
it is sufficient to find an encoder $\transitionencoder$ whose marginal is given by $Q\fun{\latentstate, \latentaction, \latentstate'} = \expectedsymbol{\state, \action, \state' \sim \stationary{\policy}} \transitionencoder\fun{\latentstate, \latentaction, \latentstate' \mid \state, \action, \state'}$ 
and identical to $\stationary{\policy}$. 
This is summarized in the following Theorem, yielding a particular case of \emph{Wasserstein-autoencoder} \cite{DBLP:conf/iclr/TolstikhinBGS18}:
\begin{theorem}
Let $\stationarydecoder$ and $\decoder$ be respectively a behavioral model and transition decoder as defined in Eq.~\ref{eq:stationary-decoder},
$\generative_{\decoderparameter}\colon \latentstates \to \states$ be a state-wise decoder, and 
$\embeda_{\decoderparameter}$ be an action embedding function.
Assume $\decoder$ is deterministic with Dirac function
$G_{\decoderparameter}\fun{\latentstate, \latentaction, \latentstate'} = \tuple{\generative_{\decoderparameter}\fun{\latentstate}, \embeda_{\decoderparameter}\fun{\latentstate, \latentaction}, {\latentrewards_{\decoderparameter}\fun{\latentstate, \latentaction}}, \generative_{\decoderparameter}\fun{\latentstate'}}$, then
\begin{equation*}
    \wassersteindist{\tracedistance}{\stationary{\policy}}{\stationarydecoder}
    = \inf_{\transitionencoder: \, Q = \latentstationaryprior} \, \expectedsymbol{\state, \action, \reward, \state' \sim \stationary{\policy}} \, \expectedsymbol{\latentstate, \latentaction, \latentstate' \sim \transitionencoder\fun{\sampledot \mid \state, \action, \state'}} 
    \tracedistance\fun{\tuple{\state, \action, \reward, \state'}, G_{\decoderparameter}\fun{\latentstate, \latentaction, \latentstate'}}.
\end{equation*}
\end{theorem}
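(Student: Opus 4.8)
The plan is to recognise this identity as the transition‑tuple instance of the Wasserstein auto‑encoder reformulation of optimal transport \citep{Bousquet2017FromOT,DBLP:conf/iclr/TolstikhinBGS18}, and to prove it by establishing the two inequalities separately, starting from the primal form $\wassersteindist{\tracedistance}{\stationary{\policy}}{\stationarydecoder} = \inf_{\coupling \in \couplings{\stationary{\policy}}{\stationarydecoder}} \expectedsymbol{x, y \sim \coupling} \tracedistance\fun{x, y}$, where $x = \tuple{\state, \action, \reward, \state'}$ ranges over the transition space $\states \times \actions \times \images{\rewards} \times \states$ (a complete separable metric space under Assumption~\ref{assumption:vae-mdp}) and $y$ over a copy of it. The single structural fact used throughout is that, since $\decoder\fun{\sampledot \mid \latentstate, \latentaction, \latentstate'} = \delta_{G_{\decoderparameter}\fun{\latentstate, \latentaction, \latentstate'}}$ is a Dirac, Eq.~\ref{eq:stationary-decoder} says precisely that $\stationarydecoder$ is the push‑forward of the latent stationary distribution $\latentstationaryprior$ under the deterministic map $G_{\decoderparameter}$.

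For the inequality ``$\leq$'', I would fix any admissible encoder $\transitionencoder$, i.e.\ one whose aggregated posterior $Q\fun{\latentstate, \latentaction, \latentstate'} = \expectedsymbol{\state, \action, \state' \sim \stationary{\policy}} \transitionencoder\fun{\latentstate, \latentaction, \latentstate' \mid \state, \action, \state'}$ equals $\latentstationaryprior$, and form the coupling obtained by drawing $\tuple{\state, \action, \reward, \state'} \sim \stationary{\policy}$, then $\tuple{\latentstate, \latentaction, \latentstate'} \sim \transitionencoder\fun{\sampledot \mid \state, \action, \state'}$, and returning the pair $\fun{\tuple{\state, \action, \reward, \state'},\, G_{\decoderparameter}\fun{\latentstate, \latentaction, \latentstate'}}$. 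Its first marginal is $\stationary{\policy}$ by construction, and its second marginal is the push‑forward of $Q = \latentstationaryprior$ under $G_{\decoderparameter}$, hence $\stationarydecoder$; so it lies in $\couplings{\stationary{\policy}}{\stationarydecoder}$, and its expected cost is exactly the encoder objective on the right‑hand side. Taking the infimum over couplings and then over admissible $\transitionencoder$ yields ``$\leq$''.

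For ``$\geq$'' I would start from an arbitrary $\coupling \in \couplings{\stationary{\policy}}{\stationarydecoder}$ and reconstruct an admissible encoder with identical cost via a gluing argument. Since all spaces are Polish, disintegrate $\coupling$ along its second coordinate into kernels $\coupling\fun{\sampledot \mid y}$; then draw $\tuple{\latentstate, \latentaction, \latentstate'} \sim \latentstationaryprior$, set $y = G_{\decoderparameter}\fun{\latentstate, \latentaction, \latentstate'}$ (which has law $\stationarydecoder$ by the structural fact), and draw $\tuple{\state, \action, \reward, \state'} \sim \coupling\fun{\sampledot \mid y}$. The resulting joint law of $\fun{\tuple{\state, \action, \reward, \state'}, y}$ is $\coupling$, while the marginal of the latent triple is $\latentstationaryprior$. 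Defining $\transitionencoder\fun{\sampledot \mid \state, \action, \state'}$ as the conditional law of $\tuple{\latentstate, \latentaction, \latentstate'}$ given $\tuple{\state, \action, \state'}$ in this joint (conditioning on $\fun{\state, \action, \state'}$ rather than on $\fun{\state, \action, \reward, \state'}$ is harmless because $\reward = \rewards\fun{\state, \action}$ is $\stationary{\policy}$‑a.s.\ determined) gives an admissible encoder, and since $y = G_{\decoderparameter}\fun{\latentstate, \latentaction, \latentstate'}$ almost surely, its objective equals $\expectedsymbol{x, y \sim \coupling} \tracedistance\fun{x, y}$. Taking the infimum over $\coupling$ gives $\inf_{\transitionencoder} \leq \wassersteindist{\tracedistance}{\stationary{\policy}}{\stationarydecoder}$, closing the proof.

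The delicate part is the ``$\geq$'' direction: it is the only place where determinism of $\decoder$ is essential — it is what forces $y = G_{\decoderparameter}\fun{\latentstate, \latentaction, \latentstate'}$ to hold almost surely, so that the cost computed against $y$ coincides with the cost computed against $G_{\decoderparameter}\fun{\latentstate, \latentaction, \latentstate'}$ — and the gluing step relies on the existence and measurability of the regular conditional probabilities, which I would justify by the Polish‑space assumptions already in force. A minor point to handle carefully is that the infimum over couplings need not be attained, so the reconstruction must be carried out for every $\coupling$ (equivalently, along a minimising sequence) rather than only for an optimal one.
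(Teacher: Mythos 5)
Your proof is correct, and it is essentially the argument the paper relies on: the paper does not prove this theorem itself but invokes it as an instance of the Wasserstein-autoencoder identity of Bousquet et al.\ and Tolstikhin et al., whose standard proof is exactly your two-inequality scheme (push-forward coupling induced by an admissible encoder for one direction, disintegration/gluing of an arbitrary coupling against the Dirac decoder for the other). You also correctly flag the two points specific to this instantiation — that determinism of $\decoder$ is what makes the reconstructed encoder cost coincide with the coupling cost, and that conditioning on $\tuple{\state,\action,\state'}$ rather than $\tuple{\state,\action,\reward,\state'}$ is harmless because $\reward=\rewards\fun{\state,\action}$ is deterministic — so nothing is missing.
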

%
Henceforth, fix
$\embed_\encoderparameter \colon \states \to {\latentstates}$
and $\embed_{\encoderparameter}^{\scriptscriptstyle\actions} \colon \latentstates \times \actions \to \distributions{\latentactions}$ as parameterized
state
and action encoders with
$\embed_\encoderparameter\fun{\latentstate, \latentaction, \latentstate' \mid \state, \action, \state'} = \condition{\embed_{\encoderparameter}\fun{\state}=\latentstate} \cdot \embed_{\encoderparameter}^{\scriptscriptstyle \actions}\fun{\latentaction \mid \latentstate, \action} \cdot \condition{\embed_{\encoderparameter}\fun{\state'} =\latentstate'}$%
, and define the marginal encoder as $Q_\encoderparameter = \expectedsymbol{\state, \action, \state' \sim \stationary{\policy}} \embed_{\encoderparameter}\fun{\cdot \mid \state, \action, \state'}$.
%
Training the model components can be achieved via the objective:
\begin{align*}
    \min_{\encoderparameter, \decoderparameter} \, \expectedsymbol{\state, \action, \reward, \state' \sim \stationary{\policy}} \, \expectedsymbol{\latentstate, \latentaction, \latentstate' \sim \embed_{\encoderparameter}\fun{\sampledot \mid \state, \action, \state'}}  \tracedistance\fun{\tuple{\state, \action, \reward, \state'}, G_{\decoderparameter}\fun{\latentstate, \latentaction, \latentstate'}} + \beta \cdot \divergencesymbol\fun{\encoder, \latentstationaryprior},
\end{align*}
where $\divergencesymbol$ is an arbitrary discrepancy metric and $\beta > 0$ a hyperparameter.
Intuitively, the encoder $\embed_{\encoderparameter}$ can be learned by enforcing its marginal distribution $\encoder$ to match $\latentstationaryprior$ through this discrepancy.
\begin{remark}%
If $\mdp$ has a discrete action space, then learning $\latentactions$ is not necessary. We can set $\latentactions = \actions$ using identity functions for the action encoder and decoder
(details in Appendix~\ref{appendix:discrete-action-space}).
\end{remark}
\begin{algorithm}
\caption{Wasserstein$^2$ Auto-Encoded MDP}\label{alg:wwae-mdp}
\DontPrintSemicolon
\KwIn{batch size $N$, max. step $T$, no. of regularizer updates $\ncritic$, penalty coefficient $\delta > 0$}
\SetKwComment{Comment}{$\triangleright$\ }{}
\SetCommentSty{textnormal}
\LinesNotNumbered 
\SetKwBlock{Begin}{function}{end function}
\For{$t = 1$ to $T$}{
    \For{$i = 1$ to $N$}{
        Sample a transition ${\state_i, \action_i, \reward_i, \state^{\prime}_i}$ from the original environment via $\stationary{\policy}$\;
        Embed the transition into the latent space by drawing ${\latentstate_{i}, \latentaction_{i}, \latentstate^{\prime}_{i}}$ from $\embed_{\encoderparameter}({\sampledot \mid \state_i, \action_i, \state^{\prime}_{i}})$\;
        Make the latent space model transition to the next latent state:  $\latentstate^{\star}_{i} \sim \latentprobtransitions_{\decoderparameter}({\sampledot \mid \latentstate_{i}, \latentaction_{i}})$\;
        Sample a latent transition from $\latentstationaryprior$:
        $\latentvariable_i \sim \latentstationaryprior$, $\latentaction'_i \sim \latentpolicy_{\decoderparameter}\fun{\sampledot \mid \latentvariable_i}$, and $\latentvariable_i^{\prime} \sim \latentprobtransitions_{\decoderparameter}\fun{\sampledot \mid \latentvariable_i, \latentaction^{\prime}_i}$\;
    }
    $
    	\mathcal{W} \gets \textstyle \sum_{i = 1}^{N} \steadystatenetwork({\latentstate_{i}, \latentaction_{i}, \latentstate^{\star}_{i}})
    	- \steadystatenetwork({\latentvariable_{i}, \latentaction^{\prime}_{i}, \latentvariable^{\prime}_{i}}) +
    	\transitionlossnetwork({\state_{i}, \action_{i}, \latentstate_{i}, \latentaction_{i}, \latentstate^{\prime}_{i}}) -  \transitionlossnetwork({\state_{i}, \action_{i}, \latentstate_{i}, \latentaction_{i}, \latentstate^{\star}_{i}})$\;
    	$P \gets \textstyle \sum_{i = 1}^{N} \textsc{Gp}\big({\steadystatenetwork, \tuple{\latentstate_{i}, \latentaction_{i}, \latentstate^{\star}_{i}}, \tuple{\latentvariable_{i}, \latentaction^{\prime}_{i}, \latentvariable^{\prime}_i}}\big) + \textsc{Gp}\big({\vx \mapsto \transitionlossnetwork\fun{\state_i, \action_i, \latentstate_{i}, \latentaction_{i}, \vx}, \latentstate^{\prime}_{i}, \latentstate^{\star}_{i}}\big)
    $\;
    Update the Lipschitz networks parameters $\wassersteinparameter$ by ascending $\nicefrac{1}{N} \cdot \fun{\beta  \,\mathcal{W} - \delta \, P}$\;
    \If{$t \bmod \ncritic = 0$
    }{
     $\mathcal{L} \gets \sum_{i = 1}^{N} \distance_{\states}\fun{\state_i,  \generative_{\decoderparameter}\fun{\latentstate_{i}}} + \distance_{\actions}\fun{\action_i, \embeda_{\decoderparameter}\fun{\latentstate_{i}, \latentaction_{i}}} + \distance_{\rewards}\fun{\reward_i,  \latentrewards_{\decoderparameter}\fun{\latentstate_{i}, \latentaction_{i}}} + \distance_{\states}\fun{\state^{\prime}_i, \generative_{\decoderparameter}\fun{\latentstate^{\prime}_{i}}}$\;
     Update the latent space model parameters $\tuple{\encoderparameter, \decoderparameter}$ by descending $\nicefrac{1}{N}\cdot \fun{\mathcal{L} + {\beta} \, \mathcal{W}}$\;
    }
}
\Begin($\textsc{Gp}\fun{\varphi_{\omega},\vect{x}, \vect{y}}$ \hfill $\triangleright\ $ \textbf{Gradient penalty} for $\varphi_{\wassersteinparameter} \colon \R^n \to \R$ and $\vect{x}, \vect{y} \in \R^n$){
\ifarxiv
$\epsilon \sim \mathit{U}\fun{0, 1}$ \Comment*[r]{random noise}
$\tilde{\vect{x}} \gets \epsilon \vect{x} + (1 - \epsilon) \vect{y}$ \Comment*[r]{straight lines between $\vect{x}$ and $\vect{y}$}
\else
$\epsilon \sim \mathit{U}\fun{0, 1}$; $\tilde{\vect{x}} \gets \epsilon \vect{x} + (1 - \epsilon) \vect{y}$ \Comment*[r]{random noise; straight lines between $\vect{x}$ and $\vect{y}$}
\fi
\Return{$\fun{\norm{\gradient_{\tilde{\vect{x}}} \, \varphi_{\wassersteinparameter}\fun{\tilde{\vect{x}}}} - 1}^2$}
}
\end{algorithm}
When $\policy$ is executed in $\mdp$, 
observe that
its \emph{parallel execution} in $\latentmdp_{\decoderparameter}$ 
is enabled by the action encoder $\actionencoder$:
%
given an original state $\state \in \states$, $\policy$ first prescribes the action $\action \sim \policy\fun{\sampledot \mid \state}$, which is then embedded in the latent space via $\latentaction \sim \actionencoder\fun{\sampledot \mid \embed_{\encoderparameter}\fun{\state}, \action}$ (cf. Fig.~\ref{subfig:latent-fow-distillation}).
%
This parallel execution, along with setting  $\divergencesymbol$ to $\wassersteinsymbol{\transitiondistance}$, yield an upper bound on the latent regularization, compliant with the bisimulation bounds. 
A two-fold regularizer is obtained thereby, defining the foundations of our objective function:%
\begin{restatable}{lemma}{regularizerlemma}\label{lem:regularizer-upper-bound}
Define $\originaltolatentstationary{}\fun{\latentstate, \latentaction, \latentstate'} = \expectedsymbol{\state, \action \sim \stationary{\policy}}[\condition{\embed_{\encoderparameter}(\state)=\latentstate} \cdot \embed_{\encoderparameter}^{\scriptscriptstyle \actions}(\latentaction \mid \latentstate, \action) \cdot \latentprobtransitions_{\decoderparameter}(\latentstate' \mid \latentstate, \latentaction)]$ as the distribution of drawing state-action pairs from interacting with $\mdp$, embedding them to the latent spaces, and finally letting them transition to their successor state in $\latentmdp_{\decoderparameter}$. Then,
$
    \wassersteindist{\transitiondistance}{\encoder}{ \latentstationaryprior}
    \leq \wassersteindist{\transitiondistance}{\latentstationaryprior}{\originaltolatentstationary{}} + \localtransitionloss{\stationary{\policy}}.
$
\end{restatable}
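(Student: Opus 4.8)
The plan is to combine the triangle inequality for the Wasserstein distance with a single coupling (equivalently, a conditional application of Kantorovich duality) relating the encoder marginal $\encoder$ to the distribution $\originaltolatentstationary{}$ obtained by embedding a state--action pair and then transitioning in $\latentmdp_{\decoderparameter}$. Since $\transitiondistance$ is a metric on $\latentstates \times \latentactions \times \latentstates$, $\wassersteinsymbol{\transitiondistance}$ is a genuine metric on probability measures over that space, so
\[
\wassersteindist{\transitiondistance}{\encoder}{\latentstationaryprior} \;\leq\; \wassersteindist{\transitiondistance}{\encoder}{\originaltolatentstationary{}} \;+\; \wassersteindist{\transitiondistance}{\originaltolatentstationary{}}{\latentstationaryprior},
\]
and by symmetry the last term equals $\wassersteindist{\transitiondistance}{\latentstationaryprior}{\originaltolatentstationary{}}$, the first summand of the claim. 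It thus remains to prove $\wassersteindist{\transitiondistance}{\encoder}{\originaltolatentstationary{}} \leq \localtransitionloss{\stationary{\policy}}$.

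For this I would exploit that $\encoder$ and $\originaltolatentstationary{}$ are generated from a common prefix: both draw $\state, \action \sim \stationary{\policy}$, set $\latentstate = \embed_{\encoderparameter}\fun{\state}$, and draw $\latentaction \sim \actionencoder\fun{\sampledot \mid \latentstate, \action}$; they differ only in the successor latent state, which under $\encoder$ is $\latentstate' = \embed_{\encoderparameter}\fun{\state'}$ for $\state'$ the real successor of $\fun{\state, \action}$ (i.e.\ $\latentstate' \sim \embed_{\encoderparameter}\probtransitions\fun{\sampledot \mid \state, \latentaction}$ in the sense of Remark~\ref{rmk:latent-policy-execution}), whereas under $\originaltolatentstationary{}$ it is $\latentstate' \sim \latentprobtransitions_{\decoderparameter}\fun{\sampledot \mid \latentstate, \latentaction}$. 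I would disintegrate both measures over the prefix $\fun{\state, \action, \latentstate, \latentaction}$ and, conditionally on it, select a measurably parameterised optimal coupling of $\embed_{\encoderparameter}\probtransitions\fun{\sampledot \mid \state, \latentaction}$ and $\latentprobtransitions_{\decoderparameter}\fun{\sampledot \mid \latentstate, \latentaction}$; gluing these conditional couplings onto the shared prefix yields a $\coupling \in \couplings{\encoder}{\originaltolatentstationary{}}$ whose first two coordinates agree almost surely (hence cost nothing, as does any reward coordinate, which is a deterministic function of the matched $\fun{\latentstate, \latentaction}$) and whose last coordinate costs, in expectation,
\[
\expected{\state, \action \sim \stationary{\policy}}{\expected{\latentaction \sim \actionencoder\fun{\sampledot \mid \embed_{\encoderparameter}\fun{\state}, \action}}{\wassersteindist{\distance_{\latentstates}}{\embed_{\encoderparameter}\probtransitions\fun{\sampledot \mid \state, \latentaction}}{\latentprobtransitions_{\decoderparameter}\fun{\sampledot \mid \embed_{\encoderparameter}\fun{\state}, \latentaction}}}} \;=\; \localtransitionloss{\stationary{\policy}} .
\]
As $\wassersteindist{\transitiondistance}{\encoder}{\originaltolatentstationary{}}$ is the infimum of transport costs over $\couplings{\encoder}{\originaltolatentstationary{}}$, the desired bound follows. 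A slightly lighter, equivalent route is the dual one: for $f \in \Lipschf{\transitiondistance}$, conditioning on the shared prefix and noting that $\latentstate' \mapsto f\fun{\latentstate, \latentaction, \latentstate'}$ is $1$-Lipschitz w.r.t.\ $\distance_{\latentstates}$, Kantorovich duality on $\latentstates$ gives the pointwise bound $\wassersteindist{\distance_{\latentstates}}{\embed_{\encoderparameter}\probtransitions\fun{\sampledot \mid \state, \latentaction}}{\latentprobtransitions_{\decoderparameter}\fun{\sampledot \mid \latentstate, \latentaction}}$; integrating and then taking the supremum over $f$ gives the same conclusion.

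The main obstacle is measure-theoretic bookkeeping rather than any conceptual difficulty: one must justify the disintegration over the prefix on the complete separable metric spaces at hand and the measurable selection of conditional optimal (or near-optimal) couplings, and one must check that the action inside $\probtransitions\fun{\sampledot \mid \state, \latentaction}$ in the definition of $\localtransitionloss{\stationary{\policy}}$ is exactly the one produced by the latent flow of Remark~\ref{rmk:latent-policy-execution}, so that the conditional transport cost is precisely the integrand of the local transition loss. Existence of $\stationary{\policy}$ is guaranteed by ergodicity (Assumption~\ref{assumption:vae-mdp}) and of $\latentstationaryprior$ by $\latentmdp_{\decoderparameter}$ being finite, and finiteness of all Wasserstein terms is immediate since $\latentstates$ is discrete and bounded.
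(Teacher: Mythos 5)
Your proposal is correct and follows essentially the same route as the paper: the triangle inequality for $\wassersteinsymbol{\transitiondistance}$ splits off $\wassersteindist{\transitiondistance}{\latentstationaryprior}{\originaltolatentstationary{}}$, and the remaining term $\wassersteindist{\transitiondistance}{\encoder}{\originaltolatentstationary{}}$ is bounded by $\localtransitionloss{\stationary{\policy}}$ by exploiting that the two measures share the prefix $\fun{\state,\action,\latentstate,\latentaction}$ and differ only in the successor latent state. The paper carries out the second step via the Kantorovich dual plus Jensen's inequality and a restriction of $\Lipschf{\transitiondistance}$ to $\Lipschf{\distance_{\latentstates}}$ (exactly your ``lighter, equivalent route''), whereas your primary argument glues conditional optimal couplings in the primal; both are valid, and you correctly flag the only real subtlety, namely that the action appearing in the conditional transport cost must match the one used in the parallel-execution definition of $\localtransitionloss{\stationary{\policy}}$.
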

We therefore define the \waemdp (\emph{Wasserstein-Wasserstein auto-encoded MDP}) objective as:
\begin{equation*}
    \min_{\encoderparameter, \decoderparameter} \! \! \! \expectedsymbol{}_{\substack{\state, \action, \state' \sim \stationary{\policy} \\ \latentstate, \latentaction, \latentstate' \sim \embed_{\encoderparameter}({\sampledot \mid \state, \action, \state'})}} \! \! \! \!
    \left[{\distance_{\states}\fun{\state, \generative_{\decoderparameter}\fun{\latentstate}} + 
    \distance_{\actions}\fun{\action, \embeda_{\decoderparameter}\fun{\latentstate, \latentaction}} + 
    \distance_{\states}\fun{\state', \generative_{\decoderparameter}\fun{\latentstate'}}}\right] + \localrewardloss{\stationary{\policy}} + \beta \cdot ({ \steadystateregularizer{\policy} + \localtransitionloss{\stationary{\policy}}}),
\end{equation*}
%
%
where $\steadystateregularizer{\policy} = \wassersteindist{\transitiondistance}{\originaltolatentstationary{}}{\latentstationaryprior}$ and
$\localtransitionloss{\stationary{\policy}}$ are respectively called \emph{steady-state} and \emph{transition} regularizers.
The former allows to quantify the distance between the stationary distributions respectively induced by $\policy$ in $\mdp$ and $\latentpolicy_{\decoderparameter}$ in $\latentmdp_{\decoderparameter}$, further enabling the distillation. 
The latter allows to learn the latent dynamics.
%
%
Note that $\localrewardloss{\stationary{\policy}}$ and $\localtransitionloss{\stationary{\policy}}$ --- set over $\stationary{\policy}$ instead of $\stationary{\latentpolicy_{\decoderparameter}}$ --- are not sufficient to ensure the bisimulation bounds (Eq.~\ref{eq:bidistance-bound}): running $\policy$ in $\latentmdp_{\decoderparameter}$ depends on the parallel execution of $\policy$ in the original model, which does not permit its (conventional) verification.  
Breaking this dependency is enabled by learning the distillation $\latentpolicy_{\decoderparameter}$ through $\steadystateregularizer{\policy}$, as shown in Fig.~\ref{subfig:latent-fow-distillation}:
%
minimizing
$\steadystateregularizer{\policy}$
allows to make $\stationary{\policy}$ and $\latentstationaryprior$ closer together, further bridging the gap of the discrepancy between $\policy$ and $\latentpolicy_{\decoderparameter}$.
At any time, recovering the local losses along with the linked bisimulation bounds in the objective function of the \waemdp is allowed by considering the latent policy resulting from this distillation:
\begin{restatable}{theorem}{latentexecutionobjective}\label{thm:latent-execution-objective}
Assume that traces are generated by running a latent policy $\latentpolicy \in \latentpolicies$ in the original environment and let $\distance_{\rewards}$ be the usual Euclidean distance, then the \waemdp objective is
\begin{equation*}
    \min_{\encoderparameter, \decoderparameter} \expected{\state, \state' \sim \stationary{\latentpolicy}}{\distance_{\states}\fun{\state, \generative_{\decoderparameter}\fun{\embed_{\encoderparameter}\fun{\state}}} + \distance_{\states}\fun{\state', \generative_{\decoderparameter}\fun{\embed_{\encoderparameter}\fun{\state'}}}} + \localrewardloss{\stationary{\latentpolicy}} + \beta \cdot ( \steadystateregularizer{\latentpolicy} + \localtransitionloss{\stationary{\latentpolicy}}).
\end{equation*}
\end{restatable}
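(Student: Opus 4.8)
The claimed equation is an identity between two ways of writing the \emph{same} objective: the left-hand side is the \waemdp objective displayed just above, and the right-hand side is its instantiation under the hypothesis that the trace-generating policy is itself a latent policy $\latentpolicy$, executed in $\mdp$ through the latent flow of Remark~\ref{rmk:latent-policy-execution}. The plan is therefore to substitute, term by term, the distributions and samples induced by running $\latentpolicy$ in $\mdp$ into the \waemdp objective, and then collect the simplifications. Recall that along such a trace a transition is produced by drawing $\state\sim\stationary{\latentpolicy}$, then $\latentaction\sim\latentpolicy(\sampledot\mid\embed_{\encoderparameter}(\state))$, setting the action actually played in $\mdp$ to $\action=\embeda_{\decoderparameter}(\embed_{\encoderparameter}(\state),\latentaction)$, the reward to $\reward=\rewards(\state,\action)$, and finally $\state'\sim\probtransitions(\sampledot\mid\state,\action)$; in particular the transition distribution $\stationary{\policy}$ appearing throughout the \waemdp objective becomes $\stationary{\latentpolicy}$, and the latent action carried by the encoded transition is the one prescribed by $\latentpolicy$ rather than one obtained by re-encoding.

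First I would simplify the reconstruction (distortion) part. Since $\embed_{\encoderparameter}$ is a \emph{deterministic} encoder, drawing $\latentstate,\latentaction,\latentstate'\sim\embed_{\encoderparameter}(\sampledot\mid\state,\action,\state')$ forces $\latentstate=\embed_{\encoderparameter}(\state)$ and $\latentstate'=\embed_{\encoderparameter}(\state')$ almost surely, so the two state-distortion terms become $\distance_{\states}(\state,\generative_{\decoderparameter}(\embed_{\encoderparameter}(\state)))$ and $\distance_{\states}(\state',\generative_{\decoderparameter}(\embed_{\encoderparameter}(\state')))$. For the action term, the key observation is that under the latent flow the latent action is exactly the argument fed to $\embeda_{\decoderparameter}$: the action stored in the transition is $\action=\embeda_{\decoderparameter}(\embed_{\encoderparameter}(\state),\latentaction)$, hence $\distance_{\actions}(\action,\embeda_{\decoderparameter}(\latentstate,\latentaction))=\distance_{\actions}(\embeda_{\decoderparameter}(\embed_{\encoderparameter}(\state),\latentaction),\embeda_{\decoderparameter}(\embed_{\encoderparameter}(\state),\latentaction))=0$, so the action-distortion term vanishes. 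What is left of the distortion part is $\expected{\state,\state'\sim\stationary{\latentpolicy}}{\distance_{\states}(\state,\generative_{\decoderparameter}(\embed_{\encoderparameter}(\state)))+\distance_{\states}(\state',\generative_{\decoderparameter}(\embed_{\encoderparameter}(\state')))}$ after marginalizing out the latent action, which is the first summand of the claimed expression.

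Next I would handle the reward term and the two regularizers. With $\distance_{\rewards}$ the usual Euclidean distance on $\images{\rewards}\subseteq\R$, the reward-dependent contribution of the objective is $\expected{}{|\reward-\latentrewards_{\decoderparameter}(\latentstate,\latentaction)|}$; substituting $\reward=\rewards(\state,\embeda_{\decoderparameter}(\embed_{\encoderparameter}(\state),\latentaction))$ and $\latentstate=\embed_{\encoderparameter}(\state)$ and taking the expectation over the state--latent-action pairs produced by the latent flow, this is exactly $\localrewardloss{\stationary{\latentpolicy}}$ by definition of the local reward loss (recall that $\rewards(\state,\latentaction)$ abbreviates $\rewards(\state,\embeda_{\decoderparameter}(\embed_{\encoderparameter}(\state),\latentaction))$ in the latent-flow notation). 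Likewise, $\steadystateregularizer{\policy}=\wassersteindist{\transitiondistance}{\originaltolatentstationary{}}{\latentstationaryprior}$ and $\localtransitionloss{\stationary{\policy}}$ are functionals of the stationary state--latent-action distribution of the trace-generating policy (through $\originaltolatentstationary{}$ and through the local loss definition, respectively); when that policy is $\latentpolicy$ this distribution is $\stationary{\latentpolicy}$, so the two terms become $\steadystateregularizer{\latentpolicy}$ and $\localtransitionloss{\stationary{\latentpolicy}}$. Summing the three parts yields the displayed identity.

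\textbf{Main obstacle.}
The only step that is not pure definition-chasing is justifying that, along a trace generated by $\latentpolicy$, the latent action appearing inside the encoded transition coincides with the one prescribed by $\latentpolicy$ --- equivalently, that re-encoding the played action $\action=\embeda_{\decoderparameter}(\embed_{\encoderparameter}(\state),\latentaction)$ through $\actionencoder$ returns $\latentaction$, or simply that this re-encoding step is bypassed because the latent action is already available in the latent flow. This is precisely what makes the action-distortion term vanish and what allows $\stationary{\policy}$ to be replaced verbatim by $\stationary{\latentpolicy}$ in every term, including inside the regularizers; I would make it explicit as a property of the latent flow before running the term-by-term substitution. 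Everything else follows from $\embed_{\encoderparameter}$ being deterministic, $\distance_{\actions}$ being a metric, and unfolding the definitions of $\localrewardloss{\cdot}$, $\localtransitionloss{\cdot}$ and $\steadystateregularizer{\cdot}$.
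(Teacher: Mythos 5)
Your proposal is correct and follows essentially the same route as the paper: the paper also argues term by term, using the fact that a latent policy emits latent actions directly (so the action encoder reduces to the identity, which is exactly the "main obstacle" you flag and which the paper isolates as an explicit assumption and a small lemma for the transition regularizer), and using that the action stored in a latent-flow transition is $\embeda_{\decoderparameter}(\embed_{\encoderparameter}(\state),\latentaction)$ so the action-distortion term vanishes identically. The only cosmetic difference is that the paper splits the argument into the discrete-action case ($\actions=\latentactions$, both encoder and decoder identities) and the continuous-action case (only the encoder is bypassed), but your single argument covers both.
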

\smallparagraph{Optimizing the regularizers}~%
is enabled by the dual form of the OT: we introduce two parameterized networks, $\steadystatenetwork$ and $\transitionlossnetwork$, constrained to be $1$-Lipschitz and trained to attain the supremum of the dual:
\[
\steadystateregularizer{\policy}\fun{\wassersteinparameter} = \max_{\wassersteinparameter 
} \; \expectedsymbol{\state, \action \sim \stationary{\policy}}\expectedsymbol{\latentaction \sim \actionencoder\fun{\sampledot \mid \embed_{\encoderparameter}\fun{\state}, \action}}\expectedsymbol{\latentstate^{\star} \sim \latentprobtransitions_{\decoderparameter}\fun{\sampledot \mid \embed_{\encoderparameter}\fun{\state}, \latentaction}} \steadystatenetwork\fun{\embed_{\encoderparameter}\fun{\state}, \latentaction, \latentstate^{\star}}
- \expectedsymbol{\latentvariable, \latentaction', \latentvariable' \sim \latentstationaryprior}\steadystatenetwork\fun{\latentvariable, \latentaction', \latentvariable'}
\]
\[
\localtransitionloss{\stationary{\policy}}\fun{\wassersteinparameter} = \max_{\wassersteinparameter
} \expectedsymbol{\state, \action, \state' \sim \stationary{\policy}} \expectedsymbol{\latentstate, \latentaction, \latentstate' \sim \embed_{\encoderparameter}\fun{\sampledot \mid \state, \action, \state'}}\Big[{\transitionlossnetwork\fun{\state, \action, \latentstate, \latentaction, \latentstate'} - \! \expectedsymbol{\latentstate^{\star} \sim \latentprobtransitions_{\decoderparameter}\fun{\sampledot \mid \latentstate, \latentaction}} \transitionlossnetwork\fun{\state, \action, \latentstate, \latentaction, \latentstate^{\star}}}\Big]
\]
Details to derive this tractable form of $\localtransitionloss{\stationary{\policy}}\fun{\wassersteinparameter}$ are in Appendix~\ref{appendix:tractable-transition-regularizer}.
The networks are constrained via
the gradient penalty approach of 
\cite{DBLP:conf/nips/GulrajaniAADC17}, leveraging that any differentiable function is $1$-Lipschitz iff it has gradients with norm at most $1$ everywhere
(we show in Appendix~\ref{appendix:latent-metric} this is still valid for relaxations of discrete spaces).
%
The final learning process is presented in Algorithm~\ref{alg:wwae-mdp}. 

\begin{figure}
    \centering
    \includegraphics[width=\textwidth]{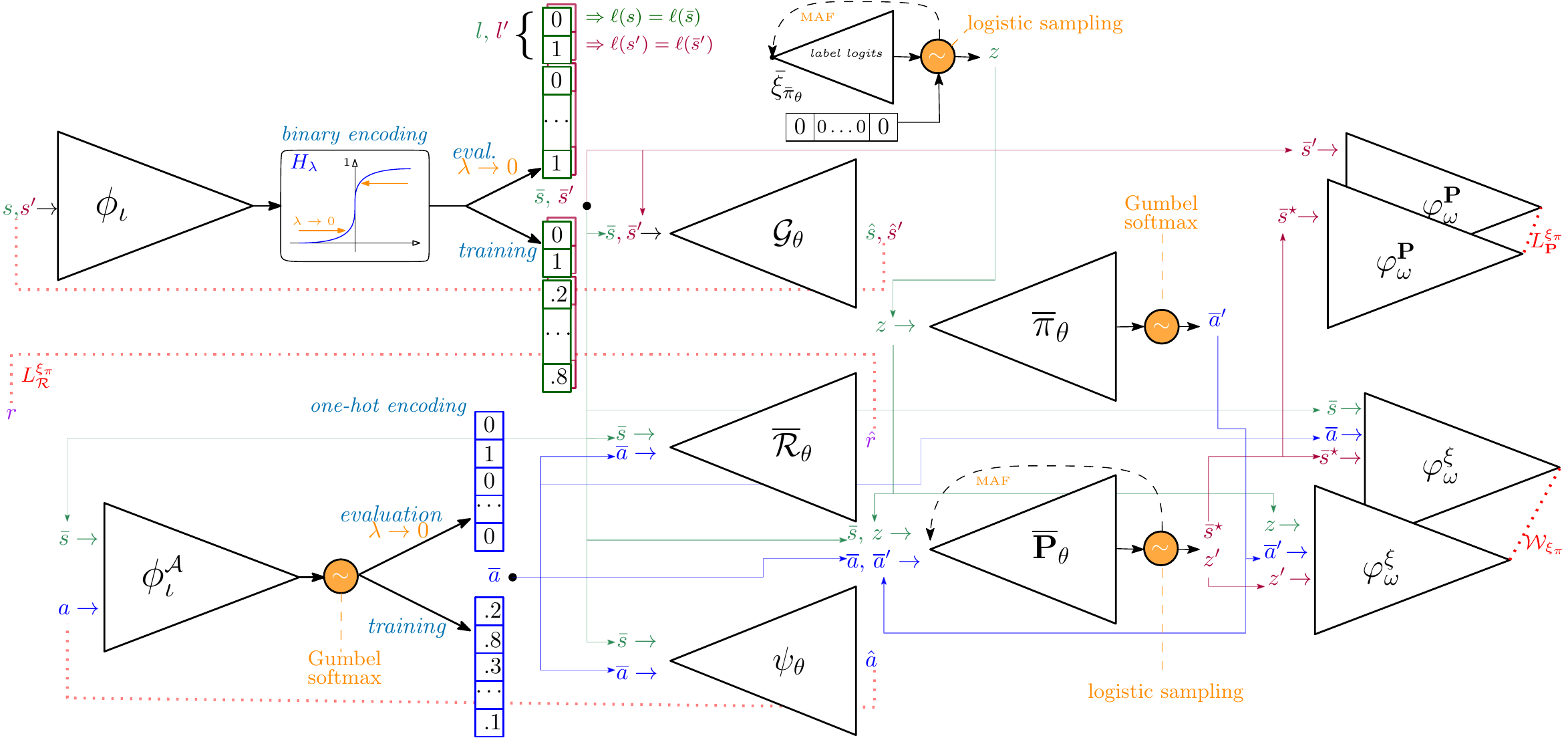}
    \caption{\waemdp architecture.
    Distances are depicted by red dotted lines.}
    \label{fig:wae-architecture}
\end{figure}
\subsection{Discrete Latent Spaces}\label{sec:discrete-latent-spaces}
To enable the verification of latent models supported by the bisimulation guarantees of Eq.~\ref{eq:bidistance-bound}, we focus on the special case of \emph{discrete latent space models}.
Our approach relies on continuous relaxation of discrete random variables, regulated by some \emph{temperature} parameter(s) $\temperature$: discrete random variables are retrieved as $\temperature \to 0$, which amounts to applying a rounding operator.
For training, we use the temperature-controlled relaxations to differentiate the objective and let the gradient flow through the network.
When we deploy the latent policy in the environment and formally check the latent model, the zero-temperature limit is used.
An overview of the approach is depticted in Fig.~\ref{fig:wae-architecture}.

\smallparagraph{State encoder.}~%
We work with a \emph{binary representation} of the latent states.
First, this induces compact networks, able to deal with a large discrete space via a tractable number of parameter variables.
But most importantly, this 
ensures that Assumption~\ref{assumption:vae-mdp} is satisfied:
let $n = \log_2 | \latentstates|$,
we reserve $\left| \atomicprops \right|$ bits in $\latentstates$ and each time $\state \in \states$ is passed to $\embed_{\encoderparameter}$, $n- \left|\atomicprops\right|$ bits are produced and concatenated with $\labels\fun{\state}$, ensuring a perfect reconstruction of the labels and further bisimulation bounds.
To produce Bernoulli variables, $\embed_{\encoderparameter}$ deterministically maps $\state$ to a latent code $\vz$, passed to the Heaviside $H\fun{\vz} = \condition{\vz > 0}$.
We train $\embed_{\encoderparameter}$ by using the smooth approximation $H_{\temperature}\fun{\vz} = \sigma\fun{\nicefrac{2\vz}{\temperature}}$, satisfying $H = \lim_{\temperature \to 0} H_{\temperature}$.

\smallparagraph{Latent distributions.}~%
Besides the discontinuity of their latent image space, a major challenge of optimizing over discrete distributions is \emph{sampling}, required to be a differentiable operation.
We circumvent this by using \emph{concrete distributions} \citep{DBLP:conf/iclr/JangGP17,DBLP:conf/iclr/MaddisonMT17}:
the idea is to sample reparameterizable random variables from $\temperature$-parameterized distributions, and applying a differentiable, nonlinear operator in downstream. 
We use the \emph{Gumbel softmax trick} to sample from distributions over (one-hot encoded) latent actions ($\actionencoder$, $\latentpolicy_{\decoderparameter}$).
For binary distributions ($\latentprobtransitions_{\decoderparameter}$, $\latentstationaryprior$), each relaxed Bernoulli with logit $\alpha$ is retrieved by drawing a logistic random variable located in $\nicefrac{\alpha}{\temperature}$ and scaled to $\nicefrac{1}{\temperature}$, then applying a sigmoid in downstream.
We emphasize that this trick alone (as used by \citealt{DBLP:conf/icml/CorneilGB18,DBLP:journals/corr/abs-2112-09655}) is not sufficient: it yields independent Bernoullis, being too restrictive in general, which prevents from learning sound transition dynamics (cf. Example~\ref{ex:independent-ber-not-sufficient}). 
%
\begin{wrapfigure}{r}{0.4\textwidth}
\begin{tikzpicture}[->,shorten >=1pt,auto,node distance=2.5cm,bend angle=45, scale=0.6, font=\small]
\tikzstyle{state}=[draw,circle,text centered,minimum size=7mm,text width=4mm]
\tikzstyle{act}=[fill,circle,inner sep=1pt,minimum size=1.5pt, node distance=1cm]    
\tikzstyle{empty}=[text centered, text width=15mm]
\node[state] (0) at (0, 0)     {$\latentstate_0$};
\node[state] (1) at (2.5, .8)   {$\latentstate_1$};
\node[empty] (l1) at (1.2, 1.4)  {$\{\textit{\color{OliveGreen}goal}\}$};
\node[state] (2) at (2.5, -.8)  {$\latentstate_2$};
\node[empty] (l2) at (1., -1.3) {$\{\textit{\color{BrickRed}unsafe}\}$};
\node[state] (3) at (-2.5, 0)    {$\latentstate_3$};
\node[empty] (l2) at (-2.5, 1)   {$\{\textit{\color{BrickRed}unsafe}\}$};
\draw[->] (0) -- node[above]{\scriptsize $\nicefrac{1}{2}$} (1);
\draw[->] (0) -- node[below]{\scriptsize $\nicefrac{1}{4}$} (2);
\draw[->] (0) -- node[above]{\scriptsize $\nicefrac{1}{4}$} (3);
\draw[->] (1) edge[out=335,in=25,loop] node[left]{\scriptsize $1$} (1);
\draw[->] (2) edge[out=335,in=25,loop] node[left]{\scriptsize $1$} (2);
\draw[->] (3) edge[out=215,in=155,loop] node[right]{\scriptsize $1$} (3);
\end{tikzpicture}
\caption{Markov Chain with four states%
\ifarxiv{.}
\else
; labels are drawn next to their state.
\fi}
\label{fig:markov-chain-independent-bernoulli}
\end{wrapfigure}%
\begin{example}\label{ex:independent-ber-not-sufficient}
\ifarxiv
Let $\latentmdp$ be the discrete MC of Fig.~\ref{fig:markov-chain-independent-bernoulli}
(the labels of $\latentmdp$ are drawn next to their state).
\else
Let $\latentmdp$ be the discrete MC of Fig.~\ref{fig:markov-chain-independent-bernoulli}.
\fi
In one-hot, $\atomicprops = \{\textit{goal}\!: \tuple{{\color{OliveGreen}1}, 0}, \textit{unsafe}\!: \tuple{0, {\color{BrickRed}1}}\}$.
We assume that $3$ bits are used for the (binary) state space, with $\latentstates = \{\latentstate_0 \!: \tuple{{0, 0}, 0}, \latentstate_1\!: \tuple{{\color{OliveGreen}1}, 0, 0}, \latentstate_2\!: \tuple{0, {\color{BrickRed}1}, 0}, \latentstate_3 \!: \tuple{0, {\color{BrickRed}1}, 1}\}$ (the two first bits are reserved for the labels).
Considering each bit as being independent is not sufficient to learn $\latentprobtransitions$: 
the optimal estimation $\latentprobtransitions_{\decoderparameter^{\star}}\fun{\sampledot \mid \latentstate_0}$ is in that case represented by the independent Bernoulli vector $\mathbf{b} = \tuple{\nicefrac{1}{2}, \nicefrac{1}{2}, \nicefrac{1}{4}}$, giving the probability to go from $\latentstate_0$ to each bit \emph{independently}.
This yields a poor estimation of the actual transition function:
$
    \latentprobtransitions_{\decoderparameter^{\star}}\fun{\latentstate_0 \mid \latentstate_0} = (1 - \mathbf{b}_1) \cdot (1 - \mathbf{b}_2) \cdot (1 - \mathbf{b}_3) = 
    \latentprobtransitions_{\decoderparameter^{\star}}\fun{\latentstate_1 \mid \latentstate_0} = \mathbf{b}_1 \cdot (1 - \mathbf{b}_2) \cdot (1 - \mathbf{b}_3) = 
    \latentprobtransitions_{\decoderparameter^{\star}}\fun{\latentstate_2 \mid \latentstate_0} = (1 - \mathbf{b}_1) \cdot \mathbf{b}_2 \cdot (1 - \mathbf{b}_3) = 
    \nicefrac{3}{16}, \,
    \latentprobtransitions_{\decoderparameter^{\star}}\fun{\latentstate_3 \mid \latentstate_0} = (1 - \mathbf{b}_1) \cdot \mathbf{b}_2 \cdot \mathbf{b}_3 = \nicefrac{1}{16}
$.%
\end{example}%
We consider instead relaxed multivariate Bernoulli distributions by decomposing $P \in \distributions{\latentstates}$
as a product of conditionals:
$
P\fun{\latentstate} = \prod_{i=1}^{n} P\fun{\latentstate_{i} \mid \latentstate_{1 \colon i-1}}
$
where $\latentstate_i$ is the $i^\text{th}$ entry (bit) of $\latentstate$.
We learn such distributions by introducing
a \emph{masked autoregressive flow} (MAF, \citealt{DBLP:conf/nips/PapamakariosMP17}) for relaxed Bernoullis via the recursion:
$\latentstate_i = \sigma\fun{\nicefrac{l_i + \alpha_i}{\temperature}}$, \text{where} $l_i \sim \logistic{0}{1}$, $\alpha_i = f_i\fun{\latentstate_{1\colon i - 1}}$, \text{and} $f$ is a MADE \citep{DBLP:conf/icml/GermainGML15}, a feedforward network implementing the conditional output dependency on the inputs via a mask that only keeps the necessary connections to enforce the conditional property.
We use this MAF to model $\latentprobtransitions_{\decoderparameter}$ and the dynamics 
related to
the labels in $\latentstationaryprior$.
We fix the logits of the remaining $n - \left| \atomicprops \right|$ bits to $0$ to allow for a fairly distributed latent space.
\section{Experiments}\label{sec:experiments}
\begin{figure*}
\begin{subfigure}{\textwidth}
    \centering
    \includegraphics[width=\textwidth]{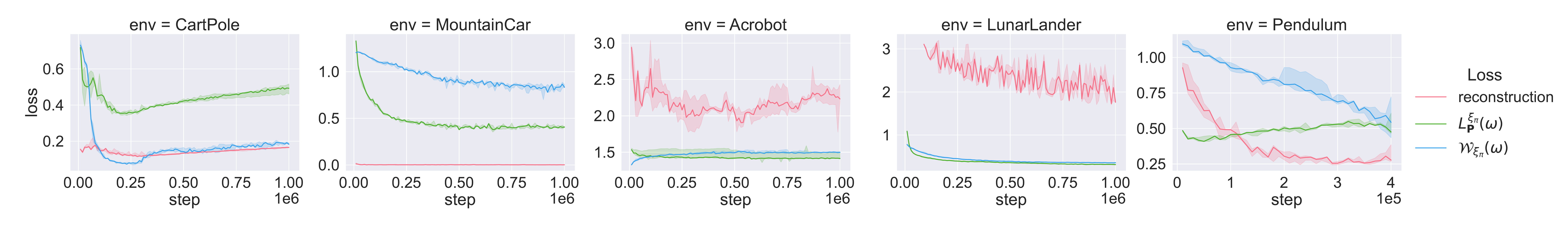}
    \caption{\waemdp objective: reconstruction loss, transition and steady-state regularizers}
    \label{subfig:objective}
\end{subfigure}
\begin{subfigure}{\textwidth}
    \centering
    \includegraphics[width=\textwidth]{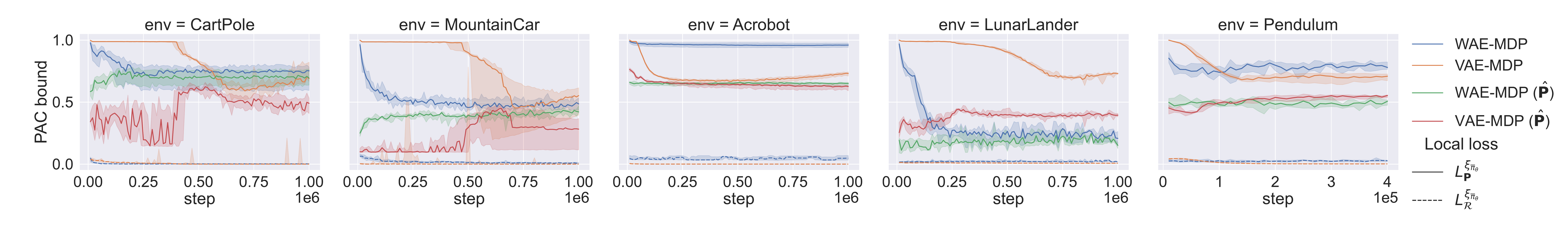}
    \caption{PAC local losses approximation for an error of at most $10^{-2}$ and probability confidence $0.955$}
    \label{subfig:pac-losses}
\end{subfigure}
\begin{subfigure}{\textwidth}
    \centering
    \includegraphics[width=\textwidth]{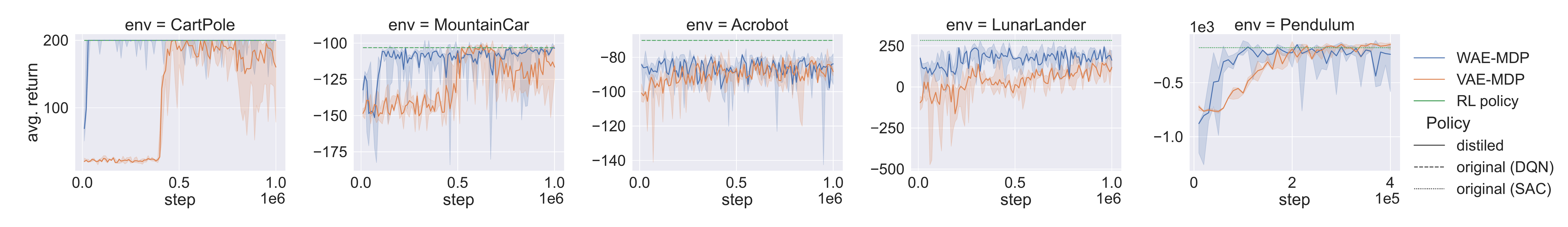}
    \caption{Episode return obtained when executing the distilled policy in the original MDP 
    (averaged over $30$ episodes)}
    \label{subfig:policy-performance}
\end{subfigure}
    \caption{
    For each environment, we trained five different instances of the models with different random seeds: the solid line is the median and the shaded interval the interquartile range.}
    \label{fig:eval-plots}
\end{figure*}
We evaluate the quality of latent space models learned and policies distilled through \waemdps.
To do so, we first trained deep-RL policies (DQN, \citealt{DBLP:journals/nature/MnihKSRVBGRFOPB15} on discrete, and SAC, \citealt{DBLP:conf/icml/HaarnojaZAL18} on continuous action spaces) for various OpenAI benchmarks \citep{DBLP:journals/corr/BrockmanCPSSTZ16}, which we then distill via our approach (Figure~\ref{fig:eval-plots}).
We thus evaluate
    (a) the \waemdp training metrics, 
    (b) the abstraction and representation quality via \emph{PAC local losses upper bounds} \citep{DBLP:journals/corr/abs-2112-09655},
    and (c) the distilled policy performance when deployed in the original environment. 
%
The confidence metrics and performance are compared with those of VAE-MDPs.
Finally, we formally verify properties in the latent model. 
The exact setting to reproduce our results is in Appendix~\ref{appendix:experiments}. 


\smallparagraph{Learning metrics.}~The objective (Fig.~\ref{subfig:objective}) is a weighted sum of the reconstruction loss and the two Wasserstein regularizers.
The choice of $\beta$ defines the optimization direction.
\ifarxiv
Posterior collapse is not observed, naturally avoided in WAEs \citep{DBLP:conf/iclr/TolstikhinBGS18},
which reflects
that the latent space is consistently distributed (see Appendix~\ref{appendix:posterior-collapse} for a discussion and a concrete illustration of collapsing issues occurring in VAE-MDPs).
\else
In contrast to VAEs (cf. Appendix~\ref{appendix:posterior-collapse}), WAEs indeed naturally avoid posterior collapse \citep{DBLP:conf/iclr/TolstikhinBGS18}, indicating that the latent space is consistently distributed.
\fi
Optimizing the objective (Fig.~\ref{subfig:objective}) effectively allows minimizing the local losses (Fig.~\ref{subfig:pac-losses}) and recovering the performance of the original policy (Fig.~\ref{subfig:policy-performance}).

\smallparagraph{Local losses.}~%
%
%
For V- and WAEs, we formally evaluate PAC upper bounds on $\localrewardloss{\stationary{\latentpolicy_{\decoderparameter}}}$ and $\localtransitionloss{\stationary{\latentpolicy_{\decoderparameter}}}$ via the algorithm of \cite{DBLP:journals/corr/abs-2112-09655} (Fig~\ref{subfig:pac-losses}). 
The lower the local losses, the closer $\mdp$ and $\latentmdp_{\decoderparameter}$ are in terms of behaviors induced by $\latentpolicy_{\decoderparameter}$ (cf. Eq.~\ref{eq:bidistance-bound}).
In VAEs, the losses are evaluated on a transition function $\hat{\probtransitions}$ obtained via frequency estimation of the latent transition dynamics \citep{DBLP:journals/corr/abs-2112-09655},
by reconstructing the transition model a posteriori and collecting data to estimate the transition probabilities (e.g., \citealt{DBLP:conf/cav/BazilleGJS20,DBLP:conf/icml/CorneilGB18}).
We thus also report the metrics for $\hat{\probtransitions}$.
Our bounds quickly converge to close values in general for $\latentprobtransitions_{\decoderparameter}$  and $\hat{\probtransitions}$, whereas for VAEs, the convergence is slow and unstable, with $\hat{\probtransitions}$ offering better bounds.
We emphasize that WAEs do not require this additional reconstruction step to obtain losses that can be leveraged to assess the quality of the model, in contrast to VAEs, where learning $\latentprobtransitions_{\decoderparameter}$ was performed via overly restrictive distributions, leading to poor estimation in general (cf. Ex.~\ref{ex:independent-ber-not-sufficient}).
Finally, \emph{when the distilled policies offer comparable performance} (Fig.~\ref{subfig:policy-performance}), our bounds are either close to or better than those of VAEs.

\smallparagraph{Distillation.}~%
The bisimulation guarantees (Eq.~\ref{eq:bidistance-bound}) are only valid for $\latentpolicy_{\decoderparameter}$, the policy under which formal properties can be verified.
It is crucial that $\latentpolicy_{\decoderparameter}$ achieves performance close to $\policy$, the original one, when deployed in the RL environment. 
We evaluate the performance of $\latentpolicy_{\decoderparameter}$ via the undiscounted episode return $\episodereturn{\latentpolicy_{\decoderparameter}}$ obtained by running $\latentpolicy_{\decoderparameter}$ in the original model $\mdp$.
We observe that $\episodereturn{\latentpolicy_{\decoderparameter}}$ approaches faster the original performance $\episodereturn{\policy}$ for
W- than VAEs:
WAEs converge in a few steps
for all environments, whereas the full learning budget 
is sometimes necessary with VAEs.
The success in recovering the original performance emphasizes the representation quality guarantees (Eq.~\ref{eq:bidistance-bound}) induced by WAEs:
when local losses are minimized, all original states that are embedded to the same representation are bisimilarly close. 
Distilling the policy over the new representation, albeit discrete and hence coarser, still achieves effective performance since
$\embed_{\encoderparameter}$
keeps only what is important to preserve behaviors, and thus values. 
Furthermore, the distillation can remove some non-robustness obtained during RL:
$\latentpolicy_{\decoderparameter}$ prescribes the same actions for bisimilarly close states, whereas this is not necessarily the case for $\policy$.
\begin{table}
\centering
\caption{Formal Verification of distilled policies. Values are computed for $\discount = 0.99$ (lower is better).}
\label{table:evaluation}
\resizebox{0.9\columnwidth}{!}{%
\begin{tabular}{@{}llllllllllll@{}}
\toprule
Environment &
  step ($10^5$)&
  $\states$ &
  $\actions$ &
  $\left|\latentstates\right|$ &
  $\left|\latentactions\right|$ &
  $\localrewardloss{\stationary{\latentpolicy_{\decoderparameter}}}$ (PAC) &
  $\localtransitionloss{\stationary{\latentpolicy_{\decoderparameter}}}$ (PAC) &
  $\norm{V_{\latentpolicy_{\decoderparameter}}}$ &
  $\latentvalues{\latentpolicy_{\decoderparameter}}{\varphi}{\zinit}$ \\ \midrule
{CartPole} &
  $1.2$ &
  $\subseteq \R^4$ &
  $\set{1, 2}$ &
  $512$ &
  $2$ &
  $0.00499653$ &
  $0.399636$ &
  $3.71213$ &
  $0.0316655$ \\
{MountainCar} &
  $2.32$ &
  $\subseteq \R^2$ &
  $\set{1, 2}$ &
  $1024$ &
  $2$ &
  $0.0141763$ &
  $0.382323$ &
  $2.83714$ &
  $0$ \\
{Acrobot} &
  $4.3 $ &
  $\subseteq \R^6$ &
  $\set{1, 2, 3}$ &
  $8192$ &
  $3$ &
  $0.0347698$ &
  $0.649478$ &
  $2.22006$ &
  $0.0021911$ \\
  {LunarLander} &
  $3.2 $ &
  $\subseteq \R^8$ &
  $\mathopen[-1, 1\mathclose]^2$ &
  $16384$ &
  $3$ &
  $0.0207205$ &
  $0.131357$ &
  $0.0372883$ &
  $0.0702039$ \\
{Pendulum} &
  $3.7 $ &
  $\subseteq \R^3$ &
  $\mathopen[ -2, 2 \mathclose]$ &
  $8192$ &
  $3$ &
  $0.0266745$ &
  $0.539508$ &
  $4.33006$ &
  $0.0348492$ \\
 \bottomrule
\end{tabular}%
}
\end{table}

\smallparagraph{Formal verification.}~%
To formally verify $\latentmdp_\decoderparameter$, we implemented a \emph{value iteration} (VI) engine, handling the neural network encoding of the latent space for discounted properties, which is one of the most popular algorithms for checking property probabilities in MDPs (e.g., \citealt{DBLP:BK08, stt:Hensel2021, doi:10.1146/annurev-control-042820-010947}).
We verify \emph{time-to-failure} properties $\varphi$, often used to check the failure rate of a system \citep{DBLP:conf/focs/Pnueli77} by measuring whether the agent fails \emph{before the end of the episode}. 
Although simple, such properties highlight the applicability of our approach on reachability events, which are building blocks to verify MDPs
\ifarxiv
(\citealt{DBLP:BK08}; cf. Appendix~\ref{rmk:reachability} for a discussion).
\else
(\citealt{DBLP:BK08}; cf. Appendix~\ref{rmk:reachability}).
\fi
In particular, we checked whether the agent reaches an unsafe position or angle ({CartPole}, {LunarLander}), does not reach its goal position (MountainCar, Acrobot), and does not reach and stay in a safe region of the system (Pendulum).
%
Results are in Table~\ref{table:evaluation}: for each environment, we select the distilled policy which gives the best trade-off between performance (episode return) and abstraction quality (local losses). 
As extra confidence metric, we report the value difference 
$\norm{V_{\latentpolicy_{\decoderparameter}}} = |\values{\latentpolicy_{\decoderparameter}}{}{\sinit} - \latentvalues{\latentpolicy_{\decoderparameter}}{}{\zinit}|$ obtained by executing $\latentpolicy_{\decoderparameter}$ in $\mdp$ and $\latentmdp_{\decoderparameter}$ 
($\values{\latentpolicy_{\decoderparameter}}{}{\sampledot}$ is averaged while $\latentvalues{\latentpolicy_{\decoderparameter}}{}{\sampledot}$ is formally computed).
\section{Conclusion}\label{sec:conclusion}

We presented  WAE-MDPs, a framework for learning formally verifiable distillations of RL policies with bisimulation guarantees.
The latter, along with the learned abstraction of the unknown continuous environment to a discrete model, 
enables the verification.
Our method overcomes the limitations of VAE-MDPs and our results show that it outperforms the latter in terms of learning speed, model quality, and performance, in addition to being supported by stronger learning guarantees.
As mentioned by \citet{DBLP:journals/corr/abs-2112-09655}, distillation failure reveals the lack of robustness of original RL policies.
In particular, we found that distilling highly noise-sensitive RL policies (such as robotics simulations, e.g., \citealt{todorov2012mujoco}) is laborious, even though the result remains formally verifiable.

We demonstrated the feasibility of our approach through the verification of reachability objectives, which are building blocks for stochastic model-checking \citep{DBLP:BK08}.
Besides the scope of this work, the verification of general discounted $\omega$-regular properties is theoretically allowed in our model via the rechability to components of standard constructions based on automata products (e.g., \citealt{
DBLP:conf/cav/BaierK0K0W16,DBLP:conf/cav/SickertEJK16}), and discounted games algorithms \citep{DBLP:conf/fsttcs/ChatterjeeAMR08}.
Beyond distillation, our results, supported by Thm.~\ref{thm:latent-execution-objective}, suggest that our WAE-MDP can be used as a \emph{general latent space learner} for RL, further opening possibilities to combine RL and formal methods \emph{online} when no formal model is a priori known, and  address this way safety in  RL with guarantees. 

\subsubsection*{Reproducibility Statement}
We referenced in the main text the Appendix parts presenting the proofs or additional details of every claim, Assumption, Lemma, and Theorem occurring in the paper.
In addition, Appendix~\ref{appendix:experiments} is dedicated to the presentation of the setup, hyperparameters, and other extra details required for reproducing the results of Section~\ref{sec:experiments}.
We provide the source code of the implementation of our approach in Supplementary material \footnote{available at \url{https://github.com/florentdelgrange/wae_mdp}}, and we also provide the models saved during training that we used for model checking (i.e., reproducing the results of Table~\ref{table:evaluation}).
Additionally, we present in a notebook ({\small\texttt{evaluation.html}}) videos demonstrating how our distilled policies behave in each environment, and code snippets showing how we formally verified the policies. 

\subsubsection*{Acknowledgments}
This research received funding from the Flemish Government (AI Research Program) and was supported by the DESCARTES iBOF project. G.A. Perez is also supported by the Belgian FWO “SAILor” project (G030020N).
We thank Raphael Avalos for his valuable feedback during the preparation of this manuscript.

\bibliography{references}
\bibliographystyle{iclr2023_conference}

\newpage
\appendix
\section*{Appendix}
\section{Theoretical Details on WAE-MDPs} \label{appendix:wae-mdp}
\subsection{The Discrepancy Measure}\label{appendix:discrepancy-measure}
We show that reasoning about discrepancy measures between stationary distributions is sound in the context of infinite interaction and episodic RL processes.
Let $\decoder$ be a parameterized behavioral model that generate finite traces from the original environment (i.e., finite sequences of state, actions, and rewards of the form $\tuple{\seq{\state}{T}, \seq{\action}{T - 1}, \seq{\reward}{T - 1}}$), our goal is to find the best parameter $\decoderparameter$ which offers the most accurate reconstruction of the original traces issued from the original model $\mdp$ operating under $\policy$. 
We demonstrate that, in the limit, considering the OT between trace-based distributions is equivalent to considering the OT between the stationary distribution of $\mdp_{\policy}$ and the one of the behavioral model.
Let us first formally recall the definition of the metric on the \emph{transitions} of the MDP.

\smallparagraph{Raw transition distance.}~%
Assume that $\states$, $\actions$, and $\images{\rewards}$ are respectively equipped with metric $\distance_{\states}$, $\distance_{\actions}$, and $\distance_{\rewards}$,
let us define the \emph{raw transition distance metric} over \emph{transitions} of $\mdp$, i.e., tuples of the form $\tuple{\state, \action, \reward, \state'}$, as $\transitiondistance \colon \states \times \actions \times \images{\rewards} \times \states$,
\begin{equation}
    \tracedistance\fun{\tuple{\state_1, \action_1, \reward_1, \state'_1}, \tuple{\state_2, \action_2, \reward_2, \state'_2}} = \distance_\states\fun{\state_1, \state_2} + \distance_{\actions}\fun{\action_1, \action_2} + \distance_{\rewards}\fun{\reward_1, \reward_2} + \distance_{\states}\fun{\state_1', \state'_2}.\notag
\end{equation}
In a nutshell, $\transitiondistance$ consists of the sum of the distance of all the transition components.
Note that it is a well defined distance metric since the sum of distances preserves the identity of indiscernible, symmetry, and triangle inequality. 

\smallparagraph{Trace-based distributions.}~
The raw distance $\tracedistance$ allows to reason about \emph{transitions}, we thus consider the distribution over \emph{transitions which occur along traces of length $T$} to compare the dynamics of the original and behavioral models:
\begin{align*}
    \mathcal{D}_\policy\left[ T \right]\fun{\state, \action, \reward, \state'} &= \frac{1}{T} \sum_{t = 1}^{T} \stationary{\policy}^t\fun{\state \mid \sinit} \cdot \policy\fun{\action \mid \state} \cdot \probtransitions\fun{\state' \mid \state, \action} \cdot \condition{\reward=\rewards\fun{\state, \action}}, \text{ and} \\
    \mathcal{P}_\decoderparameter[T]\fun{\state, \action, \reward, \state'} &= \frac{1}{T} \sum_{t = 1}^{T} \expectedsymbol{\seq{\state}{t}, \seq{\action}{t - 1}, \seq{\reward}{t - 1} \sim \decoder[t]}{\condition{\tuple{\indexof{\state}{t - 1}, \indexof{\action}{t - 1}\indexof{\reward}{t - 1}, \indexof{\state}{t}}= \tuple{\state, \action, \reward, \state'}}},
\end{align*} 
where $\decoder[T]$ denotes the distribution over traces of length $T$, generated from $\decoder$.
Intuitively, $\nicefrac{1}{T} \cdot \sum_{t = 1}^{T} \stationary{\policy}^t\fun{\state \mid \sinit}$
can be seen as the fraction of the time spent in $\state$
along traces of length $T$, starting from the initial state \citeAR{10.5555/280952}.
Therefore, drawing $\tuple{\state, \action, \reward, \state'} \sim \mathcal{D}_\policy\left[ T \right]$ trivially follows: 
it is equivalent to drawing $\state$ from $\nicefrac{1}{T} \cdot \sum_{t = 1}^{T} \stationary{\policy}^t\fun{\cdot \mid \sinit}$, then respectively $\action$ and $\state'$ from $\policy\fun{\cdot \mid \state}$ and $\probtransitions\fun{\cdot \mid \state, \action}$, to finally obtain $\reward = \rewards\fun{\state, \action}$. 
Given $T \in \N$, our objective is to minimize the Wasserstein distance between those distributions:
$\wassersteindist{\tracedistance}{\mathcal{D}_{\policy}[T]}{\mathcal{P}_{\decoderparameter}[T]}$.
%
The following Lemma enables optimizing the Wasserstein distance between the original MDP and the behavioral model 
when traces are drawn from episodic RL processes or infinite interactions \citep{DBLP:conf/nips/Huang20}.

\begin{lemma}\label{lemma:wasserstein-transition-limit}
Assume the existence of a stationary behavioral model $\stationarydecoder = \lim_{T \to \infty} \mathcal{P}_{\decoderparameter}[T]$, then
\begin{equation*}
    \lim_{T \to \infty} \wassersteindist{\tracedistance}{\mathcal{D}_{\policy}[T]}{\mathcal{P}_{\decoderparameter}[T]} = \wassersteindist{\tracedistance}{\stationary{\policy}}{\stationarydecoder}.
\end{equation*}
\end{lemma}
\begin{proof}
First, note that $\nicefrac{1}{T} \cdot \sum_{t = 1}^T \stationary{\policy}^t\fun{\cdot \mid \sinit}$ weakly converges to $ \stationary{\policy}$ as $T$  goes to $\infty$ \citeAR{10.5555/280952}. The result follows then from \citep[Corollary~6.9]{Villani2009}.
\end{proof}

\subsection{Dealing with Discrete Actions}\label{appendix:discrete-action-space}
When the policy $\policy$ executed in $\mdp$ already produces discrete actions, learning a latent action space is, in many cases, not necessary.
We thus make the following assumptions:
%
%
\begin{assumption} \label{assumption:action-encoder}
Let $\policy \colon \states \to \distributions{\actions^{\star}}$ be the policy executed in $\mdp$ and assume that $\actions^{\star}$ is a (tractable) finite set.
Then, we take $\latentactions=\actions^{\star}$ and $\actionencoder$ as the 
identity function,
i.e., $\actionencoder \colon \latentstates \times \actions^{\star} \to \actions^{\star}, \, \tuple{\latentstate, \action^{\star}} \mapsto \action^{\star}$.
\end{assumption}
\begin{assumption} \label{assumption:action-decoder}
Assume that the action space of the original environment $\mdp$ is a (tractable) finite set.
Then, we take $\embeda_{\decoderparameter}$ as the identity function, i.e., $\embeda_{\decoderparameter} = \actionencoder$.
\end{assumption}
Concretely, the premise of Assumption~\ref{assumption:action-encoder} typically occurs when $\policy$ is a latent policy (see Rem.~\ref{rmk:latent-policy-execution}) \emph{or} when $\mdp$ has already a discrete action space.
In the latter case, Assumption~\ref{assumption:action-encoder} and~\ref{assumption:action-decoder} amount to setting $\latentactions = \actions$ and ignoring the action encoder and embedding function.
Note that if a discrete action space is too large, or if the user explicitly aims for a coarser space, then the former is not considered as tractable, these assumptions do not hold, and the action space is abstracted to a smaller set of discrete actions.

\subsection{Proof of Lemma~\ref{lem:regularizer-upper-bound}}
\smallparagraph{Notation.}~%
From now on, we write $\embed_{\encoderparameter}\fun{\latentstate, \latentaction \mid \state, \action} = \condition{\embed_\encoderparameter\fun{\state}=\latentstate} \cdot \actionencoder\fun{\latentaction \mid \latentstate, \action}$.


\regularizerlemma*
\begin{proof}
Wasserstein is compliant with the triangular inequality \citep{Villani2009}, which gives us:
\begin{align*}
    \wassersteindist{\transitiondistance}{\encoder}{ \latentstationaryprior}
    \leq \wassersteindist{\transitiondistance}{\encoder}{\originaltolatentstationary{}} + \wassersteindist{\distance_{\latentstates}}{\originaltolatentstationary{}}{ \latentstationaryprior},
\end{align*}
where
\begin{align}
&\wassersteindist{\transitiondistance}{\originaltolatentstationary{}}{ \latentstationaryprior} \tag{note that $\wassersteinsymbol{\tracedistance}$ is reflexive \citep{Villani2009}} \\
=& \sup_{f \in \Lipschf{\transitiondistance}} \expectedsymbol{\state, \action \sim \stationary{\policy}}\expectedsymbol{\latentstate, \latentaction \sim \embed_{\encoderparameter}\fun{\sampledot \mid \state, \action}}\expectedsymbol{\latentstate' \sim \latentprobtransitions_{\decoderparameter}\fun{\sampledot \mid \latentstate, \latentaction}} f\fun{\latentstate, \latentaction, \latentstate'}
- \expectedsymbol{\latentstate \sim \latentstationaryprior}\expectedsymbol{\latentaction \sim \latentpolicy_{\decoderparameter}\fun{\sampledot \mid \latentstate}}\expectedsymbol{\latentstate' \sim \latentprobtransitions_{\decoderparameter}\fun{\sampledot \mid \latentstate, \latentaction}} f\fun{\latentstate, \latentaction, \latentstate'}\text{, and} \notag \\[.5em]
& \wassersteindist{\transitiondistance}{\encoder}{\originaltolatentstationary{}} \notag \\
=&
    \sup_{f \in \Lipschf{\transitiondistance}} \expectedsymbol{\state, \action, \state' \sim \stationary{\policy}}\expectedsymbol{\latentstate, \latentaction, \latentstate' \sim \embed_{\encoderparameter}\fun{\sampledot \mid \state, \action, \state'}} f\fun{\latentstate, \latentaction, \latentstate'} - \expectedsymbol{\state, \action \sim \stationary{\policy}}\expectedsymbol{\latentstate, \latentaction \sim \embed_{\encoderparameter}\fun{\sampledot \mid \state, \action}}\expectedsymbol{\latentstate' \sim \latentprobtransitions_{\decoderparameter}\fun{\sampledot \mid \latentstate, \latentaction}} f\fun{\latentstate, \latentaction, \latentstate'} \label{eq:proof-lemma-wdist-triangular-inequality-1} \\
    \leq& \expectedsymbol{\state, \action \sim \stationary{\policy}}\expectedsymbol{\latentstate, \latentaction \sim \embed_{\encoderparameter}\fun{\sampledot \mid \state, \action}} \; \sup_{f \in \Lipschf{\transitiondistance}} \expectedsymbol{\state' \sim \probtransitions\fun{\sampledot \mid \state, \action}} f\fun{\latentstate, \latentaction, \embed_{\encoderparameter}\fun{\state'}} - \expectedsymbol{\latentstate' \sim \latentprobtransitions_{\decoderparameter}\fun{\sampledot \mid \latentstate, \latentaction}} f\fun{\latentstate, \latentaction, \latentstate'} \label{eq:proof-lemma-wdist-triangular-inequality-2} \\
    =& \expectedsymbol{\state, \action \sim \stationary{\policy}}\expectedsymbol{\latentaction \sim \embed_{\encoderparameter}^{\scriptscriptstyle \actions}\fun{\sampledot \mid \embed_{\encoderparameter}\fun{\state}, \action}} \; \sup_{f \in \Lipschf{\distance_{\latentstates}}} \expectedsymbol{\latentstate' \sim \embed_{\encoderparameter}\probtransitions\fun{\sampledot \mid \state, \action}} f\fun{\latentstate'} - \expectedsymbol{\latentstate' \sim \latentprobtransitions_{\decoderparameter}\fun{\sampledot \mid \embed_{\encoderparameter}\fun{\state}, \latentaction}} f\fun{\latentstate'} \label{eq:proof-lemma-wdist-triangular-inequality-3} \\
    =& \expectedsymbol{\state, \action \sim \stationary{\policy}}\expectedsymbol{\latentaction \sim \embed_{\encoderparameter}^{\scriptscriptstyle \actions}\fun{\sampledot \mid \embed_{\encoderparameter}\fun{\state}, \action}} \wassersteindist{\distance_{\latentstates}}{\embed_{\encoderparameter}\probtransitions\fun{\sampledot \mid \state, \action}}{\latentprobtransitions_{\decoderparameter}\fun{\sampledot \mid \embed_{\encoderparameter}\fun{\state}, \latentaction}}. \notag
\end{align}
We pass from Eq.~\ref{eq:proof-lemma-wdist-triangular-inequality-1} to Eq.~\ref{eq:proof-lemma-wdist-triangular-inequality-2} by the Jensen's inequality.
To see how we pass from Eq.~\ref{eq:proof-lemma-wdist-triangular-inequality-2} to Eq.~\ref{eq:proof-lemma-wdist-triangular-inequality-3}, notice
that
\begin{align*}
    \Lipschf{\transitiondistance} &= \left\{f \colon f\fun{\latentstate_1, \latentaction_1, \latentstate_1'} - f\fun{\latentstate_2, \latentaction_2, \latentstate_2'} \leq \transitiondistance\fun{\tuple{\latentstate_1, \latentaction_1, \latentstate_1'}, \tuple{\latentstate_2, \latentaction_2, \latentstate_2'}}\right\} \\
    \Lipschf{\transitiondistance} &= \set{f \colon f\fun{\latentstate_1, \latentaction_1, \latentstate_1'} - f\fun{\latentstate_2, \latentaction_2, \latentstate_2'} \leq \distance_{\latentstates}\fun{\latentstate_1, \latentstate_2} + \distance_{\latentactions}\fun{\latentaction_1, \latentaction_2} + \distance_{\latentstates}\fun{\latentstate_1', \latentstate_2'}}
\end{align*}
Observe now that $\latentstate$ and $\latentaction$ are fixed in the supremum computation of Eq.~\ref{eq:proof-lemma-wdist-triangular-inequality-2}: all functions $f$ considered and taken from $\Lipschf{\transitiondistance}\,$ are of the form $f\fun{\latentstate, \latentaction, \sampledot}$.
It is thus sufficient to consider the supremum over functions from the following subset of $\Lipschf{\transitiondistance}\,$:
\begin{align*}
    &\set{f \colon f\fun{\latentstate, \latentaction, \latentstate_1'} - f\fun{\latentstate, \latentaction, \latentstate_2'} \leq \distance_{\latentstates}\fun{\latentstate, \latentstate} + \distance_{\latentactions}\fun{\latentaction, \latentaction} +  \distance_{\latentstates}\fun{\latentstate_1', \latentstate_2'}} \\ \tag{for $\latentstate$, $\latentaction$ drawn from $\embed_{\encoderparameter}$} \\
    =&\set{f \colon f\fun{\latentstate, \latentaction, \latentstate_1'} - f\fun{\latentstate, \latentaction, \latentstate_2'} \leq \distance_{\latentstates}\fun{\latentstate_1', \latentstate_2'}}\\
    =&\set{f \colon f\fun{\latentstate_1'} - f\fun{\latentstate_2'} \leq \distance_{\latentstates}\fun{\latentstate_1', \latentstate_2'}} \\
    =& \Lipschf{\distance_{\latentstates}}.
\end{align*}
Given a state $\state \in \states$ in the original model, the (parallel) execution of $\policy$ in $\latentmdp_{\decoderparameter}$ is enabled through $\policy\fun{\action, \latentaction \mid \state} = \policy\fun{\action \mid \state} \cdot \actionencoder\fun{\latentaction \mid \embed_{\encoderparameter}\fun{\state}, \action}$ (cf. Fig.~\ref{subfig:latent-fow-distillation}).
The local transition loss resulting from this interaction is:
\begin{align*}
\localtransitionloss{\stationary{\policy}} &= \expectedsymbol{\state, \tuple{\action, \latentaction} \sim \stationary{\policy}} \wassersteindist{\distance_{\latentstates}}{\embed_{\encoderparameter}\probtransitions\fun{\sampledot \mid \state, \action}}{\latentprobtransitions\fun{\sampledot \mid \embed_{\encoderparameter}\fun{\state}, \latentaction}}\\
&= \expectedsymbol{\state, \action \sim \stationary{\policy}}\expectedsymbol{\latentaction \sim \embed_{\encoderparameter}^{\scriptscriptstyle \actions}\fun{\sampledot \mid \embed_{\encoderparameter}\fun{\state}, \action}} \wassersteindist{\distance_{\latentstates}}{\embed_{\encoderparameter}\probtransitions\fun{\sampledot \mid \state, \action}}{\latentprobtransitions_{\decoderparameter}\fun{\sampledot \mid \embed_{\encoderparameter}\fun{\state}, \latentaction}},
\end{align*}
which finally yields the result.
\end{proof}

\subsection{Proof of Theorem~\ref{thm:latent-execution-objective}}
Before proving Theorem~\ref{thm:latent-execution-objective}, let us introduce the following Lemma, that explicitly demonstrates the link between the transition regularizer of the \waemdp objective and the local transition loss required to obtain the guarantees related to the bisimulation bounds of Eq.~\ref{eq:bidistance-bound}.

\begin{lemma} \label{lem:regularizer-to-local-transition-loss}
Assume that traces are generated by running $\latentpolicy \in \latentpolicies$ in the original environment, then
\begin{align*}
     \expectedsymbol{\state, \action^\star \sim \stationary{\latentpolicy}}\expectedsymbol{\latentaction \sim \embed_{\encoderparameter}^{\actions}\fun{\sampledot \mid \embed_{\encoderparameter}\fun{\state}, \action^{\star}}} \wassersteindist{\distance_{\latentstates}}{\embed_{\encoderparameter}\probtransitions\fun{\sampledot \mid \state, \action^{\star}}}{\latentprobtransitions_{\decoderparameter}\fun{\sampledot \mid \embed_{\encoderparameter}\fun{\state}, \latentaction}} = \localtransitionloss{\stationary{\latentpolicy}}.
\end{align*}
\end{lemma}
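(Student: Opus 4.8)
The plan is to unfold both sides until each becomes the expectation of the \emph{same} optimal-transport cost against the \emph{same} measure, so the identity follows by inspection. I would start by recalling that, under the running choice $D = \wassersteinsymbol{\distance_{\latentstates}}$ fixed in Section~\ref{sec:background}, the local transition loss appearing on the right-hand side is
\[
\localtransitionloss{\stationary{\latentpolicy}} = \expectedsymbol{\state, \latentaction \sim \stationary{\latentpolicy}} \wassersteindist{\distance_{\latentstates}}{\embed_{\encoderparameter}\probtransitions\fun{\sampledot \mid \state, \latentaction}}{\latentprobtransitions_{\decoderparameter}\fun{\sampledot \mid \embed_{\encoderparameter}\fun{\state}, \latentaction}},
\]
where $\stationary{\latentpolicy} \in \distributions{\states \times \latentactions}$ is the stationary distribution over state--latent-action pairs generated by running $\latentpolicy \in \latentpolicies$ in $\mdp$.

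Next I would rewrite the first argument of the transport cost on the left-hand side. By Remark~\ref{rmk:latent-policy-execution}, when $\latentpolicy$ is executed in $\mdp$ the action actually applied at a state $\state$ is the decoded action $\action^{\star} = \embeda_{\decoderparameter}\fun{\embed_{\encoderparameter}\fun{\state}, \latentaction}$, so $\probtransitions\fun{\sampledot \mid \state, \action^{\star}} = \probtransitions\fun{\sampledot \mid \state, \latentaction}$ and therefore $\embed_{\encoderparameter}\probtransitions\fun{\sampledot \mid \state, \action^{\star}} = \embed_{\encoderparameter}\probtransitions\fun{\sampledot \mid \state, \latentaction}$; this makes the integrand on the left-hand side coincide, pointwise, with the integrand of $\localtransitionloss{\stationary{\latentpolicy}}$ above. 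Then I would identify the two sampling schemes: drawing $\state, \action^{\star} \sim \stationary{\latentpolicy}$ and recovering $\latentaction \sim \actionencoder\fun{\sampledot \mid \embed_{\encoderparameter}\fun{\state}, \action^{\star}}$ is precisely the parallel execution of $\latentpolicy$ pictured in Fig.~\ref{subfig:latent-fow-distillation}, and, since the environment action is $\action^{\star} = \embeda_{\decoderparameter}\fun{\embed_{\encoderparameter}\fun{\state}, \latentaction}$, re-embedding it through $\actionencoder$ returns the latent action that $\latentpolicy$ chose; hence this procedure produces exactly the joint law $\stationary{\latentpolicy} \in \distributions{\states \times \latentactions}$ entering the definition of $\localtransitionloss{}$. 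In the discrete-action regime of Assumptions~\ref{assumption:action-encoder} and~\ref{assumption:action-decoder}, where $\actionencoder$ and $\embeda_{\decoderparameter}$ are identities, this identification is immediate. Substituting these two facts, the left-hand side becomes $\expectedsymbol{\state, \latentaction \sim \stationary{\latentpolicy}} \wassersteindist{\distance_{\latentstates}}{\embed_{\encoderparameter}\probtransitions\fun{\sampledot \mid \state, \latentaction}}{\latentprobtransitions_{\decoderparameter}\fun{\sampledot \mid \embed_{\encoderparameter}\fun{\state}, \latentaction}}$, which is $\localtransitionloss{\stationary{\latentpolicy}}$, and the Lemma can then be quoted inside the proof of Theorem~\ref{thm:latent-execution-objective} to replace the transition regularizer by the local transition loss that powers the bisimulation bounds of Eq.~\ref{eq:bidistance-bound}.

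The step I expect to be the main obstacle is the measure identification: making rigorous that re-embedding the decoded action via $\actionencoder$ reproduces the same state--latent-action stationary measure as the one recorded directly while running $\latentpolicy$, i.e., that the parallel execution of a latent policy is consistent. This rests on the compatibility of $\actionencoder$ and $\embeda_{\decoderparameter}$ built into the latent flow of Remark~\ref{rmk:latent-policy-execution}; once that is in place, the remainder is bookkeeping with the definition of $\localtransitionloss{}$ and the tower property of conditional expectation.
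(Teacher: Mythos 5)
Your proof is correct and lands on the same argument as the paper's: because the traces are generated by a latent policy, the recorded actions are already latent actions, Assumption~\ref{assumption:action-encoder} makes $\actionencoder$ the identity on $\latentactions = \actions^{\star}$, the inner expectation over $\latentaction$ degenerates, and the remaining expression is the definition of $\localtransitionloss{\stationary{\latentpolicy}}$ with $D = \wassersteinsymbol{\distance_{\latentstates}}$ and the integrands identified via the notational convention of Remark~\ref{rmk:latent-policy-execution}. That is exactly the paper's (three-line) proof.

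One caution on the framing: the route you present as primary --- that $\action^{\star}$ is the \emph{decoded} environment action $\embeda_{\decoderparameter}\fun{\embed_{\encoderparameter}\fun{\state}, \latentaction}$ and that re-encoding it through $\actionencoder$ ``returns the latent action that $\latentpolicy$ chose'' --- is not justified in general, since nothing in the model forces $\actionencoder$ and $\embeda_{\decoderparameter}$ to be mutually inverse; a decoded action could be re-encoded to a different latent action with positive probability, and the measure identification you flag as the main obstacle would genuinely fail there. The paper sidesteps this entirely: under Assumption~\ref{assumption:action-encoder} the stationary distribution $\stationary{\latentpolicy}$ is already a law on $\states \times \latentactions$ and no decoding/re-encoding round trip ever occurs. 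Since you do invoke that assumption as the closing step, your proof is sound; just note that only Assumption~\ref{assumption:action-encoder} is needed (not Assumption~\ref{assumption:action-decoder}, which concerns the decoder for discrete original action spaces), and that the assumption-based collapse is the whole proof rather than a special case of a more general re-embedding argument.
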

\begin{proof}
Since the latent policy $\latentpolicy$ generates latent actions, Assumption~\ref{assumption:action-encoder} holds, which means:
\begin{align*}
     & \expectedsymbol{\state, \action^{\star} \sim \stationary{\latentpolicy}}\expectedsymbol{\latentaction \sim \embed_{\encoderparameter}^{\actions}\fun{\sampledot \mid \embed_{\encoderparameter}\fun{\state}, \action^{\star}}} \wassersteindist{\distance_{\latentstates}}{\embed_{\encoderparameter}\probtransitions\fun{\sampledot \mid \state, \action^{\star}}}{\latentprobtransitions_{\decoderparameter}\fun{\sampledot \mid \embed_{\encoderparameter}\fun{\state}, \latentaction}}\\
     =& \expectedsymbol{\state, \latentaction \sim \stationary{\latentpolicy}} \wassersteindist{\distance_{\latentstates}}{\embed_{\encoderparameter}\probtransitions\fun{\sampledot \mid \state, \latentaction}}{\latentprobtransitions_{\decoderparameter}\fun{\sampledot \mid \embed_{\encoderparameter}\fun{\state}, \latentaction}}\\
     =& \, \localtransitionloss{\stationary{\latentpolicy}}.
\end{align*}
\end{proof}

\latentexecutionobjective*

\begin{proof}
We distinguish two cases: (i) the case where the original and latent models share the same discrete action space, i.e., $\actions = \latentactions$, and (ii) the case where the two have a different action space (e.g., when the original action space is continuous), i.e., $\actions \neq \latentactions$.
In both cases, the local losses term follows by definition of $\localrewardloss{\stationary{\latentpolicy}}$ and Lemma~\ref{lem:regularizer-to-local-transition-loss}.
When $\distance_{\rewards}$ is the Euclidean distance (or even the $L_1$ distance since rewards are scalar values), the expected reward distance occurring in the expected trace-distance term $\tracedistance$ in the \waemdp objective directly translates to the local loss $\localrewardloss{\stationary{\latentpolicy}}$.
Concerning the local transition loss, in case~(i), the result naturally follows from  Assumption~\ref{assumption:action-encoder} and~\ref{assumption:action-decoder}.
In case~(ii), only Assumption~\ref{assumption:action-encoder} holds, meaning the action encoder term of the \waemdp objective is ignored, but not the action embedding term appearing in $G_{\decoderparameter}$.
Given $\state \sim \stationary{\latentpolicy}$, recall that executing $\latentpolicy$ in $\mdp$ amounts to embedding the produced latent actions $\latentaction \sim \latentpolicy\fun{\sampledot \mid \embed_{\encoderparameter}\fun{\state}}$ back to the original environment via $\action = \embeda_{\decoderparameter}\fun{\embed_{\encoderparameter}\fun{\state}, \latentaction}$ (cf. Rem.~\ref{rmk:latent-policy-execution} and Fig.~\ref{subfig:latent-flow-guarantees}).
Therefore, the projection of $\tracedistance\fun{\tuple{\state, \action, \reward, \state'}, G_{\decoderparameter}\fun{\embed_{\encoderparameter}\fun{\state}, \latentaction, \embed_{\encoderparameter}\fun{\state'}}}$ on the action space $\actions$ is $\distance_{\actions}\fun{\embeda_{\decoderparameter}\fun{\embed_{\encoderparameter}\fun{\state}, \latentaction}, \embeda_{\decoderparameter}\fun{\embed_{\encoderparameter}\fun{\state}, \latentaction}} = 0$, for $\reward = \rewards\fun{\state, \action}$ and $\state' \sim \probtransitions\fun{\sampledot \mid \state, \action}$.
\end{proof}

\subsection{Optimizing the Transition Regularizer}\label{appendix:tractable-transition-regularizer}
In the following, we detail how we derive a tractable form of 
our transition regularizer $\localtransitionloss{\stationary{\policy}}\fun{\wassersteinparameter}$.
Optimizing the ground Kantorovich-Rubinstein duality is enabled via the introduction of a parameterized, $1$-Lipschitz network 
$\transitionlossnetwork$, that need to be trained to attain the supremum of the dual:
\begin{align*}
\localtransitionloss{\stationary{\policy}}\fun{\wassersteinparameter} &= \expectedsymbol{\state, \action \sim \stationary{\policy}}\expectedsymbol{\latentstate, \latentaction \sim \embed_{\encoderparameter}\fun{\sampledot \mid \state, \action}} \; \max_{\wassersteinparameter \colon \transitionlossnetwork \in \Lipschf{\distance_{\latentstates}}} \; \expectedsymbol{\latentstate' \sim \embed_{\encoderparameter}\probtransitions\fun{\sampledot \mid \state, \action}} \transitionlossnetwork\fun{\latentstate'} - \expectedsymbol{\latentstate' \sim \latentprobtransitions_{\decoderparameter}\fun{\sampledot \mid \latentstate, \latentaction}} \transitionlossnetwork\fun{\latentstate'}. 
\end{align*}
Under this form, optimizing $\localtransitionloss{\stationary{\policy}}\fun{\wassersteinparameter}$ is intractable due to the expectation over the maximum. 
The following Lemma allows us rewriting $\localtransitionloss{\stationary{\policy}}$ to make the optimization tractable through Monte Carlo estimation.
\begin{lemma}
Let $\measurableset, \varmeasurableset$ be two measurable sets, $\stationary{} \in \distributions{\measurableset}$, $P \colon \measurableset \to \distributions{\varmeasurableset}, Q \colon \measurableset \to \distributions{\varmeasurableset}$, and $\distance \colon \varmeasurableset \times \varmeasurableset \to \mathopen[ 0, + \infty \mathclose[$ be a metric on $\varmeasurableset$. Then,
\begin{align*}
    &\expectedsymbol{x \sim \stationary{}}{\wassersteindist{\distance}{P\fun{\sampledot \mid x}}{Q\fun{\sampledot \mid x}}} = \sup_{\varphi \colon \measurableset \to \Lipschf{\distance}} \expected{x \sim \stationary{}}{\expectedsymbol{y_1 \sim P\fun{\sampledot \mid x}} \varphi\fun{x}\fun{y_1} - \expectedsymbol{y_2 \sim Q\fun{\sampledot \mid x}} \varphi\fun{x}\fun{y_2}}
\end{align*}
\begin{proof}
Our objective is to show that 
\begin{align}
    &\expected{x \sim \stationary{}}{\sup_{f \in \Lipschf{\distance}} \expectedsymbol{y_1 \sim P\fun{\sampledot \mid x}}\varphi\fun{y_1}\fun{x} - \expectedsymbol{y_2 \sim Q\fun{\sampledot \mid x}} \varphi\fun{y_2}\fun{x}} \label{eq:expected-cond-wasserstein}\\
    =& \sup_{\varphi \colon \measurableset \to \Lipschf{\distance}} \expected{x \sim \stationary{}}{\expectedsymbol{y_1 \sim P\fun{\sampledot \mid x}} \varphi\fun{x}\fun{y_1} - \expectedsymbol{y_2 \sim Q\fun{\sampledot \mid x}} \varphi\fun{x}\fun{y_2}} \label{eq:sup-expected-cond-wasserstein}
\end{align}
We start with (\ref{eq:expected-cond-wasserstein})~$\leq$~(\ref{eq:sup-expected-cond-wasserstein}). Construct $\varphi^\star \colon \measurableset \to \Lipschf{\distance}$ by setting for all $x \in \measurableset$
\begin{equation*}
    \varphi^\star\fun{x} = \arg\sup_{f \in \Lipschf{\distance}} \expectedsymbol{y_1 \sim P\fun{\sampledot \mid x}} f\fun{y_1} - \expectedsymbol{y_2 \sim Q\fun{\sampledot \mid x}} f\fun{y_2}.
\end{equation*}
This gives us
\begin{align*}
    &\expected{x \sim \stationary{}}{\sup_{f \in \Lipschf{\distance}} \expectedsymbol{y_1 \sim P\fun{\sampledot \mid x}} f\fun{y_1} - \expectedsymbol{y_2 \sim Q\fun{\sampledot \mid x}} f\fun{y_2}} \\
    =& \expected{x \sim \stationary{}}{\expectedsymbol{y_1 \sim P\fun{\sampledot \mid x}} \varphi^{\star}\fun{x}\fun{y_1} - \expectedsymbol{y_2 \sim Q\fun{\sampledot \mid x}} \varphi^{\star}\fun{x}\fun{y_2}} \\
    \leq& \sup_{\varphi \colon \measurableset \to \Lipschf{\distance}} \expected{x \sim \stationary{}}{\expectedsymbol{y_1 \sim P\fun{\sampledot \mid x}} \varphi\fun{x}\fun{y_1} - \expectedsymbol{y_2 \sim Q\fun{\sampledot \mid x}} \varphi\fun{x}\fun{y_2}}.
\end{align*}
It remains to show that (\ref{eq:expected-cond-wasserstein})~$\geq$~(\ref{eq:sup-expected-cond-wasserstein}).
Take
\[
\varphi^{\star} = \arg\sup_{\varphi \colon \measurableset \to \Lipschf{\distance}} \expected{x \sim \stationary{}}{\expectedsymbol{y_1 \sim P\fun{\sampledot \mid x}} \varphi\fun{x}\fun{y_1} - \expectedsymbol{y_2 \sim Q\fun{\sampledot \mid x}} \varphi\fun{x}\fun{y_2}}.
\]
Then, for all $x \in \measurableset$, we have $\varphi^{\star}\fun{x} \in \Lipschf{\distance}$ which means:
\begin{align*}
    & \expectedsymbol{y_1 \sim P\fun{\sampledot \mid x}} \varphi^{\star}\fun{x}\fun{y_1} - \expectedsymbol{y_2 \sim Q\fun{\sampledot \mid x}} \varphi^{\star}\fun{x}\fun{y_2} \\
    \leq & \sup_{f \in \Lipschf{\distance}} \expectedsymbol{y_1 \sim P\fun{\sampledot \mid x}} f\fun{y_1} - \expectedsymbol{y_2 \sim Q\fun{\sampledot \mid x}} f\fun{y_2}
\end{align*}
This finally yields
\begin{align*}
    &\expected{x \sim \stationary{}}{\expectedsymbol{y_1 \sim P\fun{\sampledot \mid x}} \varphi^{\star}\fun{x}\fun{y_1} - \expectedsymbol{y_2 \sim Q\fun{\sampledot \mid x}} \varphi^{\star}\fun{x}\fun{y_2}} \\
    \leq& \expected{x \sim \stationary{}}{\sup_{f \in \Lipschf{\distance}} \expectedsymbol{y_1 \sim P\fun{\sampledot \mid x}} f\fun{y_1} - \expectedsymbol{y_2 \sim Q\fun{\sampledot \mid x}} f\fun{y_2}}.
\end{align*}
\end{proof}
\end{lemma}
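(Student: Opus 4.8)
The plan is to reduce the statement to an interchange of a supremum with an expectation, via the Kantorovich--Rubinstein duality applied pointwise in $x$. For each fixed $x \in \measurableset$ the duality gives $\wassersteindist{\distance}{P\fun{\sampledot \mid x}}{Q\fun{\sampledot \mid x}} = \sup_{f \in \Lipschf{\distance}} \Phi\fun{x, f}$ with $\Phi\fun{x, f} = \expectedsymbol{y_1 \sim P\fun{\sampledot \mid x}} f\fun{y_1} - \expectedsymbol{y_2 \sim Q\fun{\sampledot \mid x}} f\fun{y_2}$, so the left-hand side is $\expected{x \sim \stationary{}}{\sup_{f \in \Lipschf{\distance}} \Phi\fun{x, f}}$ and the claim becomes $\expected{x \sim \stationary{}}{\sup_{f \in \Lipschf{\distance}} \Phi\fun{x, f}} = \sup_{\varphi \colon \measurableset \to \Lipschf{\distance}} \expected{x \sim \stationary{}}{\Phi\fun{x, \varphi\fun{x}}}$, the outer supremum being over measurable maps $\varphi$. (I would assume $P\fun{\sampledot \mid x}, Q\fun{\sampledot \mid x}$ have finite first moments so that all terms are finite --- automatic here since $\varmeasurableset = \latentstates$ is bounded.)

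The inequality ``$\geq$'' is immediate: for any measurable $\varphi$ and any $x$, $\varphi\fun{x} \in \Lipschf{\distance}$, so $\Phi\fun{x, \varphi\fun{x}} \leq \sup_{f} \Phi\fun{x, f}$; integrating against $\stationary{}$ and taking the supremum over $\varphi$ gives it. For ``$\leq$'' I would construct a measurable near-optimal selection. Fix $\varepsilon > 0$ and let $S_\varepsilon\fun{x} = \set{f \in \Lipschf{\distance} \colon \Phi\fun{x, f} \geq \sup_{g} \Phi\fun{x, g} - \varepsilon}$, which is nonempty for every $x$. As $\varmeasurableset$ is Polish, $\Lipschf{\distance}$ (with the topology of uniform convergence on compacta) is Polish; $\Phi$ is jointly measurable in $\fun{x, f}$ (because $x \mapsto P\fun{\sampledot \mid x}$, $x \mapsto Q\fun{\sampledot \mid x}$ are measurable and $\Phi$ is continuous in $f$) and $x \mapsto \sup_g \Phi\fun{x, g}$ is measurable, so $S_\varepsilon$ is a measurable multifunction with nonempty closed values. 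The Kuratowski--Ryll-Nardzewski theorem then yields a measurable selector $\varphi_\varepsilon$, whence $\expected{x \sim \stationary{}}{\sup_f \Phi\fun{x, f}} \leq \expected{x \sim \stationary{}}{\Phi\fun{x, \varphi_\varepsilon\fun{x}}} + \varepsilon \leq \sup_{\varphi} \expected{x \sim \stationary{}}{\Phi\fun{x, \varphi\fun{x}}} + \varepsilon$; let $\varepsilon \to 0$. (When the pointwise supremum is attained --- e.g.\ $\varmeasurableset$ compact --- applying the same theorem with $\varepsilon = 0$ gives an exact maximising $\varphi^\star$, which is the informal route underlying the stated proof.)

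The main obstacle is precisely the \emph{measurability} of $x \mapsto \varphi_\varepsilon\fun{x}$: for a single $f$ the pointwise bound $\Phi\fun{x, f} \leq \sup_g \Phi\fun{x, g}$ is trivial, but to dominate the expectation of a pointwise supremum one must assemble the near-optimal $f$'s into one measurable map, which is exactly what the selection theorem supplies under the joint-measurability hypotheses. In the WAE-MDP application of Appendix~\ref{appendix:tractable-transition-regularizer} this is vacuous: $\varmeasurableset$ is a (relaxed) discrete latent space, $P = \embed_{\encoderparameter}\probtransitions\fun{\sampledot \mid \state, \action}$ and $Q = \latentprobtransitions_{\decoderparameter}\fun{\sampledot \mid \latentstate, \latentaction}$ depend continuously on their conditioning arguments, and $\varphi$ is realised by the $1$-Lipschitz network $\transitionlossnetwork\fun{\state, \action, \latentstate, \latentaction, \sampledot}$, continuous in $\fun{\state, \action, \latentstate, \latentaction}$ by construction.
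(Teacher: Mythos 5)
Your proof is correct and follows the same skeleton as the paper's: reduce the claim to interchanging $\sup_{f}$ with $\expectedsymbol{x \sim \stationary{}}$, prove the easy inequality by the pointwise bound $\Phi\fun{x, \varphi\fun{x}} \leq \sup_{f \in \Lipschf{\distance}} \Phi\fun{x, f}$, and prove the other by exhibiting a single map $\varphi$ that is (near-)optimal at every $x$. The genuine difference is in how that map is produced. The paper simply writes $\varphi^{\star}\fun{x} = \arg\sup_{f \in \Lipschf{\distance}} \Phi\fun{x, f}$, which tacitly assumes both that the supremum is attained for each $x$ and that the resulting selection $x \mapsto \varphi^{\star}\fun{x}$ is measurable, so that the outer expectation is even defined; neither point is argued. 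You replace this with an $\varepsilon$-near-optimal selection obtained from the Kuratowski--Ryll-Nardzewski theorem and then let $\varepsilon \to 0$, which removes the attainment issue and supplies the measurability --- correctly identified as the only nontrivial content of the lemma. Two small touch-ups: since $\Phi\fun{x, f}$ is invariant under $f \mapsto f + c$, you should normalise (e.g.\ fix $f\fun{y_0} = 0$ for some base point $y_0$) before claiming $\Lipschf{\distance}$ is Polish in the compact-open topology, and the finite-first-moment hypothesis you add is indeed needed for the quantities to be finite (and is harmless in the paper's bounded latent-space setting). Your version buys rigour at the cost of invoking a selection theorem; the paper's buys brevity at the cost of an unjustified measurable $\arg\sup$.
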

\begin{corollary}\label{cor:min-sup-regularizer}
Let  $\stationary{\policy}$ be a stationary distribution of $\mdp_\policy$ and $\measurableset = \states \times \actions \times \latentstates \times \latentactions$, then
\begin{align*}
\localtransitionloss{\stationary{\policy}} = \sup_{\varphi \colon \measurableset \to \scriptscriptstyle \Lipschf{\distance_{\latentstates}}} \expectedsymbol{\state, \action, \state' \sim \stationary{\policy}} \expected{\latentstate, \latentaction \sim \embed_{\encoderparameter}\fun{\sampledot \mid \state, \action}}{\varphi\fun{\state, \action, \latentstate, \latentaction}\fun{\embed_{\encoderparameter}\fun{\state'}} - \expectedsymbol{\latentstate' \sim \latentprobtransitions_{\decoderparameter}\fun{\sampledot \mid \latentstate, \action}}{\varphi\fun{\state, \action, \latentstate, \latentaction}}\fun{\latentstate'}}
\end{align*}
\end{corollary}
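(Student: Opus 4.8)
The plan is to obtain Corollary~\ref{cor:min-sup-regularizer} as a direct application of the preceding Lemma, after putting $\localtransitionloss{\stationary{\policy}}$ in the shape used there. Recall from the end of the proof of Lemma~\ref{lem:regularizer-upper-bound} that, when $\policy$ is executed in $\mdp$ and run in parallel in $\latentmdp_{\decoderparameter}$ through the action encoder, one has
\[
\localtransitionloss{\stationary{\policy}} = \expectedsymbol{\state, \action \sim \stationary{\policy}}\expectedsymbol{\latentaction \sim \actionencoder\fun{\sampledot \mid \embed_{\encoderparameter}\fun{\state}, \action}} \wassersteindist{\distance_{\latentstates}}{\embed_{\encoderparameter}\probtransitions\fun{\sampledot \mid \state, \action}}{\latentprobtransitions_{\decoderparameter}\fun{\sampledot \mid \embed_{\encoderparameter}\fun{\state}, \latentaction}}.
\]
First I would package the conditioning data into a single index $x = \tuple{\state, \action, \latentstate, \latentaction} \in \measurableset = \states \times \actions \times \latentstates \times \latentactions$, with $\latentstate = \embed_{\encoderparameter}\fun{\state}$, distributed according to the law $\stationary{}$ obtained by drawing $\tuple{\state,\action} \sim \stationary{\policy}$, setting $\latentstate = \embed_{\encoderparameter}\fun{\state}$, and $\latentaction \sim \actionencoder\fun{\sampledot \mid \latentstate, \action}$; and I would set the two Markov kernels $P\fun{\sampledot \mid x} = \embed_{\encoderparameter}\probtransitions\fun{\sampledot \mid \state, \action}$ and $Q\fun{\sampledot \mid x} = \latentprobtransitions_{\decoderparameter}\fun{\sampledot \mid \latentstate, \latentaction}$, both valued in $\distributions{\latentstates}$, so that the display above reads $\localtransitionloss{\stationary{\policy}} = \expectedsymbol{x \sim \stationary{}}{\wassersteindist{\distance_{\latentstates}}{P\fun{\sampledot \mid x}}{Q\fun{\sampledot \mid x}}}$.

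Next I would invoke the preceding Lemma with $\varmeasurableset = \latentstates$ and $\distance = \distance_{\latentstates}$, which immediately gives
\[
\localtransitionloss{\stationary{\policy}} = \sup_{\varphi \colon \measurableset \to \Lipschf{\distance_{\latentstates}}} \expectedsymbol{x \sim \stationary{}}\left[ \expectedsymbol{y_1 \sim P\fun{\sampledot \mid x}} \varphi\fun{x}\fun{y_1} - \expectedsymbol{y_2 \sim Q\fun{\sampledot \mid x}} \varphi\fun{x}\fun{y_2} \right].
\]
It then remains to unfold the notation. Drawing $y_1 \sim P\fun{\sampledot \mid x} = \embed_{\encoderparameter}\probtransitions\fun{\sampledot \mid \state, \action}$ is, by definition of the pushforward, the same as drawing $\state' \sim \probtransitions\fun{\sampledot \mid \state, \action}$ and putting $y_1 = \embed_{\encoderparameter}\fun{\state'}$; together with $\tuple{\state,\action} \sim \stationary{\policy}$ this is exactly $\state, \action, \state' \sim \stationary{\policy}$ with $y_1 = \embed_{\encoderparameter}\fun{\state'}$. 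The map $\tuple{\state,\action}\mapsto\tuple{\latentstate,\latentaction}$ becomes the draw $\latentstate, \latentaction \sim \embed_{\encoderparameter}\fun{\sampledot \mid \state, \action}$, and substituting $y_2 \sim Q\fun{\sampledot \mid x} = \latentprobtransitions_{\decoderparameter}\fun{\sampledot \mid \latentstate, \latentaction}$ turns the bracketed expression into $\varphi\fun{\state, \action, \latentstate, \latentaction}\fun{\embed_{\encoderparameter}\fun{\state'}} - \expectedsymbol{\latentstate' \sim \latentprobtransitions_{\decoderparameter}\fun{\sampledot \mid \latentstate, \latentaction}} \varphi\fun{\state, \action, \latentstate, \latentaction}\fun{\latentstate'}$, which is the claimed identity.

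The substantive work is already contained in the preceding Lemma; the only thing needing a little care here — the mild analogue of the measurable-selection point used in that Lemma's proof — is the bookkeeping around the index space: confirming that $\stationary{}$ is a bona fide probability measure on $\measurableset$ (it is supported on the graph $\{\latentstate = \embed_{\encoderparameter}\fun{\state}\}$), that $P$ and $Q$ are legitimate measurable kernels on $\measurableset$, and that enlarging the domain of $\varphi$ by the coordinate $\latentstate$, which is a deterministic image of $\state$, does not change the supremum, since the integrand only ever evaluates $\varphi$ on that graph. I do not expect any genuine obstacle beyond this.
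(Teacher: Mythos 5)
Your proposal is correct and follows exactly the route the paper intends: the corollary is obtained by instantiating the preceding lemma with $\varmeasurableset = \latentstates$, $\distance = \distance_{\latentstates}$, the index measure $\stationary{}$ on $\measurableset$ induced by $\stationary{\policy}$ and $\embed_{\encoderparameter}$, and the kernels $P\fun{\sampledot \mid x} = \embed_{\encoderparameter}\probtransitions\fun{\sampledot \mid \state, \action}$ and $Q\fun{\sampledot \mid x} = \latentprobtransitions_{\decoderparameter}\fun{\sampledot \mid \latentstate, \latentaction}$, starting from the expression for $\localtransitionloss{\stationary{\policy}}$ established at the end of the proof of Lemma~\ref{lem:regularizer-upper-bound}. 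Your bookkeeping remarks (the measure being supported on the graph $\latentstate = \embed_{\encoderparameter}\fun{\state}$, and reading the $\action$ in the corollary's inner expectation as $\latentaction$) are exactly the right points of care, and nothing further is needed.
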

Consequently, we rewrite $\localtransitionloss{\stationary{\policy}}\fun{\wassersteinparameter}$ as a tractable maximization:
\begin{equation*}
    \localtransitionloss{\stationary{\policy}}\fun{\wassersteinparameter} = \max_{\wassersteinparameter \colon \transitionlossnetwork \in \Lipschf{\distance_{\scriptscriptstyle \latentstates}}} \; \expectedsymbol{\state, \action, \state' \sim \stationary{\policy}} \expected{\latentstate, \latentaction \sim \embed_{\encoderparameter}\fun{\sampledot \mid \state, \action}}{\transitionlossnetwork\fun{\state, \action, \latentstate, \latentaction, \embed_{\encoderparameter}\fun{\state'}} - \expectedsymbol{\latentstate' \sim \latentprobtransitions_{\decoderparameter}\fun{\sampledot \mid \latentstate, \latentaction}} \transitionlossnetwork\fun{\state, \action, \latentstate, \latentaction, \latentstate'}}.
\end{equation*}

\subsection{The Latent Metric}\label{appendix:latent-metric}
In the following, we show that considering the Euclidean distance for $\tracedistance$ and $\distance_{\latentstates}$ in the latent space for optimizing  the regularizers $\steadystateregularizer{{\policy}}$ and $\localtransitionloss{\stationary{\policy}}$ is Lipschitz equivalent to considering a continuous $\temperature$-relaxation of the \emph{discrete metric} $\condition{\neq}\fun{\vx, \vy} = \condition{\vx \neq \vy}$.
Consequently, this also means it is consistently sufficient to enforce $1$-Lispchitzness via the gradient penalty approach of \citet{DBLP:conf/nips/GulrajaniAADC17} during training to maintain the guarantees linked to the regularizers in the zero-temperature limit, when the spaces are discrete.

\begin{lemma}
Let $\distance$ be the usual Euclidean distance and $\distance_{\temperature}\colon \mathopen[0, 1 \mathclose]^n \times \mathopen[0, 1 \mathclose]^n \to \mathopen[0, 1\mathclose[, \, \tuple{\vx, \vy} \mapsto \frac{\distance\fun{\vx, \vy}}{\temperature + \distance\fun{\vx, \vy}}$ for $\temperature \in \mathopen]0, 1\mathclose]$ and $n \in \N$, then $\distance_{\temperature}$ is a distance metric.
\end{lemma}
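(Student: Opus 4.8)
The claim is that $\distance_\temperature(\vx,\vy) = \frac{\distance(\vx,\vy)}{\temperature + \distance(\vx,\vy)}$ is a metric on $\mathopen[0,1\mathclose]^n$ for each fixed $\temperature \in \mathopen]0,1\mathclose]$. I would proceed by verifying the four defining axioms, of which non-negativity, the identity of indiscernibles, and symmetry are immediate: $\distance_\temperature$ is a composition of the nonnegative quantity $\distance(\vx,\vy)$ with the map $t \mapsto t/(\temperature+t)$ on $\mathopen[0,\infty\mathclose[$, which is zero exactly at $t=0$; since $\distance$ already satisfies these three axioms, so does $\distance_\temperature$. The only real work is the triangle inequality.

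For the triangle inequality, the plan is to reduce everything to a single-variable monotonicity-plus-concavity argument. Let $g(t) = t/(\temperature + t)$ for $t \geq 0$. I would first record two properties of $g$: (i) $g$ is nondecreasing (its derivative is $\temperature/(\temperature+t)^2 > 0$), and (ii) $g$ is subadditive, i.e.\ $g(a+b) \leq g(a) + g(b)$ for all $a,b \geq 0$. Property (ii) follows from the standard fact that a nonnegative, nondecreasing, concave function vanishing at $0$ is subadditive — and $g$ is concave since $g''(t) = -2\temperature/(\temperature+t)^3 < 0$. (Alternatively one can verify subadditivity directly: $g(a)+g(b) - g(a+b) = \frac{a}{\temperature+a} + \frac{b}{\temperature+b} - \frac{a+b}{\temperature+a+b}$, and clearing denominators shows the numerator is a sum of manifestly nonnegative terms.) Then for any $\vx,\vy,\vz$, using the triangle inequality for $\distance$ and monotonicity of $g$,
\[
\distance_\temperature(\vx,\vz) = g\bigl(\distance(\vx,\vz)\bigr) \leq g\bigl(\distance(\vx,\vy) + \distance(\vy,\vz)\bigr) \leq g\bigl(\distance(\vx,\vy)\bigr) + g\bigl(\distance(\vy,\vz)\bigr) = \distance_\temperature(\vx,\vy) + \distance_\temperature(\vy,\vz),
\]
which is exactly what is needed. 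I would also note in passing that the codomain is indeed $\mathopen[0,1\mathclose[$: on $\mathopen[0,1\mathclose]^n$ the Euclidean distance is at most $\sqrt{n}$ (finite), so $g(\distance(\vx,\vy)) < 1$ always, and more precisely $g$ maps $\mathopen[0,\infty\mathclose[$ into $\mathopen[0,1\mathclose[$.

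The main (and only mild) obstacle is establishing subadditivity of $g$ cleanly; everything else is bookkeeping. I expect the concavity argument to be the slickest route, but since the paper's audience may prefer a self-contained computation, I would present the direct algebraic verification that $\frac{a}{\temperature+a} + \frac{b}{\temperature+b} \geq \frac{a+b}{\temperature+a+b}$ — expanding both sides over the common denominator $(\temperature+a)(\temperature+b)(\temperature+a+b)$ and checking that the difference of numerators is a polynomial in $a,b,\temperature$ with nonnegative coefficients. This keeps the proof elementary and avoids invoking differentiability. No results beyond the metric axioms for the Euclidean distance are required.
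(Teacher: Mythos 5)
Your proof is correct, and it takes a genuinely different (and more modular) route than the paper's. The paper proves the triangle inequality by brute force: it writes out $\frac{\distance\fun{\vx,\vy}}{\temperature+\distance\fun{\vx,\vy}}+\frac{\distance\fun{\vy,\vz}}{\temperature+\distance\fun{\vy,\vz}}\geq\frac{\distance\fun{\vx,\vz}}{\temperature+\distance\fun{\vx,\vz}}$, clears all denominators, and observes that the resulting polynomial inequality splits into the $\temperature^2$-scaled triangle inequality for $\distance$ plus manifestly nonnegative leftover terms. You instead factor the argument through the function $g(t)=\nicefrac{t}{(\temperature+t)}$, proving it is nondecreasing and subadditive (via concavity, or the direct check that $g(a)+g(b)-g(a+b)$ has numerator $2\temperature ab+a^2b+ab^2\geq 0$ over the common denominator), and then chain $g\fun{\distance\fun{\vx,\vz}}\leq g\fun{\distance\fun{\vx,\vy}+\distance\fun{\vy,\vz}}\leq g\fun{\distance\fun{\vx,\vy}}+g\fun{\distance\fun{\vy,\vz}}$. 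Your decomposition is the classical metric-transform argument: it isolates where the triangle inequality of $\distance$ is used (only in the monotonicity step) and makes clear the result holds for any metric composed with any nonnegative, nondecreasing, subadditive $g$ vanishing exactly at $0$ — a generality the paper's computation obscures. The paper's version buys only self-containment at the level of elementary algebra, which your fallback direct verification of subadditivity matches anyway. One small point: you also explicitly check non-negativity and the codomain bound $\distance_\temperature<1$, which the paper omits; that is harmless but slightly more complete.
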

\begin{proof}
The function $\distance_{\temperature}$ is a metric iff it satisfies the following axioms:
\begin{enumerate}
    \item \emph{Identity of indiscernibles}:
    If $\vx=\vy$, then $\distance_{\lambda}\fun{\vx, \vy} = \frac{\distance\fun{\vx, \vy}}{\temperature + \distance\fun{\vx, \vy}} = \frac{0}{\temperature + 0} = 0$ since $\distance$ is a distance metric.
    Assume now that $\distance_\temperature\fun{\vx, \vy} = 0$ and take $\alpha = \distance\fun{\vx, \vy}$, for any $\vx, \vy$. Thus, $\alpha \in \left[0, +\infty\right[$ and $0 = \frac{\alpha}{\temperature + \alpha}$ is only achieved in $\alpha = 0$, which only occurs whenever $\vx = \vy$ since $\distance$ is a distance metric.
    \item \emph{Symmetry}:
    \begin{align*}
        \distance_\temperature\fun{\vx, \vy} &= \frac{\distance\fun{\vx, \vy}}{\temperature + \distance\fun{\vx, \vy}}\\
        &= \frac{\distance\fun{\vy, \vx}}{\temperature + \distance\fun{\vy, \vx}} \tag{$\distance$ is a distance metric} \\
        &= \distance_\temperature\fun{\vy, \vx}
    \end{align*}
    \item \emph{Triangle inequality}: 
    Let $\vx, \vy, \vz \in \mathopen[0, 1\mathclose]^n$, the triangle inequality holds iff
    \begin{align}
        &&\distance_\temperature\fun{\vx, \vy} + \distance_{\temperature}\fun{\vy, \vz} &\geq \distance_{\temperature}\fun{\vx, \vz}& \label{eq:triangle-ineq-d-temperature} \\
        &\equiv& \frac{\distance\fun{\vx, \vy}}{\temperature + \distance\fun{\vx, \vy}} + \frac{\distance\fun{\vy, \vz}}{\temperature + \distance\fun{\vy, \vz}} & \geq \frac{\distance\fun{\vx, \vz}}{\temperature + \distance\fun{\vx, \vz}}& \notag \\
        &\equiv& \frac{\temperature \distance\fun{\vx, \vy} + \temperature \distance\fun{\vy, \vz} + 2\distance\fun{\vx, \vy} \distance\fun{\vy, \vz}}{\temperature^2 + \temperature \distance\fun{\vx, \vy} + \temperature \distance\fun{\vy, \vz} + \distance\fun{\vx, \vy}\distance\fun{\vy, \vz}} & \geq \frac{\distance\fun{\vx, \vz}}{\temperature + \distance\fun{\vx, \vz}}& \notag
    \end{align}
    \begin{align}
        &\equiv& &\temperature^2 \distance\fun{\vx, \vy} + \temperature^2 \distance\fun{\vy, \vz} + 2 \temperature \distance\fun{\vx, \vy}\distance\fun{\vy, \vz} + \notag \\
        &&&\temperature\distance\fun{\vx, \vy} \distance\fun{\vx, \vz} + \temperature\distance\fun{\vy, \vz} \distance\fun{\vx, \vz} + 2 \distance\fun{\vx, \vy} \distance\fun{\vy, \vz}\distance\fun{\vx, \vz} & \notag \\
        && \geq& \temperature^2 \distance\fun{\vx, \vz} + \temperature\distance\fun{\vx, \vy} \distance\fun{\vx, \vz} + \temperature\distance\fun{\vy, \vz} \distance\fun{\vx, \vz} + \distance\fun{\vx, \vy} \distance\fun{\vy, \vz} \distance\fun{\vx, \vz} \tag{cross-product, with $\temperature > 0$ and $\images{\distance} \in \mathopen[0, \infty\mathclose[$} \\
        &\equiv& &\temperature^2 \distance\fun{\vx, \vy} + \temperature^2 \distance\fun{\vy, \vz} + 2 \temperature \distance\fun{\vx, \vy}\distance\fun{\vy, \vz} + 
        \distance\fun{\vx, \vy} \distance\fun{\vy, \vz}\distance\fun{\vx, \vz}
        \geq \temperature^2 \distance\fun{\vx, \vz} \label{eq:triangle-ineq-relaxation}
    \end{align}
    Since $\distance$ is a distance metric, we have 
    \begin{equation}
        \temperature^2 \distance\fun{\vx, \vy} + 
        \temperature^2 \distance\fun{\vy, \vz} \geq 
        \temperature^2 \distance\fun{\vx, \vz} \label{eq:triangle-ineq-scaled}
    \end{equation}
    and $\images{\distance}\in \mathopen[0, \infty\mathclose[$, meaning
    \begin{equation}
        2 \temperature \distance\fun{\vx, \vy}\distance\fun{\vy, \vz} + 
        \distance\fun{\vx, \vy} \distance\fun{\vy, \vz}\distance\fun{\vx, \vz} \geq 0 \label{eq:triangle-ineq-upper-bound}
    \end{equation}
\end{enumerate}
By Eq.~\ref{eq:triangle-ineq-scaled} and~\ref{eq:triangle-ineq-upper-bound}, the inequality of Eq.~\ref{eq:triangle-ineq-relaxation} holds.
Furthermore, the fact that Eq.~\ref{eq:triangle-ineq-d-temperature} and~\ref{eq:triangle-ineq-relaxation} are equivalent yields the result.
\end{proof}
\begin{lemma}
Let $\distance$, $\distance_{\temperature}$ as defined above, then
(i) $ \distance_{\temperature} \xrightarrow[\temperature \to 0]{} \condition{\neq}$
and (ii) $\distance, \distance_\temperature$ are Lipschitz-equivalent.
\end{lemma}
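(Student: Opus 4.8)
The claim has two parts. For (i), pointwise convergence $\distance_{\temperature} \xrightarrow[\temperature \to 0]{} \condition{\neq}$, I would simply evaluate the limit on the two exhaustive cases. If $\vx = \vy$ then $\distance\fun{\vx, \vy} = 0$, so $\distance_{\temperature}\fun{\vx, \vy} = 0/(\temperature + 0) = 0$ for every $\temperature > 0$, hence the limit is $0 = \condition{\neq}\fun{\vx, \vy}$. If $\vx \neq \vy$ then $\alpha := \distance\fun{\vx, \vy} > 0$ is a fixed positive constant, and $\distance_{\temperature}\fun{\vx, \vy} = \alpha/(\temperature + \alpha) \to \alpha/\alpha = 1 = \condition{\neq}\fun{\vx, \vy}$ as $\temperature \to 0$. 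This disposes of (i) immediately.

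For (ii), Lipschitz-equivalence of $\distance$ and $\distance_{\temperature}$, I need constants $c_1, c_2 > 0$ (depending on $\temperature$ and on the diameter of the domain $\mathopen[0,1\mathclose]^n$, which is $\sqrt{n}$) such that $c_1 \distance\fun{\vx, \vy} \le \distance_{\temperature}\fun{\vx, \vy} \le c_2 \distance\fun{\vx, \vy}$. Writing $\alpha = \distance\fun{\vx, \vy} \in \mathopen[0, \sqrt{n}\mathclose]$, we have $\distance_{\temperature}\fun{\vx, \vy} = \alpha/(\temperature + \alpha)$. For the upper bound, since $\temperature + \alpha \ge \temperature$, we get $\alpha/(\temperature + \alpha) \le \alpha/\temperature$, so $c_2 = 1/\temperature$ works. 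For the lower bound, since $\alpha \le \sqrt{n}$ we have $\temperature + \alpha \le \temperature + \sqrt{n}$, hence $\alpha/(\temperature + \alpha) \ge \alpha/(\temperature + \sqrt{n})$, so $c_1 = 1/(\temperature + \sqrt{n})$ works. Both constants are strictly positive for $\temperature \in \mathopen]0,1\mathclose]$, which is exactly Lipschitz-equivalence on the bounded domain.

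The only mild subtlety — and the part I'd be most careful about — is that Lipschitz-equivalence genuinely needs the domain to be bounded: on all of $\R^n$ the map $\alpha \mapsto \alpha/(\temperature+\alpha)$ is bounded by $1$ while $\alpha$ is unbounded, so no lower bound $c_1 \alpha \le \distance_{\temperature}$ can hold globally. Since the latent space after relaxation lives in $\mathopen[0,1\mathclose]^n$ (the hypercube of relaxed Bernoullis / concrete variables), the diameter $\sqrt{n}$ bound is available and the argument goes through; I would make this boundedness assumption explicit in the proof. I'd also note monotonicity of $t \mapsto t/(\temperature + t)$ on $\mathopen[0,\infty\mathclose[$ if I wanted to phrase the bounds more slickly, but the direct inequalities above already suffice.

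\medskip

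\noindent\emph{Proof.} \textbf{(i)} Fix $\vx, \vy \in \mathopen[0,1\mathclose]^n$. If $\vx = \vy$, then $\distance\fun{\vx, \vy} = 0$, so $\distance_{\temperature}\fun{\vx, \vy} = 0$ for all $\temperature > 0$ and $\lim_{\temperature \to 0}\distance_{\temperature}\fun{\vx, \vy} = 0 = \condition{\neq}\fun{\vx, \vy}$. If $\vx \neq \vy$, set $\alpha = \distance\fun{\vx, \vy} > 0$; then $\distance_{\temperature}\fun{\vx, \vy} = \frac{\alpha}{\temperature + \alpha} \xrightarrow[\temperature \to 0]{} \frac{\alpha}{\alpha} = 1 = \condition{\neq}\fun{\vx, \vy}$.

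\textbf{(ii)} Let $\temperature \in \mathopen]0,1\mathclose]$ and $\vx, \vy \in \mathopen[0,1\mathclose]^n$, and write $\alpha = \distance\fun{\vx, \vy}$, so $\alpha \in \mathopen[0, \sqrt{n}\mathclose]$ and $\distance_{\temperature}\fun{\vx, \vy} = \frac{\alpha}{\temperature + \alpha}$. Since $\temperature + \alpha \ge \temperature > 0$, we have $\distance_{\temperature}\fun{\vx, \vy} \le \frac{\alpha}{\temperature} = \frac{1}{\temperature}\,\distance\fun{\vx, \vy}$. Since $\alpha \le \sqrt{n}$ gives $\temperature + \alpha \le \temperature + \sqrt{n}$, we have $\distance_{\temperature}\fun{\vx, \vy} \ge \frac{\alpha}{\temperature + \sqrt{n}} = \frac{1}{\temperature + \sqrt{n}}\,\distance\fun{\vx, \vy}$. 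As $\frac{1}{\temperature + \sqrt{n}} > 0$ and $\frac{1}{\temperature} > 0$, this shows
\[
\frac{1}{\temperature + \sqrt{n}}\,\distance\fun{\vx, \vy} \;\le\; \distance_{\temperature}\fun{\vx, \vy} \;\le\; \frac{1}{\temperature}\,\distance\fun{\vx, \vy},
\]
i.e.\ $\distance$ and $\distance_{\temperature}$ are Lipschitz-equivalent on $\mathopen[0,1\mathclose]^n$. \hfill$\qed$
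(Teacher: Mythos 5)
Your proof is correct and follows essentially the same route as the paper: part (i) by direct evaluation of the limit in the two cases (which the paper simply declares straightforward by definition), and part (ii) with exactly the same constants $a = \nicefrac{1}{\fun{\temperature + \sqrt{n}}}$ and $b = \nicefrac{1}{\temperature}$ obtained from bounding $\nicefrac{1}{\fun{\temperature + \distance\fun{\vx,\vy}}}$ on the bounded domain $\mathopen[0,1\mathclose]^n$. Your explicit remark that boundedness of the domain is what makes the lower Lipschitz constant possible is a worthwhile clarification the paper leaves implicit.
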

\begin{proof}
Part (i) is straightforward by definition of $\distance_{\temperature}$.
Distances $\distance$ and $\distance_{\temperature}$ are Lispchitz equivalent if and only if $\exists a, b > 0$ such that $\forall \vx, \vy \in \mathopen[ 0, 1 \mathclose]^n$,
\begin{alignat*}{3}
    &\,& a \cdot \distance\fun{\vx, \vy} &\leq &\distance_\temperature\fun{\vx, \vy} \,\,\,\,\,\, &\leq  b \cdot \distance\fun{\vx, \vy} \\
    & \equiv & a \cdot \distance\fun{\vx, \vy} &\leq &\frac{\distance\fun{\vx, \vy}}{\temperature + \distance\fun{\vx, \vy}} &\leq  b \cdot \distance\fun{\vx, \vy} \\
    & \equiv & a & \leq & \frac{1}{\temperature + \distance\fun{\vx, \vy}} &\leq b
\end{alignat*}
Taking $a = \frac{1}{\temperature + \sqrt{n}}$ and $b = \frac{1}{\temperature}$ yields the result.
\end{proof}
\begin{corollary}
For all $\beta \geq \nicefrac{1}{\temperature}$, $\state \in \states$, $\action \in \actions$, $\latentstate \in \latentstates$, and $\latentaction \in \latentactions$, we have
\begin{enumerate}
    \item $\wassersteindist{\distance_{\temperature}}{\originaltolatentstationary{}}{\latentstationaryprior} \leq \beta \cdot \wassersteindist{\distance}{\originaltolatentstationary{}}{\latentstationaryprior}$ \label{enum:steady-state-regularizer-distance}
    \item $\wassersteindist{\distance_{\temperature}}{\embed_{\encoderparameter}\probtransitions\fun{\sampledot\mid \state, \action}}{\latentprobtransitions_{\decoderparameter}\fun{\sampledot \mid \latentstate, \latentaction}} \leq \beta \cdot \wassersteindist{\distance}{\embed_{\encoderparameter}\probtransitions\fun{\sampledot\mid \state, \action}}{\latentprobtransitions_{\decoderparameter}\fun{\sampledot \mid \latentstate, \latentaction}}$\label{enum:transition-regularizer-distance}
\end{enumerate}
\end{corollary}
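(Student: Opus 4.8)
The plan is to reduce both inequalities to a single elementary monotonicity property of the Wasserstein distance in its ground metric, and then invoke the upper Lipschitz bound $\distance_{\temperature} \leq \nicefrac{1}{\temperature} \cdot \distance$ obtained in the previous Lemma (its constant $b = \nicefrac{1}{\temperature}$).

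First I would record the following fact: if $\distance_1, \distance_2$ are metrics on a common measurable set $\measurableset$ with $\distance_1\fun{\vx, \vy} \leq c \cdot \distance_2\fun{\vx, \vy}$ for all $\vx, \vy \in \measurableset$ and some $c > 0$, then $\wassersteindist{\distance_1}{P}{Q} \leq c \cdot \wassersteindist{\distance_2}{P}{Q}$ for all $P, Q \in \distributions{\measurableset}$. This is immediate from the primal (optimal transport) definition: for every coupling $\coupling \in \couplings{P}{Q}$ one has $\expectedsymbol{\vx, \vy \sim \coupling} \distance_1\fun{\vx, \vy} \leq c \cdot \expectedsymbol{\vx, \vy \sim \coupling} \distance_2\fun{\vx, \vy}$, so taking the infimum over $\couplings{P}{Q}$ on both sides, together with $\inf c f = c \inf f$ for $c > 0$, yields the claim.

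Then I would apply this with $\distance_1 = \distance_{\temperature}$, $\distance_2 = \distance$, and $c = \nicefrac{1}{\temperature}$, which is legitimate since the preceding Lemma gives $\distance_{\temperature} \leq \nicefrac{1}{\temperature} \cdot \distance$ on $\mathopen[0, 1\mathclose]^n$. Because Wasserstein distances are non-negative and $\beta \geq \nicefrac{1}{\temperature}$, this yields $\wassersteindist{\distance_{\temperature}}{P}{Q} \leq \nicefrac{1}{\temperature} \cdot \wassersteindist{\distance}{P}{Q} \leq \beta \cdot \wassersteindist{\distance}{P}{Q}$ for any such $P, Q$. Instantiating $\tuple{P, Q} = \tuple{\originaltolatentstationary{}, \latentstationaryprior}$ gives item~1, and $\tuple{P, Q} = \tuple{\embed_{\encoderparameter}\probtransitions\fun{\sampledot \mid \state, \action}, \latentprobtransitions_{\decoderparameter}\fun{\sampledot \mid \latentstate, \latentaction}}$ gives item~2; since these distributions are all supported on the (relaxed) latent state/transition space, the Lemma's bound applies regardless of $\state, \action, \latentstate, \latentaction$, so the universal quantification over them is harmless.

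I do not expect any real obstacle here. The only things to keep in mind are that the dimension-dependent lower constant $a = \nicefrac{1}{\fun{\temperature + \sqrt{n}}}$ plays no role (only the dimension-free bound $\distance_{\temperature} \leq \nicefrac{1}{\temperature}\distance$ is used), and that since we need only an \emph{upper} bound on $\wassersteindist{\distance_{\temperature}}{\sampledot}{\sampledot}$ we can argue entirely with the primal formulation and never touch the Kantorovich dual or the $1$-Lipschitz maximiser.
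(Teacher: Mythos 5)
Your proposal is correct and follows essentially the same route as the paper: both reduce the claim to the pointwise bound $\distance_{\temperature} \leq \nicefrac{1}{\temperature}\cdot\distance \leq \beta\cdot\distance$ from the Lipschitz-equivalence lemma and then invoke monotonicity of the Wasserstein distance in its ground metric. The only difference is that you prove that monotonicity directly from the primal coupling formulation, whereas the paper delegates it to a citation (Lemma~A.4 of \citealt{DBLP:conf/icml/GeladaKBNB19}); your inlined argument is fine.
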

\begin{proof}
By Lipschitz equivalence, taking $\beta \geq \nicefrac{1}{\temperature}$ ensures that $\forall n \in \N$, $\forall \vx, \vy \in \mathopen[0, 1\mathclose]^n, \, \distance_{\temperature}\fun{\vx, \vy} \leq \beta \cdot \distance\fun{\vx, \vy}$.
Moreover, for any distributions $P, Q$, $\wassersteindist{\distance_{\temperature}}{P}{Q} \leq \beta \cdot \wassersteindist{\distance}{P}{Q}$
(cf., e.g., \citealt[Lemma~A.4]{DBLP:conf/icml/GeladaKBNB19} for details).
\end{proof}
In practice, taking the hyperparameter $\beta \geq \nicefrac{1}{\temperature}$ in the \waemdp ensures that minimizing the $\beta$-scaled regularizers w.r.t$.$ $\distance$ also minimizes the regularizers w.r.t$.$ the $\temperature$-relaxation $\distance_\temperature$, being the discrete distribution in the zero-temperature limit.
Note that optimizing over two different $\beta_1, \beta_2$ instead of a unique scale factor $\beta$ is also a good practice to interpolate between the two regularizers.

\section{Experiment Details} \label{appendix:experiments}
The code for conducting and replicating our experiments is available at \url{https://github.com/florentdelgrange/wae_mdp}.
\subsection{Setup}\label{appendix:setup}
We used \textsc{TensorFlow} \texttt{2.7.0} \citepAR{tensorflow2015-whitepaper} to implement the neural network architecture of our \waemdp, \textsc{TensorFlow Probability} \texttt{0.15.0} \citepAR{dillon2017tensorflow} to handle the probabilistic components of the latent model (e.g., latent distributions with reparameterization tricks, masked autoregressive flows, etc.), as well as \textsc{TF-Agents} \texttt{0.11.0} \citepAR{TFAgents} to handle the RL parts of the framework.

Models have been trained on a cluster running under \texttt{CentOS Linux 7 (Core)} composed of a mix of nodes containing Intel processors with the following CPU microarchitectures:
(i) \texttt{10-core INTEL E5-2680v2}, (ii) \texttt{14-core INTEL E5-2680v4}, and (iii) \texttt{20-core INTEL Xeon Gold 6148}.
We used $8$ cores and $32$ GB of memory for each run.

\subsection{Stationary Distribution}\label{appendix:experiments:stationary-distribution}
To sample from the stationary distribution $\stationary{\policy}$ of episodic learning environments operating under $\policy \in \mpolicies{\mdp}$, we implemented the \emph{recursive $\epsilon$-perturbation trick} of \cite{DBLP:conf/nips/Huang20}.
In a nutshell, the reset of the environment is explicitly added to the state space of $\mdp$, which is entered at the end of each episode and left with probability $1 - \epsilon$ to start a new one.
We also added a special atomic proposition $\labelset{reset}$ into $\atomicprops$ to label this reset state and reason about episodic behaviors.
For instance, this allows verifying whether the agent behaves safely during the entire episode, or if it is able to reach a goal before the end of the episode.

\subsection{Environments with initial distribution}
Many environments do not necessarily have a single initial state, but rather an initial distribution over states $d_I \in \distributions{\states}$.
In that case, the results presented in this paper remain unchanged: it suffices to add a dummy state $\state^{\star}$ to the state space $\states \cup \set{\state^{\star}}$ so that $\sinit = \state^{\star}$ with the transition dynamics $\probtransitions\fun{\state' \mid \state^{\star}, \action} = d_I\fun{\state'}$ for any action $\action \in \actions$.
Therefore, each time the reset of the environment is triggered, we make the MDP entering the initial state $\state^{\star}$, then transitioning to $\state'$ according to $d_I$.

\subsection{Latent space distribution}
\begin{figure}
    \centering
    \includegraphics[width=0.32\textwidth]{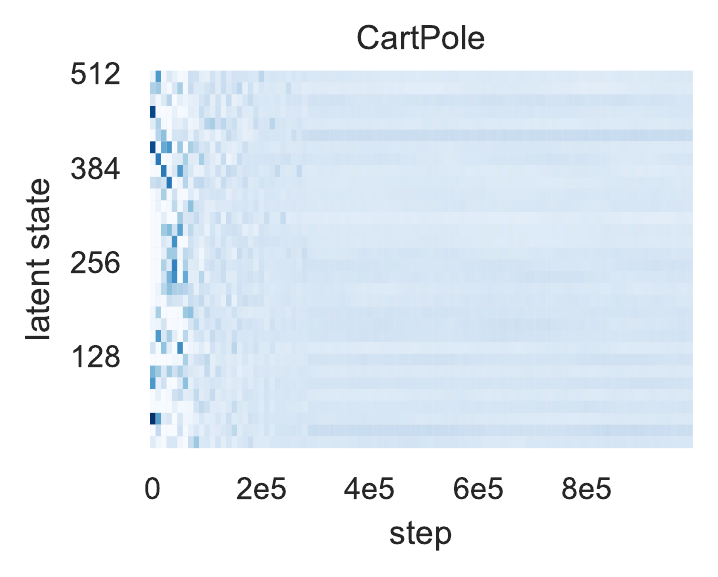}
    \includegraphics[width=0.32\textwidth]{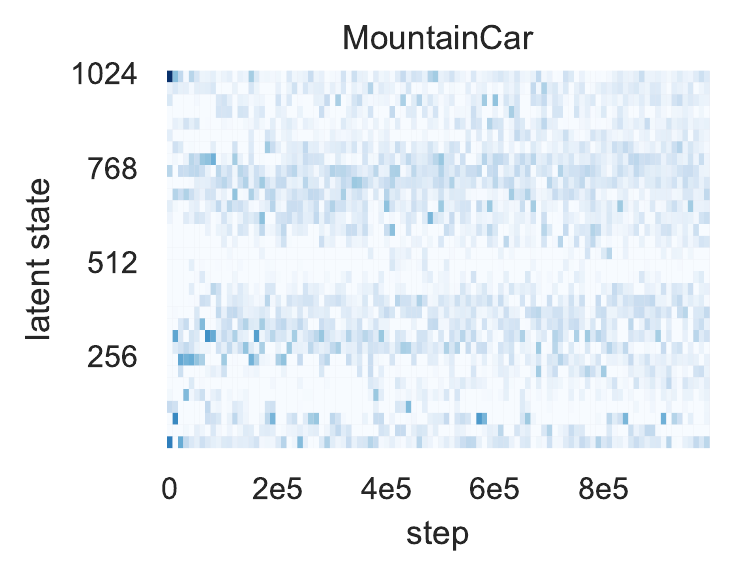}
    \includegraphics[width=0.32\textwidth]{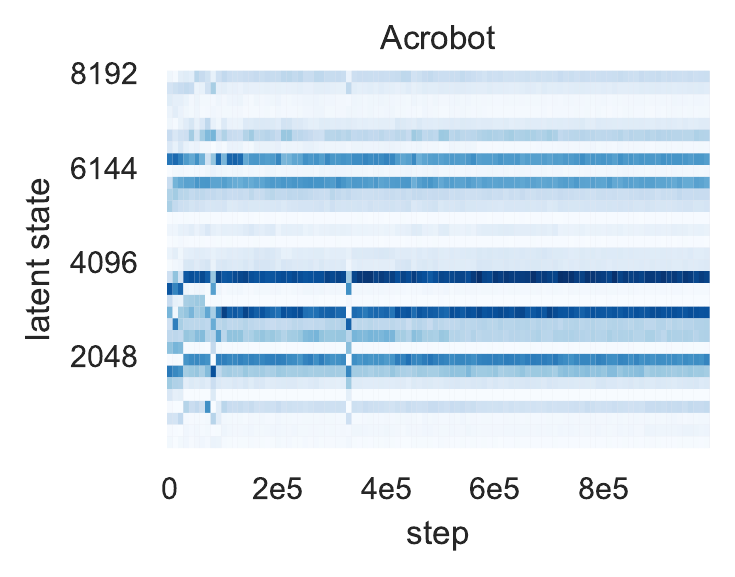}
    \\
    \includegraphics[width=0.32\textwidth]{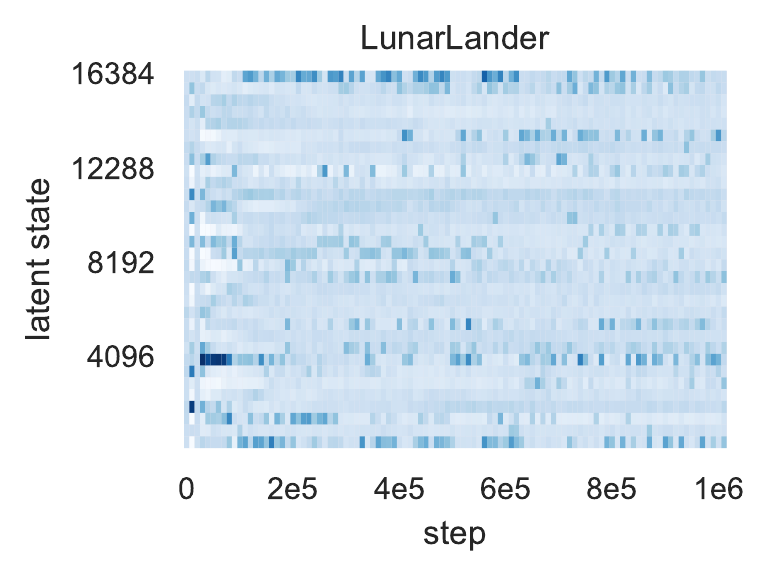}
    \includegraphics[width=0.31\textwidth]{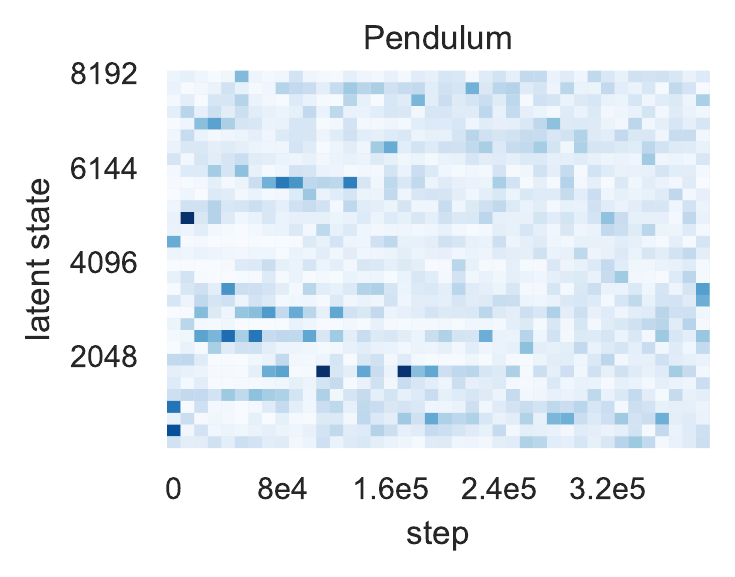}
    \caption{Latent space distribution along training steps. The intensity of the blue hue corresponds to the frequency of latent states produced by $\embed_{\encoderparameter}$ during training.}
    \label{fig:histograms}
\end{figure}
As pointed out in Sect.~\ref{sec:experiments}, posterior collapse is naturally avoided when optimizing \waemdp.
To illustrate that, we report the distribution of latent states produced by $\embed_{\encoderparameter}$ during training (Fig.~\ref{fig:histograms}).
The plots reveal that the latent space generated by mapping original states drawn from $\stationary{\policy}$ during training to $\latentstates$ via $\embed_{\encoderparameter}$ is fairly distributed, for each environment.

\subsection{Distance Metrics: state, action, and reward reconstruction}

The choice of the distance functions $\distance_{\states}$, $\distance_{\actions}$, and $\distance_{\rewards}$, plays a role in the success of our approach.
The usual Euclidean distance is often a good choice for all the transition components, but the scale, dimensionality, and nature of the inputs sometimes require using scaled, normalized, or other kinds of distances to allow the network to reconstruct each component. While we did not observe such requirements in our experiments (where we simply used the Euclidean distance), high dimensional observations (e.g., images) are an example of data which could require tuning the state-distance function in such a way, to make sure that the optimization of the reward or action reconstruction will not be disfavored compared to that of the states.

\subsection{Value difference}
In addition to reporting the quality guarantees of the model along training steps through local losses (cf. Figure~\ref{subfig:pac-losses}), our experiments revealed that the absolute value difference $\norm{V_{\latentpolicy_{\decoderparameter}}}$ between the original and latent models operating under the latent policy  quickly decreases and tends to converge to values in the same range (Figure \ref{fig:value-diff}). 
\begin{figure}
    \centering
    \includegraphics[width=\textwidth]{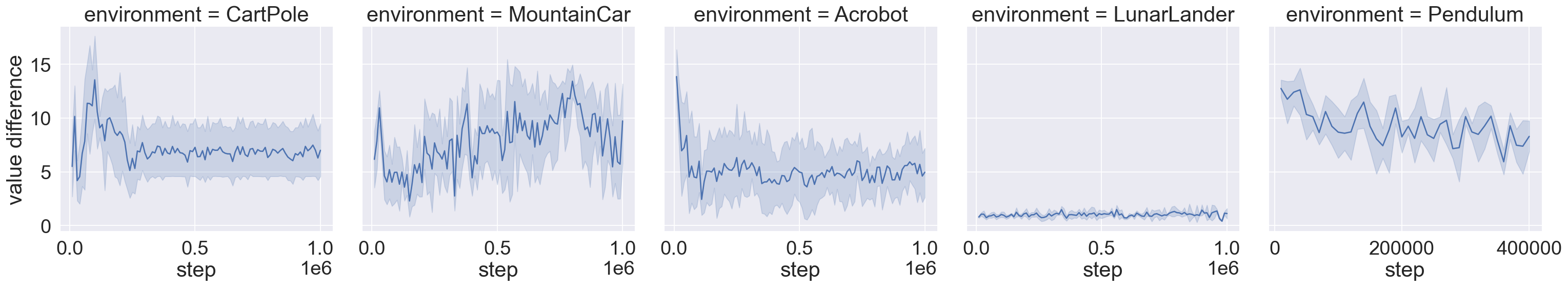}
    \caption{Absolute value difference $\norm{V_{\latentpolicy_{\decoderparameter}}}$ reported along training steps.}
    \label{fig:value-diff}
\end{figure}
This is consistent with the fact that minimizing local losses lead to close behaviors (cf. Eq.~\ref{eq:bidistance-bound}) and that the value function is Lipschitz-continuous w.r.t. $\bidistance_{\latentpolicy_{\decoderparameter}}$ (cf. Section~\ref{sec:background}).

\subsection{Remark on formal verification}\label{rmk:reachability}
Recall that \emph{our bisimulation guarantees come by construction of the latent space.}
Essentially, our learning algorithm spits out a distilled policy and a latent state space which already yields a guaranteed bisimulation distance between the original MDP and the latent MDP.
This is the crux of how we enable verification techniques like model checking. 
In particular, bisimulation guarantees mean that \emph{reachability probabilities in the latent MDP compared to those in the original one are close}.
Furthermore, the value difference of (omega-regular) properties (formulated through mu-calculus) obtained in the two models is bounded by this distance (cf. Sect.~\ref{sec:background} and \citealt{DBLP:conf/fsttcs/ChatterjeeAMR08}). 

\smallparagraph{Reachability is the key ingredient} to model-check MDPs.
Model-checking properties is in most cases performed by reduction to the reachability of components or regions of the MDP: it either consists of (i) iteratively checking the reachability of the parts of the state space satisfying path formulae that comprise the specification, through a tree-like decomposition of the latter (e.g., for (P,R-)CTL properties, cf. \citealt{DBLP:BK08}), or (ii) checking the reachability to the part of the state space of a product of the MDP with a memory structure or an automaton that embeds the omega-regular property --- e.g., for LTL \citep{DBLP:conf/cav/BaierK0K0W16,DBLP:conf/cav/SickertEJK16}, LTLf \citep{DBLP:journals/corr/abs-2009-10883}, or GLTL \citep{DBLP:journals/corr/LittmanTFIWM17}, among other specification formalisms. 
The choice of specification formalism is up to the user and depends on the case study. {The scope of this work is focusing on learning to distill RL policies with bisimulation guarantees \emph{so that model checking can be applied}, in order to reason about the behaviors of the agent}. That being said, \emph{reachability is all we need} to show that model checking can be applied.

\subsection{Hyperparameters}\label{appendix:hyperparams}

\smallparagraph{\waemdp parameters.}~All components (e.g., functions or distribution locations and scales, see Fig.~\ref{fig:wae-architecture}) are represented and inferred by neural networks (multilayer perceptrons).
All the networks share the same architecture (i.e., number of layers and neurons per layer).
We use a simple uniform experience replay of size $10^{6}$ to store the transitions and sample them.
The training starts when the agent has collected $10^{4}$ transitions in $\mdp$.
We used minibatches of size $128$ to optimize the objective and we applied a minibatch update every time the agent executing $\policy$ has performed $16$ steps in $\mdp$.
We use the recursive $\epsilon$-perturbation trick of \cite{DBLP:conf/nips/Huang20} with $\epsilon = \nicefrac{3}{4}$: when an episode ends, it restarts from the initial state with probability $\nicefrac{1}{4}$; before re-starting an episode, the time spent in the reset state labeled with $\labelset{reset}$ follows then the geometric distribution with expectation $\nicefrac{\epsilon}{1 - \epsilon} = 3$.
We chose the same latent state-action space size than \cite{DBLP:journals/corr/abs-2112-09655}, except for LunarLander that we decreased to $\log_2 \left| \latentstates \right| = 14$ and $\left|\latentactions\right| = 3$ to improve the scalability of the verification.

\smallparagraph{VAE-MDPs parameters.}~For the comparison of Sect.~\ref{sec:experiments}, we used the exact same VAE-MDP hyperparameter set as prescribed by \cite{DBLP:journals/corr/abs-2112-09655}, except for the state-action space of LunarLander that we also changed for scalability and fair comparison purpose.%
\footnote{The code for conducting the VAE-MDPs experiments is available at \url{https://github.com/florentdelgrange/vae_mdp} (GNU General Public License v3.0).}

\smallparagraph{Hyperparameter search.}~To evaluate our \waemdp, we realized a search in the parameter space defined in Table~\ref{table:hyperparameter-search}.
The best parameters found (in terms of trade-off between performance and latent quality) are reported in Table~\ref{table:hyperparameters}.
We used two different optimizers for minimizing the loss (referred to as the minimizer) and computing the Wasserstein terms (reffered to as the maximizer).
We used \textsc{Adam} \citepAR{DBLP:journals/corr/KingmaB14} for the two, but we allow for different learning rates $\textsc{Adam}_{\alpha}$ and exponential decays $\textsc{Adam}_{\beta_1}, \textsc{Adam}_{\beta_2}$.
We also found that polynomial decay for $\textsc{Adam}_{\alpha}$ (e.g., to $10^{-5}$ for $4\cdot 10^{5}$ steps) is a good practice to stabilize the experiment learning curves, but is not necessary to obtain high-quality and performing distillation.
Concerning the continuous relaxation of discrete distributions,
we used a different temperature for each distribution, as \cite{DBLP:conf/iclr/MaddisonMT17} pointed out that doing so is valuable to improve the results.
We further followed the guidelines of \citet{DBLP:conf/iclr/MaddisonMT17} to choose the interval of temperatures and did not schedule any annealing scheme (in contrast to VAE-MDPs). 
Essentially, the search reveals that the regularizer scale factors $\beta_{\scalebox{1.1}{$\cdot$}}$ (defining the optimization direction) as well as the encoder and latent transition temperatures are important to improve the performance of distilled policies.
For the encoder temperature, we found a nice spot in $\temperature_{\scriptscriptstyle \embed_{\encoderparameter}} = \nicefrac{2}{3}$, which provides the best performance in general, whereas the choice of $\temperature_{\scriptscriptstyle \latentprobtransitions_{\decoderparameter}}$ and $\beta_{\scalebox{1.1}{$\cdot$}}$ are (latent-) environment dependent.
The importance of the temperature parameters for the continuous relaxation of discrete distributions is consistent with the results of \citep{DBLP:conf/iclr/MaddisonMT17}, revealing that the success of the relaxation depends on the choice of the temperature for the different latent space sizes. 

\begin{table}
\centering
\caption{Hyperparameter search. $\temperature_X$ refers to the temperature used for \waemdp component $X$.}
\label{table:hyperparameter-search}
\resizebox{\columnwidth}{!}{%
\begin{tabular}{@{}ll@{}}
\toprule
Parameter &
  Range \\ \midrule
$\textsc{Adam}_{\alpha}$ (minimizer) &
  $\set{0.0001, 0.0002, 0.0003, 0.001}$ \\
$\textsc{Adam}_{\alpha}$ (maximizer) &
  $\set{0.0001, 0.0002, 0.0003, 0.001}$ \\
$\textsc{Adam}_{\beta_1}$ &
  $\set{0, 0.5, 0.9}$ \\
$\textsc{Adam}_{\beta_2}$ &
  $\set{0.9, 0.999}$ \\
neurons per layer &
  $\set{64, 128, 256, 512}$ \\
number of hidden layers &
  $\set{1, 2, 3}$ \\
activation &
  $\set{\text{ReLU}, \text{Leaky ReLU}, \text{tanh}, \frac{\text{softplus}\fun{2x + 2}}{2} - 1 \, \textit{(smooth ELU)}}$ \\
$\beta_{\steadystateregularizer{\policy}}$ &
  $\set{10, 25, 50, 75, 100}$ \\
$\beta_{\localtransitionloss{\stationary{\policy}}}$ &
  $\set{10, 25, 50, 75, 100}$ \\
$\ncritic$ &
  $\set{5, 10, 15, 20}$ \\
$\delta$ &
  $\set{10, 20}$ \\
use $\varepsilon$-mimic (cf. \citealt{DBLP:journals/corr/abs-2112-09655}) &
  $\set{\text{True}, \text{False}}$ (if True, a decay rate of $10^{-5}$ is used) \\
$\temperature_{\scriptscriptstyle\latentprobtransitions_{\decoderparameter}}$ &
  $\set{0.1, \nicefrac{1}{3}, \nicefrac{1}{2}, \nicefrac{2}{3}, \nicefrac{3}{5}, 0.99}$ \\
$\temperature_{\scriptscriptstyle\embed_{\encoderparameter}}$ &
  $\set{0.1, \nicefrac{1}{3}, \nicefrac{1}{2}, \nicefrac{2}{3}, \nicefrac{3}{5}, 0.99}$ \\
$\temperature_{\scriptscriptstyle\latentpolicy_{\decoderparameter}}$ &
  $\set{\nicefrac{1}{\left| \latentactions \right| - 1}, \nicefrac{1}{\fun{\left| \latentactions \right| - 1} \cdot 1.5}}$ \\
$\temperature_{\scriptscriptstyle \actionencoder}$ &
 $\set{\nicefrac{1}{\left| \latentactions \right| - 1}, \nicefrac{1}{\fun{\left| \latentactions \right| - 1} \cdot 1.5}}$ \\ \bottomrule
\end{tabular}%
}
\end{table}

\begin{table}
\centering
\caption{Final hyperparameters used to evaluate \waemdps in Sect.~\ref{sec:experiments}}
\label{table:hyperparameters}
\resizebox{\columnwidth}{!}{%
\begin{tabular}{llllll}
\toprule
{} &          CartPole &       MountainCar &           Acrobot &       LunarLander &          Pendulum \\
\midrule
$\log_2 \left| \latentstates \right|$                                         &  9 &  10 &  13 &  14 &  13 \\
$\left| \latentactions \right|$                                               &  2 $=\left|\actions\right|$ & $2=\left|\actions\right|$  & $3=\left|\actions\right|$ &  $3$ &  $3$ \\
activation                                                                    &  tanh &  ReLU &  Leaky Relu &  ReLU &  ReLU \\
layers                                                                        &  $[64, 64, 64]$ &  $[512, 512]$ &  $[512, 512]$ &  $[256]$ &  $[256, 256, 256]$ \\
$\textsc{Adam}_{\alpha}$ (minimizer)                                          & $0.0002$ & $0.0001$ & $0.0002$ & $0.0003$ & $0.0003$ \\
$\textsc{Adam}_{\alpha}$ (maximizer)                                          & $0.0002$ & $0.0001$ & $0.0001$ & $0.0003$ & $0.0003$ \\
$\textsc{Adam}_{\beta_1}$                                                     & $0.5$ & $0$ & $0$ & $0$ & $0.5$ \\
$\textsc{Adam}_{\beta_2}$                                                     & $0.999$ & $0.999$ & $0.999$ & $0.999$ & $0.999$ \\
$\beta_{\localtransitionloss{\stationary{\policy}}}$                          & $10$ & $25$ & $10$ & $50$ & $25$ \\
$\beta_{\steadystateregularizer{\policy}}$                                    & $75$ & $100$ & $10$ & $100$ & $25$ \\
$\ncritic$                                          &  5 &  20 &  20 &  15 &  5 \\
$\delta$                                                                      & $20$ & $10$ & $20$ & $20$ & $10$ \\
$\varepsilon$                                                                 & $0$ & $0$ & $0$ & $0$ & $0.5$ \\
$\temperature_{\scriptscriptstyle\latentprobtransitions_{\decoderparameter}}$ & $\nicefrac{1}{3}$ & $\nicefrac{1}{3}$ & $0.1$ & $0.75$ & $\nicefrac{2}{3}$ \\
$\temperature_{\scriptscriptstyle\embed_{\encoderparameter}}$                 & $\nicefrac{1}{3}$ & $\nicefrac{2}{3}$ & $\nicefrac{2}{3}$ & $\nicefrac{2}{3}$ & $\nicefrac{2}{3}$ \\
$\temperature_{\scriptscriptstyle\latentpolicy_{\decoderparameter}}$          & $\nicefrac{2}{3}$ & $\nicefrac{1}{3}$ & $0.5$ & $0.5$ & $0.5$ \\
$\temperature_{\scriptscriptstyle \actionencoder}$                            & / & / & / & $\nicefrac{1}{3}$ & $\nicefrac{1}{3}$ \\
\bottomrule
\end{tabular}
}
\end{table}

\smallparagraph{Labeling functions.}~We used the same labeling functions as those described by \citet{DBLP:journals/corr/abs-2112-09655}.
For completeness, we recall the labeling function used for each environment in Table~\ref{appendix:table:labels}.

\smallparagraph{Time to failure properties.}~%
Based on the labeling described in Table~\ref{appendix:table:labels}, we formally detail the time to failure properties checked in Sect.~\ref{sec:experiments} whose results are listed in Table~\ref{table:evaluation} for each environment.
Let $\labelset{Reset} = \set{\mathsf{reset}} = \tuple{0, \dots, 1}$ (we assume here that the last bit indicates whether the current state is a reset state or not)
and define $\state \models \labelset{L}_1 \wedge \labelset{L}_2$ iff $s \models \labelset{L}_1$ and $s \models \labelset{L}_2$ for any $\state \in \states$, then
\begin{itemize}
    \item \emph{CartPole}: $\varphi = \until{\neg \labelset{Reset}}{\labelset{Unsafe}}$, where $\labelset{Unsafe} = \tuple{1, 1, 0}$
    \item \emph{MountainCar}: $\varphi = \until{\neg \labelset{Goal}}{\labelset{Reset}}$, where $\labelset{Goal} = \tuple{1, 0, 0, 0}$
    \item \emph{Acrobot}: $\varphi = \until{\neg \labelset{Goal}}{\labelset{Reset}}$, where $\labelset{Goal} = \tuple{1, 0, \dots, 0}$
    \item \emph{LunarLander}: $\varphi = \until{\neg \labelset{SafeLanding}}{\labelset{Reset}}$, where $\labelset{SafeLanding} = \labelset{GroundContact} \wedge \labelset{MotorsOff}$, $\labelset{GroundContact} = \tuple{0, 1, 0, 0, 0, 0, 0}$, and $ \labelset{MotorsOff} = \tuple{0, 0, 0, 0, 0, 1, 0}$
    \item \emph{Pendulum}: $\varphi = \eventually\fun{\neg\mathsf{Safe} \wedge \ltlnext \mathsf{Reset}}$,
    where $\labelset{Safe} = \tuple{1, 0, 0, 0, 0}$,
    $\eventually \labelset{T} = \neg \until{\emptyset}{\labelset{T}}$, and
    $\state_{i} \models \ltlnext \labelset{T}$ iff $\state_{i + 1} \models \labelset{T}$,
    for any $\labelset{T} \subseteq \atomicprops, {{\state}_{i : \infty}, {\action}_{i : \infty} }\in \inftrajectories{\mdp}$.
    Intuitively, $\varphi$ denotes the event of ending an episode in an unsafe state, just before resetting the environment, which means that either the agent never reached the safe region or it reached and left it at some point. 
    Formally, $\varphi =
	\set{\seq{\state}{\infty}, \seq{\action}{\infty} \, | \, \exists i \in \N, \state_i \not\models \labelset{Safe}\, \wedge \, \state_{i + 1} \models \labelset{Reset} } \subseteq \inftrajectories{\mdp}$.
\end{itemize}

\begin{table}
\centering
\resizebox{\columnwidth}{!}{%
\begin{tabular}{lllll}
\toprule
Environment &
  $\states \subseteq$ &
  Description, for $\vect{\state} \in \states$ &
  $\labels\fun{\vect{s}} = \tuple{p_1, \dots, p_n, p_{\mathsf{reset}}}$ \\
 \midrule
CartPole &
  $\R^4$ &
  \begin{tabular}[c]{@{}l@{}}\tabitem $\vect{\state}_1$: cart position\\ \tabitem $\vect{\state}_2$: cart velocity\\ \tabitem $\vect{\state}_3$: pole angle (rad)\\ \tabitem $\vect{\state}_4$: pole velocity at tip\end{tabular} &
  \begin{tabular}[c]{@{}l@{}}\tabitem $p_1 = \condition{\vect{\state}_1 \geq 1.5}$: unsafe cart position\\
  \tabitem $p_2 = \condition{\vect{\state}_3 \geq  0.15} $: unsafe pole angle\end{tabular} \\ \midrule
MountainCar &
  $\R^2$ &
  \begin{tabular}[c]{@{}l@{}}\tabitem $\vect{\state}_1$: position\\ \tabitem $\vect{\state}_2$: velocity\end{tabular} &
  \begin{tabular}[c]{@{}l@{}}\tabitem $p_1 = \condition{\vect{\state}_1 > 1.5}$: target position\\ \tabitem $p_2 = \condition{\vect{\state}_1 \geq \nicefrac{-1}{2}}$: right-hand side of the mountain\\ \tabitem $p_3 = \condition{\vect{\state}_2 \geq 0}$: car going forward\end{tabular} \\ \midrule
Acrobot &
  $\R^6$ &
  \begin{tabular}[c]{@{}l@{}}Let $\theta_1, \theta_2 \in \mathopen[0, 2\pi \mathclose]$ be the angles\\ of the two rotational joints,\\ \tabitem $\vect{\state}_1 = \cos\fun{\theta_1}$\\ \tabitem $\vect{\state}_2 = \sin\fun{\theta_1}$\\ \tabitem $\vect{\state}_3 = \cos\fun{\theta_2}$\\ \tabitem $\vect{\state}_4 = \sin\fun{\theta_2}$\\ \tabitem $\vect{\state}_5$: angular velocity 1\\ \tabitem $\vect{\state}_6$: angular velocity 2\end{tabular} &
  \begin{tabular}[c]{@{}l@{}}\tabitem $p_1= \condition{-\vect{\state}_1 -\vect{\state}_3 \cdot \vect{\state}_1 + \vect{\state}_4 \cdot \vect{\state}_2 > 1}$: RL agent target \\\tabitem $p_2 = \condition{\vect{\state}_1 \geq 0}$: $\theta_1 \in [0, \nicefrac{\pi}{2}] \cup [\nicefrac{3\pi}{2}, 2\pi]$ \\
  \tabitem $p_3 = \condition{\vect{\state}_2 \geq 0} $: $\theta_1 \in [0, \pi]$\\
  \tabitem $p_4 = \condition{\vect{\state}_3 \geq 0}$: $\theta_2 \in [0, \nicefrac{\pi}{2}] \cup [\nicefrac{3\pi}{2}, 2\pi]$\\
  \tabitem $p_5 = \condition{\vect{\state}_4 \geq 0}$: $\theta_2 \in [0, \pi]$\\
  \tabitem $p_6 = \condition{\vect{\state}_5 \geq 0}$: positive angular velocity (1)\\
  \tabitem $p_7 = \condition{\vect{\state}_6 \geq 0}$: positive angular velocity (2)\\
  \end{tabular} \\\midrule
Pendulum &
  $\R^3$ &
  \begin{tabular}[c]{@{}l@{}}Let $\theta \in \mathopen[0, 2\pi \mathclose]$ be the joint angle\\ \tabitem $\vect{\state}_1 = \cos\fun{\theta}$\\ \tabitem $\vect{\state}_2 = \sin\fun{\theta}$\\ \tabitem $\vect{\state}_3$: angular velocity\end{tabular} &
    \begin{tabular}[c]{@{}l@{}}
    \tabitem $p_1 = \condition{\vect{\state}_1 \geq \cos\fun{\nicefrac{\pi}{3}}}$: safe joint angle \\
    \tabitem $p_2 = \condition{\vect{\state}_1 \geq 0}$: $\theta \in [0, \nicefrac{\pi}{2}] \cup [\nicefrac{3\pi}{2}, 2\pi]$ \\
  \tabitem $p_3 = \condition{\vect{\state}_2 \geq 0}$: $\theta \in [0, \pi]$\\
  \tabitem $p_4 = \condition{\vect{\state}_3 \geq 0}$: positive angular velocity\\
  \end{tabular} \\\midrule
LunarLander &
  $\R^8$ &
  \begin{tabular}[c]{@{}l@{}}\tabitem $\vect{\state}_1$: horizontal coordinates\\ \tabitem $\vect{\state}_2$: vertical coordinates\\ \tabitem $\vect{\state}_3$: horizontal speed\\ \tabitem $\vect{\state}_4$: vertical speed\\ \tabitem $\vect{\state}_5$: ship angle\\ \tabitem $\vect{\state}_6$: angular speed\\ \tabitem $\vect{\state}_7$: left leg contact\\ \tabitem $\vect{\state}_8$: right leg contact\end{tabular} &
  \begin{tabular}[c]{@{}l@{}}
  \tabitem $p_1$: unsafe angle\\
  \tabitem $p_2$: leg ground contact\\
  \tabitem $p_3$: lands rapidly\\
  \tabitem $p_4$: left inclination\\
  \tabitem $p_5$: right inclination\\
  \tabitem $p_6$: motors shut down 
  \end{tabular}\\
\bottomrule
\end{tabular}
}
\caption{
Labeling functions for the OpenAI environments considered in our experiments \citep{DBLP:journals/corr/abs-2112-09655}.
We provide a short description of the state space and the meaning of each atomic proposition.
Recall that labels are binary encoded, for
$n = |\atomicprops| - 1$ (one bit is reserved for $\mathsf{reset}$) and $p_{\mathsf{reset}} = 1$ iff $\vect{s}$ is a reset state (cf. Appendix~\ref{appendix:experiments:stationary-distribution}). 
}
\label{appendix:table:labels}
\end{table}

\section{On the curse of Variational Modeling}\label{appendix:posterior-collapse}

\emph{Posterior collapse} is a well known issue occurring in variational models (see, e.g., \citealtAR{DBLP:conf/icml/AlemiPFDS018,DBLP:conf/iclr/TolstikhinBGS18,DBLP:conf/iclr/HeSNB19,DBLP:conf/icml/DongS0B20}) which intuitively results in a degenerate local optimum where the model learns to ignore the latent space and use only the reconstruction functions (i.e., the decoding distribution) to optimize the objective.
VAE-MDPs are no exception, as pointed out in the original paper (\citealp[Section 4.3 and Appendix C.2]{DBLP:journals/corr/abs-2112-09655}).

Formally, VAE- and WAE-MDPs optimize their objective by minimizing two losses: a \emph{reconstruction cost} plus a \emph{regularizer term} which penalizes a discrepancy between the encoding distribution and the dynamics of the latent space model. 
In VAE-MDPs, the former corresponds to the the \emph{distortion}, and the later to the \emph{rate} of the variational model (further details are given in \citealt{DBLP:conf/icml/AlemiPFDS018,DBLP:journals/corr/abs-2112-09655}), while in our WAE-MDPs, the former corresponds to the raw transition distance and the later to both the steady-state and transition regularizers.
Notably, the rate minimization of VAE-MDPs involves regularizing a \emph{stochastic} {embedding function} $\embed_{\encoderparameter}\fun{\sampledot \mid \state}$ \emph{point-wise}, i.e., for all different input states $\state \in \states$ drawn from the interaction with the original environment.
In contrast, the latent space regularization of the WAE-MDP involves the marginal embedding distribution $\encodersymbol_{\encoderparameter}$ where the embedding function $\embed_{\encoderparameter}$ is not required to be stochastic.
\cite{DBLP:conf/icml/AlemiPFDS018} showed that \emph{posterior collapse occurs in VAEs when the rate of the variational model is close to zero,} leading to low-quality representation.

\begin{figure*}
    \begin{subfigure}{.6\textwidth}
        \includegraphics[width=.495\textwidth]{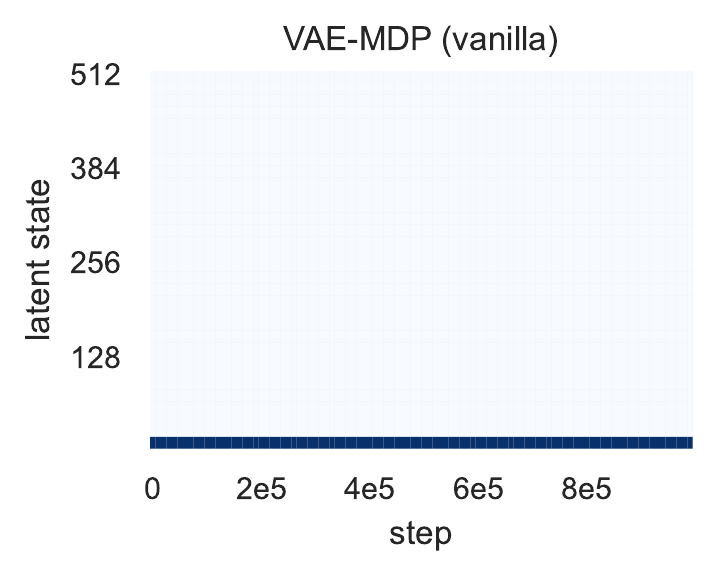}
        \hfill
        \includegraphics[width=.495\textwidth]{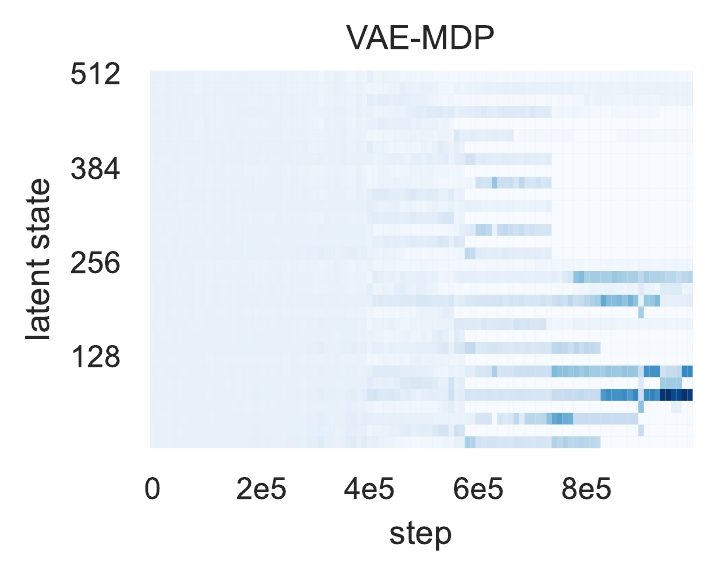}
        \caption{Latent space distribution along training steps. The intensity of the blue hue corresponds to the frequency of latent states produced from $\embed_{\encoderparameter}$ during training.
        The vanilla model collapses to a single state.}
        \label{subfig:mode-collapse-state-frequency}
    \end{subfigure}
    \begin{subfigure}{.4\textwidth}
        \includegraphics[width=\textwidth]{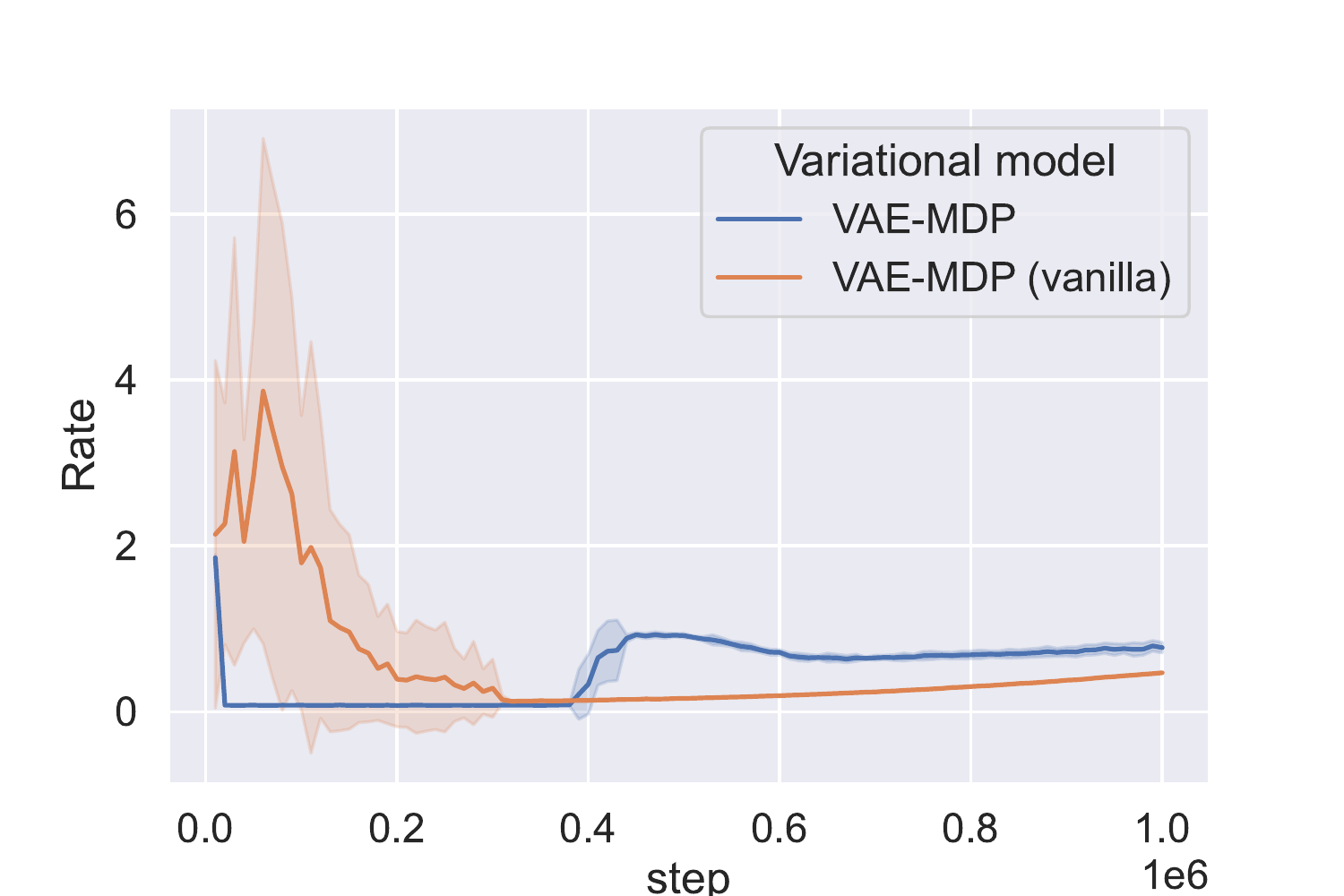}
        \caption{Rate of the variational model.}
        \label{subfig:rate}
    \end{subfigure}
    \begin{subfigure}{0.32\textwidth}
        \includegraphics[width=\textwidth]{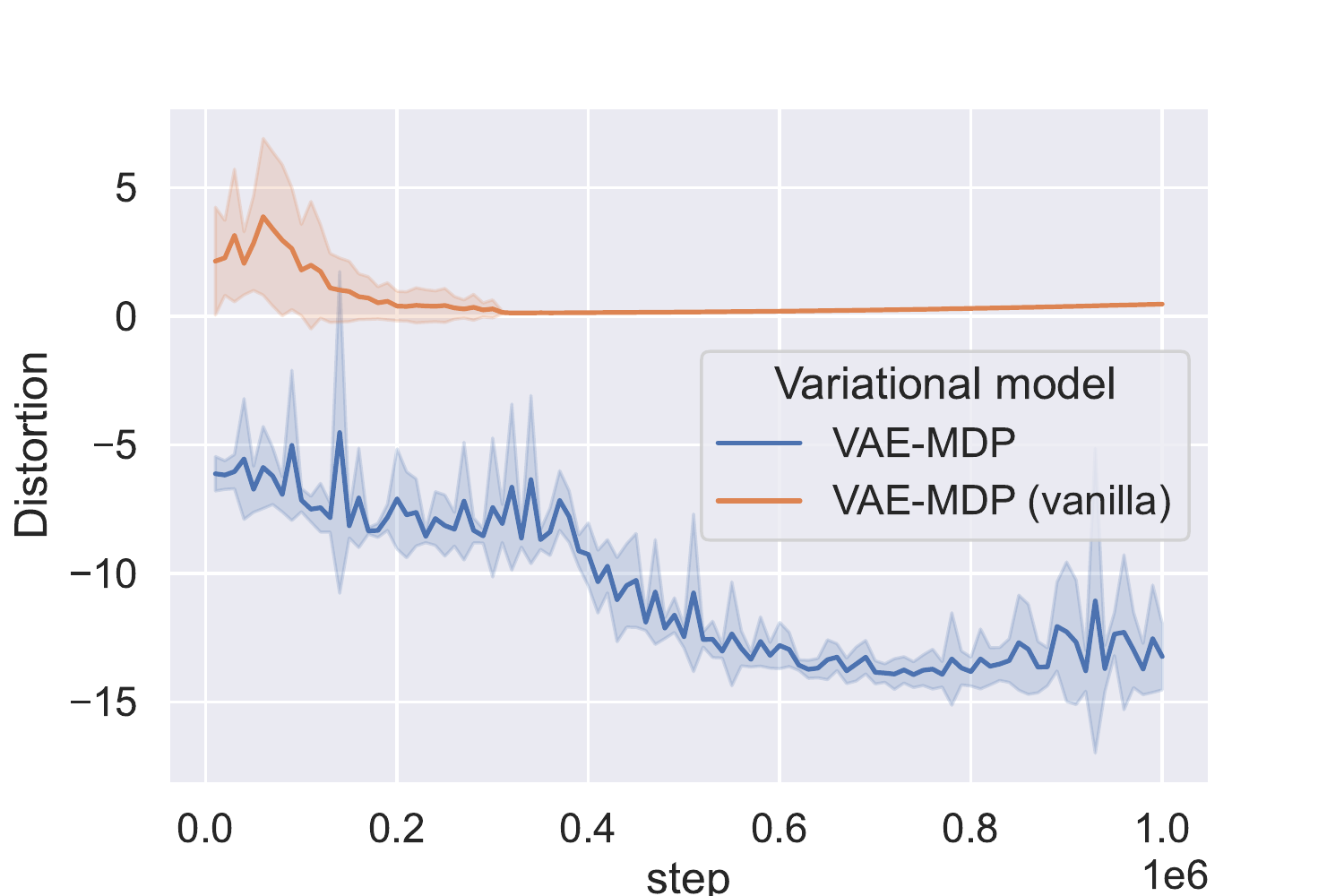}
        \caption{Distortion of the variational model.\\ $\,$\\ $\;$}
        \label{subfig:distortion}
    \end{subfigure}
    \hspace{.01333\textwidth}
    \begin{subfigure}{0.32\textwidth}
        \includegraphics[width=\textwidth]{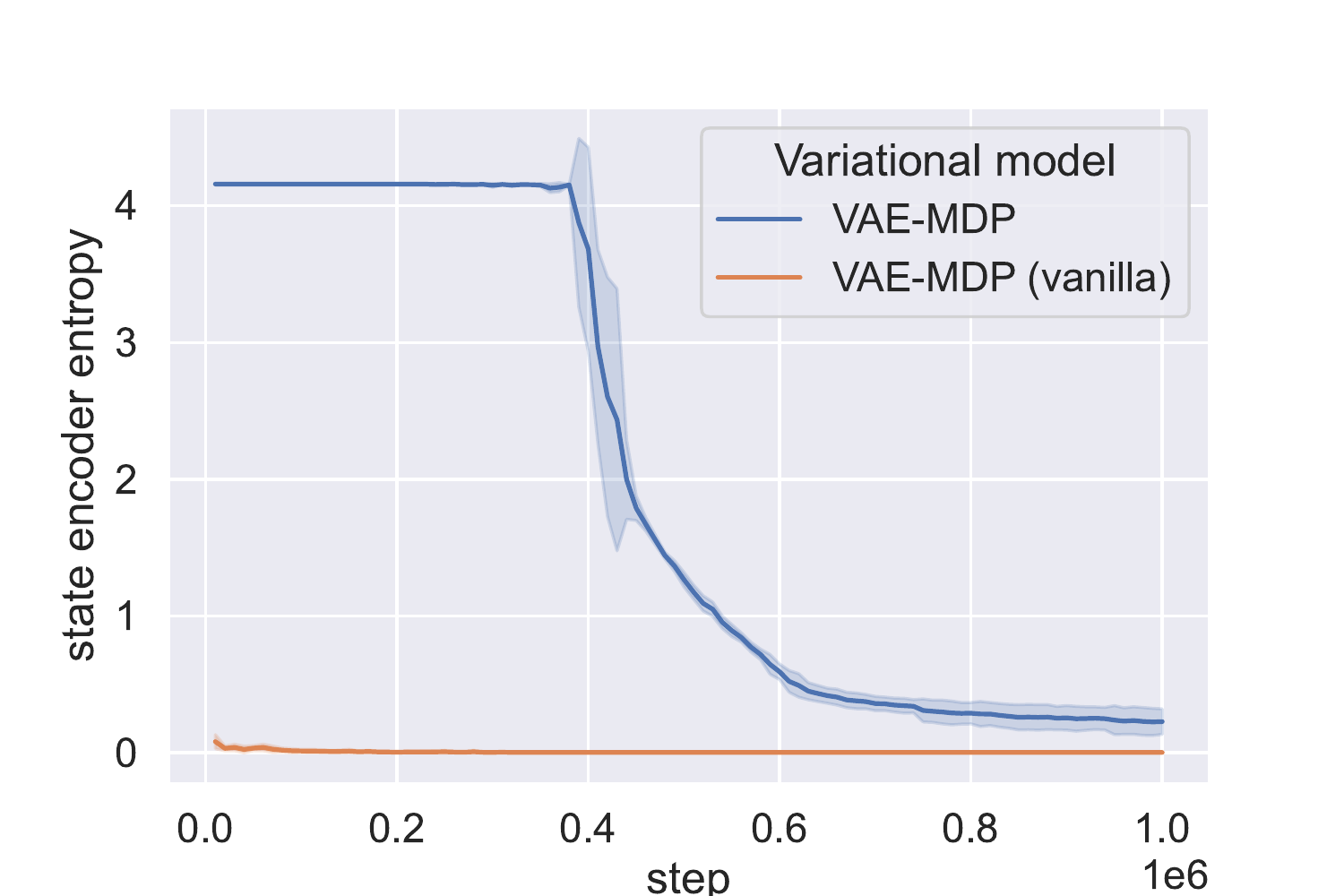}
        \caption{Average point-wise entropy of $\embed_{\encoderparameter}\fun{\sampledot \mid \state}$, for $\state \in \states$ drawn from the interaction with the original environment.}
        \label{subfig:encoder-entropy}
    \end{subfigure}
    \hspace{.01333\textwidth}
    \begin{subfigure}{0.31\textwidth}
        \includegraphics[width=\textwidth]{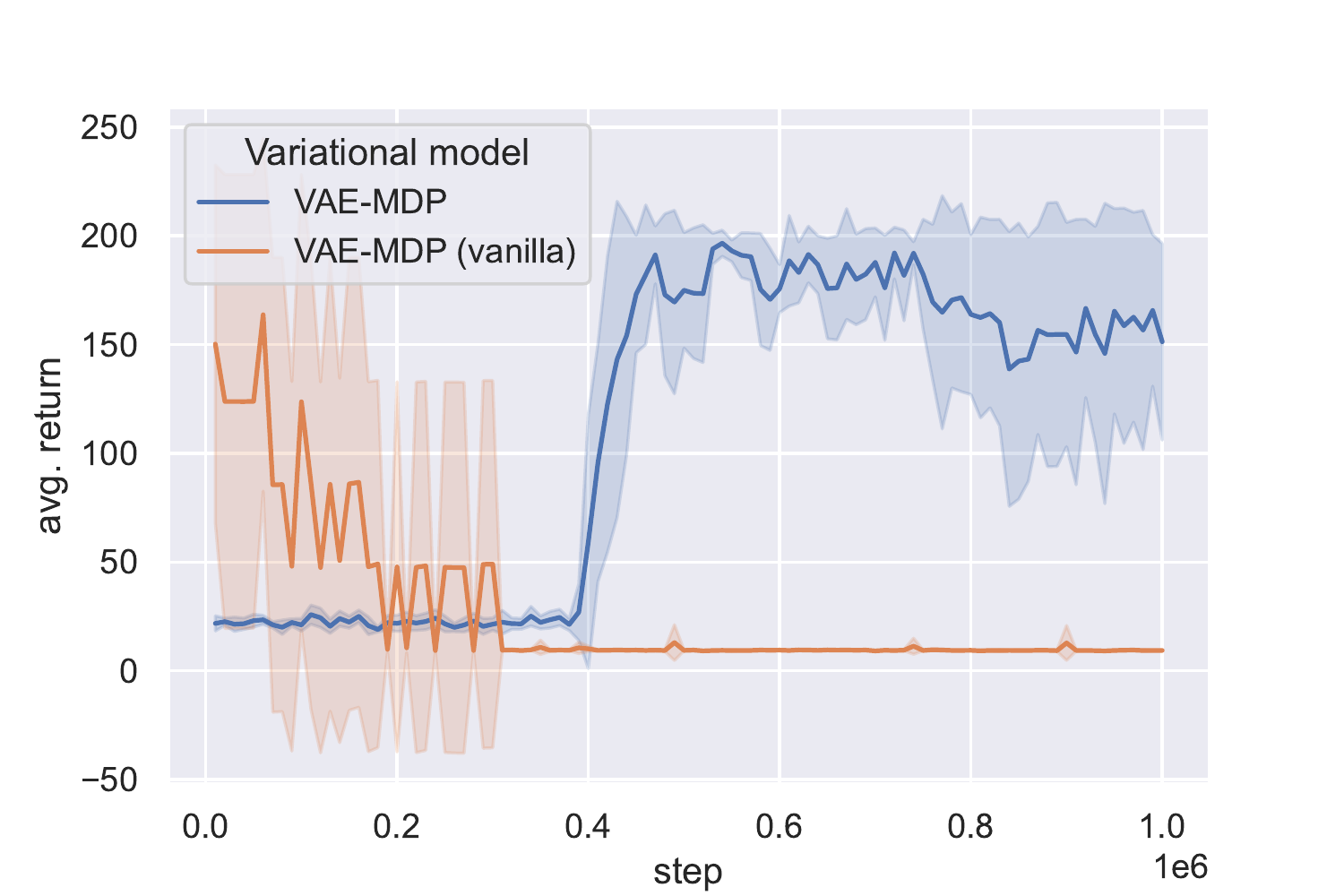}
        \caption{Performance of the resulting distilled policy $\latentpolicy_{\decoderparameter}$ when deployed in the original environment (averaged over 30 episodes).}
        \label{subfig:vae-distillation}
    \end{subfigure}
    \caption{
        Comparison of the VAE-MDP in the CartPole environment (i) when the distortion and the rate are minimized as is (\emph{vanilla model}) and (ii) when it makes use of annealing schemes, entropy regularization, and prioritized experience replay to avoid posterior collapse (cf.~\citealt{DBLP:journals/corr/abs-2112-09655}).
        While the former clearly fails to learn a useful latent representation, the later does so meticulously and smoothly in two distinguishable phases:
        first, $\embed_{\encoderparameter}$ focuses on fairly distributing the latent space, setting up the stage to the concrete optimization occurring from step $4 \cdot 10^5$, where the entropy of $\embed_{\encoderparameter}$ is lowered, which allows to get the rate of the variational model away from zero.
        Five instances of the models are trained with different random seeds, with the same hyperparameters than in Sect.~\ref{sec:experiments}.
    }
    \label{fig:vae-mode-collapse}
\end{figure*}

\smallparagraph{Posterior collapse in VAE-MDPs.}~We illustrate the sensitivity of VAE-MDPs to the posterior collapse problem in Fig.~\ref{fig:vae-mode-collapse}, through the CartPole environment\footnote{
    In fact, the phenomenon of collapsing to few state occurs for all the environments considered in this paper when their prioritized experience replay is not used, as illustrated in \citealp[Appendix~C.2]{DBLP:journals/corr/abs-2112-09655}.
}: minimizing the distortion and the rate as is yields an embedding function which maps deterministically every input state to the same \emph{sink} latent state (cf. Fig.~\ref{subfig:mode-collapse-state-frequency}).
Precisely, there is a latent state $\latentstate \in \latentstates$ so that $\embed_{\encoderparameter}\fun{\latentstate \mid \state} \approx 1$ and $\latentprobtransitions_{\decoderparameter}\fun{\latentstate \mid \latentstate, \latentaction} \approx 1$ whatever the state $\state \in \states$ and action $\latentaction \in \latentactions$.
This is a form of posterior collapse, the resulting rate quickly drops to zero (cf. Fig~\ref{subfig:rate}), and the resulting latent representation yields no information at all.
This phenomenon is handled in VAE-MDPs by using (i) prioritized replay buffers that allow to focus on inputs that led to bad representation, and (ii) modifying the objective function for learning the latent space model --- the so-called evidence lower bound \citepAR{DBLP:journals/jmlr/HoffmanBWP13,DBLP:journals/corr/KingmaW13}, or \textsc{ELBO} for short --- and set up annealing schemes to eventually recover the \textsc{ELBO} at the end of the training process.
Consequently, the resulting learning procedure focuses primarily on fairly distributing the latent space, to avoid it to collapse to a single latent state,
to the detriment of learning the dynamics of the environment and the distillation of the RL policy.
Then, the annealing scheme allows to make the model learn to finally smoothly use the latent space to maximize the \textsc{ELBO}, and achieve consequently a lower distortion at the ``price'' of a higher rate.  

\smallparagraph{Impact of the resulting learning procedure.}~The aforementioned annealing process, used to avoid that every state collapses to the same representation, possibly induces a high entropy embedding function (Fig.~\ref{subfig:encoder-entropy}), which further complicates the learning of the model dynamics and the distillation in the first stage of the training process.
In fact, in this particular case, one can observe that the entropy reaches its maximal value, which yields a fully random state embedding function.
Recall that the VAE-MDP latent space is learned through \emph{independent} Bernoulli distributions.
Fig.~\ref{subfig:encoder-entropy} reports values centered around $4.188$ in the first training phase, which corresponds to the entropy of the state embedding function when $\embed_{\encoderparameter}\fun{\sampledot \mid \state}$ is uniformly distributed over $\latentstates$ for any state $\state \in \states$: $H\fun{\embed_{\encoderparameter}\fun{\sampledot \mid \state}} = \sum_{i=0}^{\log_2 \left| \latentstates \right| - \left| \atomicprops \right| = 6} - p_i\log~p_i - \fun{1 - p_i} \log\fun{1 - p_i} = 4.188 $, where $p_i = \nicefrac{1}{2}$ for all $i$.
The rate (Fig.~\ref{subfig:rate}) drops to zero since the divergence pulls the latent dynamics towards this high entropy (yet another form of posterior collapse), which hinders the latent space model to learn a useful representation.
However, the annealing scheme increases the rate importance along training steps, which enables the optimization to eventually leave this local optimum (here around $4\cdot10^5$ training steps).
This allows the learning procedure to leave the zero-rate spot, reduce the distortion (Fig.~\ref{subfig:distortion}), and finally distill the original policy (Fig.~\ref{subfig:vae-distillation}).

As a result, the whole engineering required to mitigate posterior collapse slows down the training procedure.
This phenomenon is reflected in Fig.~\ref{fig:eval-plots}: VAE-MDPs need several steps to stabilize and set up the stage to the concrete optimization, whereas WAE-MDPs have no such requirements since they naturally do not suffer from collapsing issues (cf. Fig.~\ref{fig:histograms}), and are consequently faster to train.

\smallparagraph{Lack of representation guarantees.}~On the theoretical side, since VAE-MDPs are optimized via the ELBO and the local losses via the related variational proxies, VAE-MDPs \emph{do not leverage the representation quality guarantees} induced by local losses (Eq.~\ref{eq:bidistance-bound}) during the learning procedure (as explicitly pointed out by \citealp[Sect.~4.1]{DBLP:journals/corr/abs-2112-09655}.): in contrast to WAE-MDPs, when two original states are embedded to the same latent, abstract state, the former are not guaranteed to be bisimilarly close (i.e., the agent is not guaranteed to behave the same way from those two states by executing the policy), meaning those proxies do not prevent original states having distant values collapsing together to the same latent representation.

    \nomenclature[m0]{$\mdp = \mdptuple$}{MDP $\mdp$ with state space $\states$, action space $\actions$, transition function $\probtransitions$, labeling function $\labels$, atomic proposition space $\atomicprops$, and initial state $\sinit$.}%
    \nomenclature[msinit]{$\mdp_\state$}{MDP obtained by replacing the initial state of $\mdp$ by $\state \in \states$}
    \nomenclature[mtrajectory]{$\trajectory = \trajectorytuple{\state}{\action}{T}$}{Trajectory}
    \nomenclature[mtrajectories]{$\inftrajectories{\mdp}$}{Set of infinite trajectories of $\mdp$}
    \nomenclature[mpolicy]{$\policy$}{Memoryless policy $\policy \colon \states \to \distributions{\actions} $}
    \nomenclature[pdist]{$\distributions{\measurableset}$}{Set of measures over a complete, separable metric space $\measurableset$}
    \nomenclature{$\condition{[\textit{cond}]}$}{indicator function: $1$ if the statement [\textit{cond}] is true, and $0$ otherwise}
    \nomenclature{$f_{\decoderparameter}$}{A function $f_{\decoderparameter} \colon \measurableset \to \R$ modeled by a neural network, parameterized by $\decoderparameter$, where $\measurableset$ is any measurable set}
    \nomenclature[mprobmeasure]{$\Prob_{\policy}^{\mdp}$}{Unique probability measure induced by the policy $\policy$ in $\mdp$ on the Borel $\sigma$-algebra over measurable subsets of $\inftrajectories{\mdp}$}
    \nomenclature[mpolicies]{$\mpolicies{\mdp}$}{Set of memoryless policies of $\mdp$}
    \nomenclature{}{}
    \nomenclature[mstationary]{$\stationary{\policy}$}{Stationary distribution of $\mdp$ induced by the policy $\policy$}
    \nomenclature[mdiscount]{$\discount$}{Discount factor in $\mathopen[0, 1\mathclose]$}
    \nomenclature[mvalues]{}{}
    \nomenclature[mstate]{$\state$}{State in $\states$}
    \nomenclature[maction]{$\action$}{Action in $\actions$}
    \nomenclature[mlimitingdistr]{$\stationary{\policy}^{t}$}{Limiting distribution of the MDP defined as  $\stationary{\policy}^{t}\fun{\state' \mid \state} = \Prob_{\policy}^{\mdp_{\state}}\fun{\set{\seq{\state}{\infty}, \seq{\action}{\infty} \mid \state_t = \state'}}$, for any source state $\state \in \states$}
    \nomenclature[mvalues]{$\valuessymbol{\policy}{\sampledot}$}{Value function for the policy $\policy$}
    \nomenclature[mreachability]{$\until{\labelset{C}}{\labelset{T}}$}{Constrained reachability event}
    \nomenclature[l1]{$\latentmdp = \latentmdptuple$}{Latent MDP with state space $\latentstates$, action space $\latentactions$, reward function $\latentrewards$, labeling function $\latentlabels$, atomic proposition space $\atomicprops$, and initial state $\zinit$.}
    \nomenclature[lstate]{$\latentstate$}{Latent state in $\latentstates$}
    \nomenclature[laction]{$\latentaction$}{Latent action in $\latentactions$}
    \nomenclature[lembedding]{$\embed$}{State embedding function, from $\states$ to $\latentstates$}
    \nomenclature[lembeddingaction]{$\embeda$}{Action embedding function, from $\latentstates \times \latentactions$ to $\actions$}
    \nomenclature[l2latentspacemodel]{$\tuple{\latentmdp, \embed, \embeda}$}{Latent space model of $\mdp$}
    \nomenclature[ldistance]{$\distance_{\latentstates}$}{Distance metric over $\latentstates$}
    \nomenclature[lantentpolicy]{$\latentpolicy$}{Latent policy $\latentpolicy\colon \latentstates \to \actions$; can be executed in $\mdp$ via $\embed$: $\latentpolicy\fun{\sampledot \mid \embed\fun{\state}}$}
    \nomenclature[localtransitionloss]{$\localtransitionloss{\stationary{}}$}{Local transition loss under distribution $\stationary{}$}
    \nomenclature[localrewardloss]{$\localrewardloss{\stationary{}}$}{Local reward loss under distribution $\stationary{}$}
    \nomenclature[lembedprob]{$\embed\probtransitions$}{Distribution of drawing $\state' \sim \probtransitions\fun{\sampledot \mid \state, \action}$, then embedding $\latentstate' = \embed\fun{\state'}$, for any state $\state \in \states$ and action $\action \in \actions$}
    \nomenclature[mbidistance]{$\bidistance_{\policy}$}{Bisimulation pseudometric}
    \nomenclature[pdiscrepancy]{$D$}{Discrepancy measure; $D\fun{P, Q}$ is the discrepancy between distributions $P, Q \in \distributions{\measurableset}$}
    \nomenclature[pwasserstein]{$\wassersteinsymbol{\distance}$}{Wasserstein distance w.r.t. the metric $\distance$; $\wassersteindist{\distance}{P}{Q}$ is the Wasserstein distance between distributions $P, Q \in \distributions{\measurableset}$}
    \nomenclature{$\Lipschf{\distance}$}{Set of $1$-Lipschitz functions w.r.t. the distance metric $\distance$}
    \nomenclature[wbehavioral]{$\stationary{\decoderparameter}$}{Behavioral model: distribution over $\states \times \actions \times \images{\rewards} \times \states$}
    \nomenclature[mtracedistance]{$\tracedistance$}{Raw transition distance, i.e., metric over $\states \times \actions \times \images{\rewards} \times \states$}
    \nomenclature[wstationary]{$\latentstationaryprior$}{Stationary distribution of the latent model $\latentmdp_{\decoderparameter}$, parameterized by $\decoderparameter$}
    \nomenclature[wmarginalencoder]{$Q_{\encoderparameter}$}{Marginal encoding distribution over $\latentstates \times \latentactions \times \latentstates: \expectedsymbol{\state, \action, \state' \sim \stationary{\policy}} \embed_{\encoderparameter}\fun{\sampledot \mid \state, \action, \state'}$}
    \nomenclature[wembeddinga]{$\embed^{\actions}_{\encoderparameter}$}{Action encoder mapping $\latentstates \times \actions$ to $\distributions{\latentactions}$}
    \nomenclature[wtransition]{$\originaltolatentstationary{}$}{Distribution of drawing state-action pairs from interacting with $\mdp$, embedding them to the latent spaces, and finally letting them transition to their successor state in $\latentmdp_{\decoderparameter}$, in $\distributions{\latentstates \times \latentactions \times \latentstates}$}
    \nomenclature[mdistancestate]{$\distance_{\states}$}{Metric over the state space}
    \nomenclature[mdistanceaction]{$\distance_{\actions}$}{Metric over the action space}
    \nomenclature[mdistancerewards]{$\distance_{\rewards}$}{Metric over $\images{\rewards}$}
    \nomenclature[wgenerate]{$\generative_{\decoderparameter}$}{State-wise decoder, from $\latentstates$ to $\states$}
    \nomenclature[wdirac]{$G_{\decoderparameter}$}{Mapping $\tuple{\latentstate, \latentaction, \latentstate'} \mapsto \tuple{\generative_{\decoderparameter}\fun{\latentstate}, \embeda_{\decoderparameter}\fun{\latentstate, \latentaction}, \latentrewards_{\decoderparameter}\fun{\latentstate, \latentaction}, \generative_{\decoderparameter}\fun{\latentstate'}}$}
    \nomenclature[wsteadystate]{\steadystateregularizer{\latentpolicy}}{Steady-state regularizer}
    \nomenclature[w]{}{}
    \nomenclature[wtransitionlossnet]{\transitionlossnetwork}{Transition Lipschitz network}
    \nomenclature[wsteadystatenet]{\steadystatenetwork}{Steady-state Lipschitz network}
    \nomenclature{$\sigmoid$}{Sigmoid function, with $\sigmoid\fun{x} = \nicefrac{1}{1 + \exp\fun{-x}}$}
    \nomenclature[lvalue]{$\latentvaluessymbol{\latentpolicy}{\sampledot}$}{Latent value function}
    \nomenclature[lpolicies]{$\latentpolicies$}{Set of (memoryless) latent policies}
    \nomenclature[wtemperarue]{$\temperature$}{Temperature parameter}
    \nomenclature[pLogistic]{$\logistic{\mu}{s}$}{Logistic distribution with location parameter $\mu$ and scale parameter $s$}
    %
\printnomenclature
%
%
\bibliographyAR{references}
\bibliographystyleAR{iclr2023_conference}

\end{document}